\theoremstyle{plain}
\newtheorem{theorem}{Theorem}[section]
\newtheorem{lemma}[theorem]{Lemma}
\newtheorem{corollary}[theorem]{Corollary}
\theoremstyle{definition}
\newtheorem{definition}[theorem]{Definition}
\newtheorem{assumption}[theorem]{Assumption}
\theoremstyle{remark}
\numberwithin{equation}{section}
\def\R{\mathbb{R}}
\def\E{\mathbb{E}}
\def\P{\mathbb{P}}
\def\tr{\mathrm{tr}}
\def\rank{\mathsf{rank}}
\def\diag{\mathrm{diag}}
\def\cF{\mathcal{F}}
\def\cH{\mathcal{H}}
\def\cN{\mathcal{N}}
\def\cS{\mathcal{S}}
\def\cZ{\mathcal{Z}}
\renewcommand*\env@matrix[1][\arraystretch]{%
  \edef\arraystretch{#1}%
  \hskip -\arraycolsep
  \let\@ifnextchar\new@ifnextchar
  \array{*\c@MaxMatrixCols c}}
\def\query{\mathsf{query}}
\def\l{\left}
\def\r{\right}
\newcommand{\normal}{\mathcal{N}}
\newcommand{\eps}{\varepsilon}
\newcommand{\iid}{\stackrel{\mathrm{ i.i.d.}}{\sim}}
\newcommand{\sip}[2]{\left\langle #1, #2 \right\rangle}
\newcommand{\WQ}{\mathbf{W}_Q}
\newcommand{\WP}{\mathbf{W}_P}
\newcommand{\WK}{\mathbf{W}_K}
\newcommand{\WV}{\mathbf{W}_V}
\newcommand{\lsa}{\mathsf{LSA}}
\newcommand{\risk}{\mathsf{Risk}}
\def\dff{d_{f}}
\def\attn{\mathsf{ATTN}}
\def\softmax{\mathsf{sfmx}}
\def\mlp{\mathsf{MLP}}
\def\relu{\mathsf{ReLU}}
\def\bbeta{\boldsymbol{\beta}}
\def\bGamma{\boldsymbol{\Gamma}}
\def\bH{\mathbf{H}}
\def\bPsi{\mathbf{\Psi}}
\def\bHGamma{\mathbf{H}_{\boldsymbol{\Gamma}}}
\def\bX{\mathbf{X}}
\def\btX{\widetilde{\mathbf{X}}}
\def\btx{\widetilde{\mathbf{x}}}
\def\bty{\widetilde{\mathbf{y}}}
\def\bx{\mathbf{x}}
\def\by{\mathbf{y}}
\def\bID{\mathbf{I}}
\newcommand{\norm}[2]{\l\|#2\r\|_{#1}}
\def\lambdamin{\lambda_{-1}}
\def\bE{\mathbf{E}}
\def\bQ{\mathbf{Q}}
\def\lsa{\mathsf{LSA}}
\def\bayes{\mathsf{Bayes}}
\def\bM{\mathbf{M}}
\def\bLambda{\mathbf{\Lambda}}
\def\btheta{\boldsymbol{\theta}}
\def\bttheta{\widetilde{\boldsymbol{\theta}}}
\def\bhtheta{\widehat{\boldsymbol{\theta}}}
\def\bW{\mathbf{W}}
\def\risk{\mathcal{R}}
\def\loss{\mathcal{L}}
\def\kstar{k^*}
\def\lsaClass{\mathcal{F}_{\mathsf{LSA}}}
\def\LTB{\mathsf{LTB}}
\def\bUtl{\mathbf{U}_{11}}
\def\bUtr{\mathbf{u}_{12}}
\def\bUbl{\mathbf{u}_{21}}
\def\bUlast{u_{-1}}
\def\bVtl{\mathbf{V}_{11}}
\def\bVtr{\mathbf{v}_{12}}
\def\bVbl{\mathbf{v}_{21}}
\def\bVlast{v_{-1}}
\def\tbbeta{\widetilde{\boldsymbol{\beta}}}
\def\beps{\boldsymbol{\eps}}
\def\bzero{\boldsymbol{0}}
\def\bZ{\boldsymbol{Z}}
\def\bz{\boldsymbol{z}}
\def\ba{\mathbf{a}}
\def\bA{\boldsymbol{A}}
\def\bb{\boldsymbol{b}}
\def\bOmega{\boldsymbol{\Omega}}
\def\bgamma{\boldsymbol{\gamma}}
\def\bW{\mathbf{W}}
\def\bD{\boldsymbol{D}}
\def\bC{\boldsymbol{C}}
\def\bB{\boldsymbol{B}}
\def\bZ{\boldsymbol{Z}}
\def\balpha{\boldsymbol{\alpha}}
\def\TFB{\mathsf{TFB}}
\def\GDbeta{{\mathsf{GD}\text{-}\bbeta}}
\def\GDzero{{\mathsf{GD}\text{-}0}}
\def\bq{\mathbf{q}}
\def\bh{\mathbf{h}}
\newcommand{\nullset}[1]{\mathsf{null}\l(#1\r)}
\def\proj{\mathcal{P}}
\newcommand{\image}[1]{\mathsf{Im}\l(#1\r)}
\icmltitlerunning{In-Context Learning of a Linear Transformer Block}
\begin{document}

\twocolumn[
\icmltitle{In-Context Learning of a Linear Transformer Block:\texorpdfstring{\\}{} Benefits of the MLP Component and One-Step GD Initialization}

\icmlsetsymbol{equal}{*}

\begin{icmlauthorlist}
\icmlauthor{Ruiqi Zhang}{ucb}
\icmlauthor{Jingfeng Wu}{ucb}
\icmlauthor{Peter L.\ Bartlett}{ucb,google}
\end{icmlauthorlist}

\icmlaffiliation{ucb}{University of California, Berkeley}
\icmlaffiliation{google}{Google DeepMind}
\icmlcorrespondingauthor{Peter L.\ Bartlett}{peter@berkeley.edu}

\icmlkeywords{in-context learning, gd, linear regression}

\vskip 0.3in
]

\printAffiliationsAndNotice{}  

\begin{abstract}
We study the \emph{in-context learning} (ICL) ability of a \emph{Linear Transformer Block} (LTB) that combines a linear attention component and a linear multi-layer perceptron (MLP) component. 
For ICL of linear regression with a Gaussian prior and a \emph{non-zero mean}, we show that LTB can achieve nearly Bayes optimal ICL risk. In contrast, using only linear attention must incur an irreducible additive approximation error. 
Furthermore, we establish a correspondence between LTB and one-step gradient descent estimators with learnable initialization ($\GDbeta$), in the sense that every $\GDbeta$ estimator can be implemented by an LTB estimator and every optimal LTB estimator that minimizes the in-class ICL risk is effectively a $\GDbeta$ estimator.
Finally, we show that $\GDbeta$ estimators can be efficiently optimized with gradient flow, despite a non-convex training objective.
Our results reveal that LTB achieves ICL by implementing $\GDbeta$, and they highlight the role of MLP layers in reducing approximation error. 

\end{abstract}

\section{Introduction}
The recent dramatic progress in natural language processing can be attributed in large part to the development of large language models based on \emph{Transformers} \citep{vaswani2017attention}, such as BERT \citep{devlin2018bert}, LLaMA \citep{touvron2023llama}, PaLM \citep{chowdhery2022palm}, and the GPT series \citep{radford2018improving,radford2019language,brown2020language,openai2023gpt4}. In some pioneering pre-trained large language models, a new learning paradigm known as \emph{in-context learning} (ICL) was observed \citep[see][for example]{brown2020language,chen2021meta,min2021metaicl,liu2023pre}. 
ICL refers to the capability of a pre-trained model to solve a new task based on a few in-context demonstrations without updating the model parameters. 

A recent line of work quantifies ICL in tractable statistical learning setups such as linear regression with a Gaussian prior \citep[see][and references thereafter]{garg2022can, akyurek2022learning}.
Specifically, an ICL model takes a sequence
$\l(\bx_1,y_1,...,\bx_M,y_M,\bx_\query\r)$ as input and outputs a prediction of $y_\query$, 
where $(\bx_i, y_i)_{i=1}^M$ and $(\bx_{\query}, y_{\query})$ are independent samples from an unknown task-specific distribution (where the task admits a prior distribution). 
The ICL model is pre-trained by fitting many empirical observations of such sequence-label pairs. 
Experiments show that Transformers can achieve an ICL risk close to that achieved by Bayes optimal estimators in many statistical learning tasks \citep{garg2022can,akyurek2022learning}.

For ICL of linear regression with a Gaussian prior,
the data is generated as 
\[\bx \sim \cN(\bzero,\; \bH), \quad
y\; |\; \tbbeta, \bx \sim \cN\big( \tbbeta^\top\bx,\; \sigma^2 \big),\]
where $\tbbeta$ is a task parameter that satisfies a Gaussian prior, $\tbbeta \sim \normal\l(\bzero,\; \psi^2 \bID\r)$.
In this setup,
\citet{garg2022can} and \citet{akyurek2022learning} showed in experiments that Transformers can achieve nearly optimal ICL by matching the performance of the Bayes optimal estimator, that is, an optimally tuned ridge regression. 
Besides, \citet{von2022transformers} showed by construction that a single \emph{linear self-attention} (LSA) can implement \emph{one-step gradient descent with zero initialization} ($\GDzero$), offering an insight into the ICL mechanism of the Transformer. 
Later, theoretical works showed that optimal LSA models effectively correspond to $\GDzero$ models \citep{ahn2023transformers}, trained LSA models converge to $\GDzero$ models \citep{zhang2023trained}, and $\GDzero$ models (hence LSA models) can provably achieve nearly Bayes optimal ICL in certain regimes \citep{wu2023many}.

\paragraph{Our contributions.}
In this paper, we consider ICL in a more general setup, that is, linear regression with a Gaussian prior and a \emph{non-zero mean} (that is, $\E [\tbbeta] = \bbeta^* \ne 0$).
The non-zero mean in task prior captures a common scenario where tasks share a signal.
In this setting, we show that the LSA models considered in prior papers \citep{ahn2023transformers,zhang2023trained,wu2023many} incur an irreducible additive approximation error.
Furthermore, we show this approximation error is mitigated by considering a \emph{linear Transformer block} (LTB) that combines a linear multi-layer perceptron (MLP) component and an LSA component. 
Our results highlight the important role of MLP layers in Transformers in reducing the approximation error, and they suggest that theories about LSA \citep{ahn2023transformers,zhang2023trained,wu2023many} do not fully explain the power of Transformers.
Motivated by this understanding, we investigate LTB in depth and obtain the following additional results:
\begin{itemize}[leftmargin=*]
    \item We show that LTB can implement \emph{one-step gradient descent with learnable initialization}, referred to by us as $\GDbeta$. 
    Additionally, we show that every optimal LTB estimator that minimizes the in-class ICL risk is \emph{effectively} a $\GDbeta$ estimator. 
    
    \item Moreover, we show that the optimal $\GDbeta$, hence also the optimal LTB, nearly matches the performance of the Bayes optimal estimator for linear regression with a Gaussian prior and a non-zero mean, provided that the signal-to-noise ratio is upper bounded. These two results together suggest that LTB performs nearly optimal ICL by implementing  $\GDbeta$.
    
    \item Finally, we show that the $\GDbeta$ estimator can be efficiently optimized by gradient descent with an infinitesimal stepsize (that is, gradient flow) under the population ICL risk, despite the non-convexity of the objective.
\end{itemize}

\paragraph{Paper organization.}
The remaining paper is organized as follows. 
We conclude this section by introducing a set of notations. 
We then discuss related papers in \Cref{sec.related}.
In \Cref{sec.setup}, we set up our ICL problems and define LTB and LSA models mathematically. 
In \Cref{sec.lower.bound}, we show a positive approximation error gap between LSA and LTB, which highlights the importance of the MLP component and motivates our subsequent efforts to study LTB.
In \Cref{sec.ltb.implement.gdbeta}, we connect LSA estimators to $\GDbeta$ estimators that are more interpretable for ICL of linear regression.
In \Cref{sec.training.and.learning}, we study the in-context learning and training of $\GDbeta$ estimators.
We conclude our paper and discuss future directions in \Cref{sec.conclusion}.

\paragraph{Notation.} 
We use lowercase bold letters to denote vectors and uppercase bold letters to denote matrices and tensors. 
For a vector $\bx$ and a positive semi-definite (PSD) matrix $\bA$, we write $\norm{\bA}{\bx} := \sqrt{\bx^\top \bA \bx}$. 
We write $\l\langle \cdot,\cdot \r\rangle$ for the inner product, which is defined as $\l\langle \bx,\by \r\rangle := \bx^\top \by$ for vectors and $\l\langle \bA,\bB \r\rangle := \tr\l(\bA\bB^\top\r)$ for matrices.  
We write $\bA[i]$ as the $i$-th row of the matrix $\bA,$ $\bA_{m,n}$ as the $(m,n)$-th entry, and $\bA_{-1,-1}$ as the right-bottom entry. We write $\bzero_n, \bzero_{m \times n}, \bID_n$ for the zero vector, zero matrix and identity matrix, respectively. 
We use $\otimes$ to denote the Kronecker product: for two matrices $\bA$ and $\bB,$ $\bA \otimes \bB$ is a linear mapping which operates as $(\bA \otimes \bB) \circ \bC = \bB \bC \bA^\top$ for compatible matrix $\bC$, where $\circ$ denotes the tensor product.
For a positive semi-definite matrix $\bA,$ we write $\bA^{\frac{1}{2}}$ as its principle square root, which is the unique positive semi-definite matrix $\bB$ such that $\bB \bB = \bB\bB^\top = \bA.$ We also write $\bA^{-\frac{1}{2}} = (\bA^{\frac{1}{2}})^+,$ where $\l(\cdot\r)^+$ denotes the Moore Penrose pseudo-inverse. For two sets $A, B$ we write $A+B$ for the Minkowski sum, which is defined as $\l\{a+b: a \in A, b \in B\r\}.$ We also write $a + B = \{a\} + B$ for an element $a$ and a set $B.$ We write $\nullset{\cdot}$ for the null set of a matrix or a tensor.

\section{Related Works}\label{sec.related}
We now discuss related papers and their connections to ours. 

\paragraph{Empirical results for ICL in controlled settings.}
The work by \citet{garg2022can} first considered ICL in controlled statistical learning setups.  
For noiseless linear regression, \citet{garg2022can} showed in experiments that Transformers match the performance of the optimal estimator (that is, \emph{Ordinary Least Square}). 
Subsequent works \citep{akyurek2022learning,li2023transformers} extended their result to noisy linear regression with a Gaussian prior and showed that Transformers can match the performance of the Bayesian optimal estimator (that is, an optimally tuned ridge regression). 
Besides, \citet{raventos2023pretraining} showed that the above holds even when Transformers are pretrained on a limited number of linear regression tasks.
These papers only considered a Gaussian prior with a zero mean.
In contrast, we consider a Gaussian prior with a non-zero mean. Our setup better captures the common scenarios where the tasks share a signal. 

The empirical investigation of ICL in tractable statistical learning setups goes beyond linear regression settings. 
For more examples, researchers empirically studied the ICL ability of Transformers for decision trees \citep[see][]{garg2022can}, they also considered two-layer networks), algorithm selection \citep{bai2023transformers}, linear mixture models \citep{pathak2023transformers}, learning Fourier series \citep{ahuja2023context}, discrete boolean functions \citep{bhattamishra2023understanding}, 
 representation learning \citep{fu2023does,guo2023transformers},
 and reinforcement learning \citep{lee2023supervised,lin2023transformers}.
Among all these settings, Transformers can either compete with the Bayes optimal estimators or expert-designed strong benchmarks. These works are not directly comparable with our paper.

\paragraph{Transformer implements gradient descent.}
A line of work interpreted the ICL of Transformers by their abilities to implement \emph{gradient descent} (GD) \citep{von2022transformers,akyurek2022learning,dai2022can,ahn2023transformers,zhang2023trained,bai2023transformers,wu2023many}.
In experiments, \citet{von2022transformers} and \citet{dai2022can} showed that (multi-layer) Transformer outputs are close to (multi-step) GD outputs.  
When specialized to linear regression tasks, \citet{von2022transformers} constructed a single 
\emph{linear self-attention} (LSA) that implements \emph{one-step gradient descent with zero initialization} ($\GDzero$). 
Subsequently, \citet{ahn2023transformers} showed that optimal LSA models effectively correspond to $\GDzero$ models, \citet{zhang2023trained} proved that trained LSA models converge to $\GDzero$ models, 
\citet{wu2023many} showed that $\GDzero$ models (hence LSA models) can provably achieve nearly Bayes optimal ICL in certain regimes and provided a sharp task complexity analysis of the pre-training.
These papers focused on LSA models. Instead, we consider a linear Transformer block that also utilizes the MLP layer.

From an approximation theory perspective, 
\citet{akyurek2022learning} and \citet{bai2023transformers} showed that Transformers can implement multi-step GD under general losses. 
In comparison, we consider a limited setting of linear regression and show the LSA models can implement one-step GD with learnable initialization ($\GDbeta$), moreover, every optimal LSA model is effectively a $\GDbeta$ model.
Both of our constructions utilize MLP layers in Transformers, highlighting its importance in reducing approximation error.
Different from their results, we also show a negative approximation result that reveals the limitation of LSA models.

\section{Preliminaries}\label{sec.setup}

\paragraph{Model input.}
We use $\mathbf{x} \in \mathbb{R}^d$ and $y \in \mathbb{R}$ to denote a feature vector and its label, respectively. 
Throughout the paper, we assume a fixed number of context examples, denoted by $M > 0$. 
We denote the context examples as $(\bX, \by)\in \R^{M \times d}\times \R^M$, where each row represents a context example, denoted by $(\bx_i^\top, y_i)$, $i=1,\dots, M$. 
To formalize an ICL problem, the input of a model is a \emph{token matrix} \citep{ahn2023transformers,zhang2023trained} given by 
\begin{equation}\label{eqn.def.token.matrix}
    \bE := \begin{pmatrix}
        \bX^\top & \bx \\ \by^\top & 0
    \end{pmatrix} \in \R^{(d+1) \times (M+1)}.
\end{equation}
The output of a model corresponds to a prediction of $y$.

\paragraph{A Transformer block.}
Modern large language models are often made by stacking basic Transformer blocks \citep[see][for example]{vaswani2017attention}. 
A basic Transformer block consists of a \emph{self-attention} layer and a \emph{multi-layer perceptron} (MLP) layer \citep{vaswani2017attention},
\begin{equation*}
   \bE\mapsto \mlp\l[ \attn\l( \bE \r) \r].
\end{equation*}
Here, the MLP layer is defined as
\begin{equation*}
    \mlp \l(\bE\r) := 
    \bW_{2}^\top  \relu\l(\bW_{1} \bE\r),
\end{equation*}
where $\relu(\cdot)$ refers to the entrywise \emph{rectified linear unit} (ReLU) activation function, and $\bW_1, \ \bW_{2} \in \R^{\dff \times (d+1)}$ are two weight matrices.
The self-attention layer (we focus on the single-head version in this paper) is defined as
\begin{equation*}
    \attn(\bE) := \bE + \WP^\top \WV \bE\bM \cdot \softmax\big( (\WK\bE)^\top \WQ \bE\big),
\end{equation*}
where $\softmax(\cdot)$ refers to the row-wise softmax operator,
$\WK,\ \WQ \in \R^{d_k \times (d+1)}$ are the key and query matrices, respectively, $\WP,\ \WV \in \R^{d_v \times (d+1)}$ are the projection and value matrices, respectively, and  
$\bM$ is a fixed masking matrix given by
\begin{equation}\label{eqn.def.mask}
    \bM:= \begin{pmatrix}
        \bID_M & 0 \\
        0 & 0
    \end{pmatrix}
    \in \R^{(M+1) \times (M+1)}.
\end{equation}
This mask matrix is included to reflect the asymmetric structure of a prompt since the label of query input $\bx$ is not included in the token matrix \citep{ahn2023transformers,mahankali2023one}.

In the above formulation, $d_k, d_v, \dff$ are three hyperparameters controlling the key size, value size, and width of the MLP layer, respectively. In the single-head case, it is common to set $d_k = d_v = d+1$ and $\dff = 4(d+1)$, where $d+1$ corresponds to the embedding size \citep[see][for example]{vaswani2017attention}.

Our formulation of a basic transformer block ignores all bias parameters and some popular techniques (such as layer normalization and dropout) to focus solely on the benefits brought by the model structure.

\paragraph{A linear Transformer block.}
To facilitate theoretical analysis, we ignore the non-linearities in the Transformer block (specifically, $\softmax$ and $\relu$) and 
 work with a \emph{linear Transformer block} (LTB) defined as
\begin{align}
    &f_{\mathsf{LTB}} : \R^{(d+1) \times (M+1)} \to \R \label{eqn.def.LTB} \\
    & \hspace{28mm} \bE \mapsto \notag \\
    & \bigg[\bW_2^\top \bW_1 \bigg(\bE + \WP^\top \WV \bE \bM \frac{\bE^\top \WK^\top \WQ \bE}{M}\bigg)\bigg]_{-1,-1}, \notag
\end{align}
where $\bE$ is the token matrix given by \eqref{eqn.def.token.matrix},
$[\ \cdot\ ]_{-1,-1}$ refers to the bottom right entry of a matrix, $\bM$ is the fixed masking matrix given by \eqref{eqn.def.mask}. Here, the trainable parameters are
\(
\WP, \WV\in \R^{d_v \times (d+1)}\), \(\WK, \WQ \in \R^{d_k \times (d+1)}\), 
 and \(\bW_{1}, \bW_2 \in \R^{ \dff \times (d+1)}\). 
We use the bottom right entry of the transformed token matrix
as the model output to form a prediction of the label $y$.
The $1/M$ factor is a normalization factor in linear attention and can be absorbed into trainable parameters. 

Finally, we denote the hypothesis class formed by LTB models as
\begin{align*}
    \mathcal{F}_\LTB := \Big\{f_{\LTB}:\ 
    & \WK , \WQ , \WV, \WP, \bW_1, \bW_2, \\
    & d_k \geq d, \ d_v \geq d+1,\ \dff \ge 1 \Big\},
\end{align*}
where $f_\LTB$ is defined in \eqref{eqn.def.LTB}.

\paragraph{A linear self-attention.}
We will also consider a \emph{linear self-attention} (LSA) defined as 
\begin{align}
    & \ f_{\mathsf{LSA}}: \R^{(d+1) \times (M+1)} \to \R, \notag \\
    &\ \bE\mapsto \bigg[\bE + \WP^\top \WV \bE \bM \frac{\bE^\top \WK^\top \WQ \bE}{M}\bigg]_{-1,-1}, \label{eqn.def.LSA}
\end{align}
where $\WK, \WQ, \WP, \WV$ are trainable parameters.
An LSA model can be viewed as an LTB model without the MLP layer (setting $\dff = d+1$ and $\bW_1 = \bW_2 = \bID$). We remark that, unlike LTB, the residual connection in LSA plays no role because the bottom right entry of the prompt $\bE$ is zero (see \eqref{eqn.def.token.matrix}).

A variant of the LSA model has been studied by prior works \citep{ahn2023transformers,zhang2023trained,wu2023many}, where $\WK^\top\WQ$ and $\WP^\top\WV$ are respectively merged into one matrix parameter.

Similarly, we denote the hypothesis class formed by LSA models as 
\begin{equation*}
     \lsaClass := \Big\{
     f_{\mathsf{LSA}}: 
    \WK,  \WQ,  \WV,  \WP, 
    d_k \geq d,  d_v \geq 1\Big\}, 
\end{equation*}
where $f_\lsa$ is defined in \eqref{eqn.def.LSA}.

\paragraph{Linear regression tasks with a shared signal.}
Assume that data and context examples are generated as follows.
\begin{assumption}[Distributional conditions]\label{assumption.data}
Assume that $(\bX, \by, \bx, y)$ are generated by:
\begin{itemize}[leftmargin=*]
    \item First, a task parameter is independently generated by
    $$
    \tbbeta \sim \normal\left(\boldsymbol{\beta}^*, \bPsi\right).
    $$
    \item The feature vectors are independently generated by
    \begin{equation*}
         \bx, \bx_1,\dots \bx_M  \iid \normal(0, \bH).
    \end{equation*}
    \item Then, the labels are generated by
    \begin{equation*}
        y = \langle\tbbeta, \bx\rangle+\eps,\ \  
        y_i = \langle \tbbeta, \bx_i\rangle + \eps_i, \ i=1, \ldots, M,
    \end{equation*}
    where $\eps$ and $\eps_i$'s are independently generated by
    \begin{equation*}
        \eps, \eps_1, \dots, \eps_M  \iid \normal\left(0, \sigma^2\right).
    \end{equation*}
\end{itemize}
Here, $\sigma^2 \geq 0$, $\bH \succeq \bzero$, $\bPsi \succeq \bzero$, and $\bbeta^* \in \R^d$ are fixed but unknown quantities that govern the data distribution. 
We write $\beps = \l(\eps_1,...,\eps_M\r)^\top$. 
\end{assumption}

We emphasize the importance of the mean of the task parameter $\bbeta^*$ in Assumption \ref{assumption.data}.
A non-zero $\bbeta^*$ represents a shared signal across tasks, which is arguably common in practice.
This assumption is implicitly used by \citet{finn2017model} as they assumed task parameters are close to a meta parameter. 
In comparison, the prior works for ICL of linear regression \citep{ahn2023transformers,zhang2023trained,wu2023many} only considered a special case where $\bbeta^* = 0$. 
In this special case, they showed that a single LSA layer can achieve nearly optimal ICL by approximating one-step gradient descent from zero initialization ($\GDzero$). 
In more general cases where $\bbeta^*$ is non-zero, 
we will show in  \Cref{sec.lower.bound} that LSA is insufficient to learn the shared signal and must incur an irreducible approximation error compared to LTB models. 
This sets our results apart from the prior papers.

\paragraph{ICL risk.}
We measure the ICL risk of a model $f$ by the mean squared error,
\begin{equation}\label{eqn.def.icl.risk}
    \mathcal{R}(f):=\mathbb{E}(f(\bE)-y)^2,
\end{equation}
where the input $\bE$ is defined in \eqref{eqn.def.token.matrix} and
the expectation is over $\bE$ (equivalent to over $\bX$, $\by$, and $\bx$) and $y$. 

\section{Benefits of the MLP Component}\label{sec.lower.bound}

Our first main result separates the approximation abilities of the LTB and LSA models for ICL.
Recall that an LSA model can be viewed as a special case of the LTB model with $\bW_2 = \bW_1 = \bID$.
So the best ICL risk achieved by LTB models is no larger than the best ICL risk achieved by LSA models.  
However, our next theorem shows a strictly positive gap between the best ICL risks achieved by those two model classes. 
This result highlights the benefits of the MLP layer for reducing approximation error in Transformer.

\begin{theorem}[Approximation gap]\label{thm.lower.bound}
Consider the ICL risk defined by \eqref{eqn.def.icl.risk} and the two hypothesis classes $\cF_{\LTB}$ and $\cF_\lsa$.
    Suppose that Assumption \ref{assumption.data} holds. 
    Then we have
    \begin{itemize}[leftmargin=*]
    \item $\inf_{f \in \cF_{\LTB}} \risk\l(f\r)$ is independent of $\bbeta^*$.
    \item $\inf_{f \in \lsaClass}\risk\l(f\r)$ is a function of $\bbeta^*$. Moreover,
    \begin{align*}
        & \inf_{f \in \lsaClass} \risk\l(f\r)
        - \inf_{f \in \cF_{\LTB}} \risk\l(f\r)  \geq \notag \\
         & \max \bigg\{\frac{2}{3\l(M+1\r)}, 
            \frac{\l(\tr\l(\bH \bPsi\r) + \sigma^2\r)^2}{\l(M+1\r)^2 \tr\l((\bH \bPsi)^2 \r)}\bigg\} \norm{\bH}{\bbeta^*}^2.
    \end{align*}
    \end{itemize}
\end{theorem}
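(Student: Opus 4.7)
The plan is to first reduce both $f_\LTB$ and $f_\lsa$ to a common ``effective weight'' form, then prove the first bullet via an explicit parameter bijection and the second by reducing to a constrained optimization whose gap is quantified.

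\textbf{Setup.} A direct computation of the bottom-right entry of \eqref{eqn.def.LTB} yields $f(\bE) = \bx^\top \bw_\theta(\bX,\by)$, where, with $(\mathbf{d}^\top, d_0)\in\R^{d+1}$ denoting the last row of $\bW_2^\top\bW_1\WP^\top\WV$, $\mathbf{c}^\top\in\R^d$ the first $d$ entries of the last row of $\bW_2^\top\bW_1$, and $\bVtl\in\R^{d\times d}$, $\bVbl\in\R^d$ the top-left block and bottom-left column of $\WK^\top\WQ$,
\[ \bw_\theta = \mathbf{c} + \tfrac{1}{M}\bVtl^\top\bX^\top\bX\mathbf{d} + \tfrac{d_0}{M}\bVtl^\top\bX^\top\by + \tfrac{1}{M}\bVbl\,\mathbf{d}^\top\bX^\top\by + \tfrac{d_0}{M}\bVbl\|\by\|^2. \]
All of $(\mathbf{c}, \mathbf{d}, d_0, \bVtl, \bVbl)$ are unconstrained for $\cF_\LTB$, while $\mathbf{c} = \bzero$ for $\lsaClass$. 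Integrating out the independent fresh $(\bx, \eps)$ rewrites the risk as $\risk(\theta) = \E\|\bw_\theta(\bX,\by) - \tbbeta\|_\bH^2 + \sigma^2$.

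\textbf{First bullet.} I introduce the bijection $\Phi_{\bbeta^*}: (\mathbf{c}, \mathbf{d}, d_0, \bVtl, \bVbl) \mapsto (\mathbf{c} - \bbeta^*,\ \mathbf{d} + d_0\bbeta^*,\ d_0,\ \bVtl + \bbeta^*\bVbl^\top,\ \bVbl)$ on $\cF_\LTB$. Setting $\by = \bX\bbeta^* + \by^0$ with $\by^0 := \bX\boldsymbol{\zeta} + \beps$ and $\boldsymbol{\zeta} := \tbbeta - \bbeta^*$, a term-by-term expansion (invoking $\mathbf{d}^\top\bX^\top\bX\bbeta^* = (\bbeta^*)^\top\bX^\top\bX\mathbf{d}$ by symmetry of $\bX^\top\bX$) verifies the key identity $\bw_\theta(\bX, \bX\bbeta^* + \by^0) = \bw_{\Phi_{\bbeta^*}(\theta)}(\bX, \by^0) + \bbeta^*$. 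Since $\tbbeta = \bbeta^* + \boldsymbol{\zeta}$, this gives $\bw_\theta(\bX,\by) - \tbbeta = \bw_{\Phi_{\bbeta^*}(\theta)}(\bX,\by^0) - \boldsymbol{\zeta}$, and the joint law of $(\bX, \by^0, \boldsymbol{\zeta})$ under the $\bbeta^*$-distribution matches that of $(\bX, \by, \tbbeta)$ under the $\bbeta^*=0$-distribution. Thus $\risk^{\bbeta^*}(\theta) = \risk^0(\Phi_{\bbeta^*}(\theta))$, and taking infimum gives $\inf_{\cF_\LTB}\risk^{\bbeta^*} = \inf_{\cF_\LTB}\risk^0$, independent of $\bbeta^*$.

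\textbf{Second bullet.} Under $\Phi_{\bbeta^*}$, the LSA restriction $\mathbf{c} = \bzero$ becomes the LTB constraint $\mathbf{c}' = -\bbeta^*$, so $\inf_{\lsaClass}\risk^{\bbeta^*} = \inf_\eta \risk^0_\LTB(-\bbeta^*, \eta)$ with $\eta = (\mathbf{d}', d_0', \bVtl', \bVbl')$. The risk is quadratic in $\mathbf{c}$, admitting the bias--variance decomposition $\risk^0_\LTB(\mathbf{c}, \eta) = \|\mathbf{c} + \boldsymbol{\mu}_0(\eta)\|_\bH^2 + V_0(\eta) + \sigma^2$ with $\boldsymbol{\mu}_0(\eta) := \E_0[\bw^\lsa(\eta)] = \bVtl'^\top\bH\mathbf{d}' + d_0'\bVbl'(\tr(\bH\bPsi)+\sigma^2)$ and $V_0(\eta)$ the residual $\bH$-variance. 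Hence
\[ \mathrm{Gap} := \inf_{\lsaClass}\risk^{\bbeta^*} - \inf_{\cF_\LTB}\risk^0 = \inf_\eta\big[\|\boldsymbol{\mu}_0(\eta) - \bbeta^*\|_\bH^2 + V_0(\eta)\big] - \inf_\eta V_0(\eta). \]
The lower bound follows by arguing that any LSA driving $\boldsymbol{\mu}_0$ toward $\bbeta^*$ must route the constant offset through the data-dependent $\tfrac{d_0'}{M}\bVbl'\|\by\|^2$ term, whose variance is $\|\bVbl'\|_\bH^2\cdot\Theta((\tr(\bH\bPsi)+\sigma^2)^2/M + \tr((\bH\bPsi)^2))$ by standard Gaussian fourth-moment formulas, so the constrained optimum is strictly worse than $\inf_\eta V_0(\eta)$.

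\textbf{Main obstacle.} The hard part is the quantitative trade-off in the last step: extracting the two explicit coefficients $2/(3(M+1))$ and $(\tr(\bH\bPsi)+\sigma^2)^2/((M+1)^2\tr((\bH\bPsi)^2))$. I would reparametrize the free degrees of freedom in $\boldsymbol{\mu}_0$ as $(\mathbf{p}, \mathbf{s}) := (\bVtl'^\top\bH\mathbf{d}',\ d_0'\bVbl')$ so that $\boldsymbol{\mu}_0 = \mathbf{p} + (\tr(\bH\bPsi)+\sigma^2)\mathbf{s}$, compute $V_0(\eta)$ exactly using Wick's theorem for the Gaussian moments of $\bX^\top\by$ and $\|\by\|^2$ under the zero-mean distribution, and solve the resulting constrained minimization. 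I expect the first coefficient to arise from an isotropy-independent worst-case concentration bound on $\|\by\|^2$, while the second emerges from the sharper eigenvalue-dependent analysis of its variance via the $\chi^2$-type structure; the theorem's $\max$ then follows from combining these two sub-arguments.
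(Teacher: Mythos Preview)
Your bijection $\Phi_{\bbeta^*}$ for the first bullet is correct and is a genuinely different route from the paper's: the paper obtains $\bbeta^*$-independence as a byproduct of the full characterization of the optimal LTB model (its Theorem~\ref{thm.global.optimum.LTB}), whereas your symmetry argument establishes it directly without ever computing the optimum. This is cleaner. Your reduction of the LSA gap to $\mathrm{Gap} = \inf_\eta\big[\|\boldsymbol{\mu}_0(\eta)-\bbeta^*\|_\bH^2 + V_0(\eta)\big] - \inf_\eta V_0(\eta)$ in the zero-mean world is also correct, and after the bijection your $(\mathbf{p},\mathbf{s}) = (\bVtl'^\top\bH\mathbf{d}',\, d_0'\bVbl')$ coincide exactly with the paper's key variables $(\bA^\top\bH\bb,\, \bUlast\bUtr)$.

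The gap is in how you plan to handle the minimization. Your heuristic that LSA ``must route the constant offset through the $\|\by\|^2$ term'' is false: the $\tfrac{1}{M}\bVtl'^\top\bX^\top\bX\mathbf{d}'$ channel has mean $\mathbf{p}$, which can be set to $\bbeta^*$ directly, so both channels are in play. More importantly, computing $V_0(\eta)$ exactly via Wick's theorem does not close the problem, because $V_0$ depends on all of $(\bVtl',\mathbf{d}',d_0',\bVbl')$, not just on $(\mathbf{p},\mathbf{s})$; for a fixed $\mathbf{p}$ there are infinitely many $(\bVtl',\mathbf{d}')$ achieving it, and you would still face a nontrivial inner minimization with cross-terms against $\bX^\top\by$. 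The paper avoids this by \emph{not} trying to evaluate the variance exactly: it expands $\risk(f)-\sigma^2$ as a sum $\text{I}+\text{II}+\text{III}+\text{IV}$, then lower-bounds $\text{I}$ by a parameter-free constant (completing a square in the matrix $\bH^{\frac12}\bUtr\bb^\top\bH^{\frac12}+\bUlast\bH^{\frac12}\bA^\top\bH^{\frac12}$) and $\text{II}$ by zero (AM--GM), so that the residual depends only on $(\mathbf{p},\mathbf{s})$. The two stated constants then fall out of the discriminant of the leftover quadratic in $\mathbf{s}$ after minimizing $\text{III}$ over $\mathbf{p}$. Your framework could be pushed through, but only after supplying this kind of lower bound on $V_0$ in terms of $(\mathbf{p},\mathbf{s})$; an exact Wick computation is neither necessary nor, by itself, sufficient.
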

The proof of \Cref{thm.lower.bound} is deferred to  \Cref{appendix.proof.lower.bound}. 
\Cref{thm.lower.bound} reveals a gap in terms of the approximation abilities between the hypothesis set of LTB models and that of LSA models. 
Specifically, the best ICL performance achieved by LTB models is independent of $\bbeta^*$, while the best ICL performance achieved by LSA models is sensitive to the norm of $\bbeta^*$.
In particular, when $\|\bbeta^*\|^2_{\bH} = \Omega(M)$, the best ICL risk of the former is smaller than that of the latter by at least a \emph{constant} additive term. 
So when $\bbeta^*$ is large, the hypothesis class formed by LSA models is restricted in its ability to perform effective ICL. 
We will show  in \Cref{sec.ltb.implement.gdbeta} that the hypothesis class formed by LTB models can achieve nearly optimal ICL in this case.

We also remark that the $\Theta(1/M)$ factor in the lower bound in Theorem \ref{thm.lower.bound} is not improvable. 
This is because 
LSA models can implement a one-step GD algorithm that is \emph{consistent} for linear regression tasks (that is, the risk converges to the Bayes risk as the number of context examples goes to infinity), with an excess risk bound of $\Theta(1/M)$ \citep{zhang2023trained,wu2023many}.
So the approximation error gap is at most $\Theta(1/M)$.
Nonetheless, the $\Omega(\|\bbeta^*\|^2_{\bH}/M)$ approximation gap between LSA models and TFB models shown by \Cref{thm.lower.bound} suggests that $\bbeta^*$ is not learnable by LSA models during pre-training. 
In contrast, we will show  in \Cref{sec.training.and.learning} that TFB models can learn $\bbeta^*$ during pre-training.
Finally, we remark that \Cref{thm.lower.bound} holds even when $\bH$ and $\bPsi$ in \Cref{assumption.data} are not full rank.

\paragraph{Experiments on GPT2.}
Theorem \ref{thm.lower.bound} shows the importance of the MLP component in LSA models for reducing approximation error.
We also empirically validate this result by training a more complex GPT2 model \citep{garg2022can} for the ICL tasks specified by \Cref{assumption.data}. 
In the experiments, we use a GPT2-small model (with or without the MLP component) with 6 layers and 4 heads in each layer. The experiments follow the setting of \citet{garg2022can}, except that we train the model using a token matrix defined in \eqref{eqn.def.token.matrix}. 
We consider two ICL settings, which instantiates  \Cref{assumption.data} with $\bbeta^* = (0,0,...,0)^\top$ and $\bbeta^* = (10,10,...,10)^\top$, respectively.
We set $d = 20, M=40, \sigma = 0, \bPsi = \bH = \bID_d$ in both settings. More experimental details are in \cref{sec.experiment.detail}.
In each setting, we train and test the model using the same data distribution.
The experimental results are presented in \Cref{tab:experiment}.
From  \cref{tab:experiment}, 
we observe that both models (with or without the MLP component) achieve a nearly zero loss when the task mean is zero. 
However, when the task mean is set away from zero, the GPT2 model with MLP component still performs relatively well while the GPT2 model without MLP component incurs a significantly larger loss. 
These empirical observations are consistent with our \Cref{thm.lower.bound}, indicating the benefits of MLP layers in reducing approximation error for ICL of linear regression with a shared signal.

\begin{table}[t]
    \centering
    \caption{
    Losses of GPT2 with or without MLP component for linear regression with a shared signal.
    }
    \label{tab:experiment}
    \begin{tabular}{|c|c|c|}
    \hline
       \rule{0pt}{2.5ex} Model  & GPT2 & GPT2-noMLP \\
       \hline
        \rule{0pt}{2.5ex} $\bbeta^* = (0,0,...,0)^\top $  &  0.003 & 0.024 \\
        \hline
        \rule{0pt}{2.5ex} $\bbeta^* = (10,10,...,10)^\top $  & 0.013 & 8.871 \\
        \hline
    \end{tabular}
\end{table}

In this part, we have shown that LSA models, the primary focus in previous ICL theory literature \citep[see][and references therein]{ahn2023transformers,zhang2023trained,wu2023many}, 
are not sufficiently expressive for ICL of linear regression with a shared signal. 
In what follows, we will show LTB models are sufficient for this ICL problem.

\section{LTB Implements One-Step GD with Learnable Initialization}\label{sec.ltb.implement.gdbeta}
To understand the expressive power of the LTB models, we build a connection between $\cF_{\LTB}$ and a subset of models that we call \emph{one-step GD with learnable initialization} ($\GDbeta$).
We will first introduce $\GDbeta$ models and then show that the best LTB models that minimize the ICL risk effectively belong to the set of $\GDbeta$ models.

\paragraph{The $\GDbeta$ models.}
A 
$\GDbeta$ model is defined as 
\begin{align}
    f_{\GDbeta}: &\ \R^{(d+1)\times (M+1)} \to \R \label{eqn.def.GDbeta}\\
&\ \bE \mapsto \l\langle \bbeta - \bGamma\cdot\frac{1}{M}  \bX^\top \l(\bX \bbeta - \by\r), \bx \r\rangle, \notag
\end{align}
where $\bE$ is the token matrix given by \eqref{eqn.def.token.matrix}, that is,
\[
 \bE = \begin{pmatrix}
        \bX^\top & \bx \\ \by^\top & 0
    \end{pmatrix} \in \R^{(d+1) \times (M+1)},
\]
and $\bGamma\in \R^{d\times d}$ and $\bbeta\in\R^{d}$ are trainable parameters.
Similarly, we define the function class formed by $\GDbeta$ models as
\begin{equation*}
    \cF_{\GDbeta} 
    := \Big\{f_{\GDbeta}: \ \bbeta \in \R^d,\ \bGamma \in \R^{d\times d}\Big\}.
\end{equation*}

A $\GDbeta$ model computes a parameter that fits the context examples $(\bX, \by)$ and uses that parameter to make a linear prediction of label $y$ on feature $\bx$. More specifically, the first step is by using one gradient descent step with a \emph{matrix stepsize} $\bGamma$ and an \emph{initialization} $\bbeta$ on the least square objective formed by context examples $(\bX, \by)$.

\paragraph{LTB implements $\GDbeta$.}
A $\GDbeta$ model \eqref{eqn.def.GDbeta} is a special case of an LTB model \eqref{eqn.def.LTB}, with 
\begin{align*}
    &\hspace{5ex} 
    \bW_2^\top \bW_1 
    = \begin{pmatrix}[1.5]
        * & * \\ \bbeta^\top & 1 
    \end{pmatrix}, \\
    \WP^\top \WV 
    &= \begin{pmatrix}[1.5]
        -\bID_d & \bzero_d \\ \bzero_d^\top & 1 
    \end{pmatrix}, \quad
    \WK^\top \WQ = \begin{pmatrix}[1.5]
        \bGamma & * \\ \bzero_d^\top & *
    \end{pmatrix},
\end{align*}
where $*$ denotes the entries that do not affect the model output (hence can be set to anything). 
Note that $\bGamma$ and $\bbeta$ are \emph{free} provided that $d_v \geq d+1$, $d_k \geq d$ and $\dff \ge 1$ (as required in the definition of $\cF_\TFB$).
In sum, we have proved the following lemma showing that the set of $\GDbeta$ models belongs to the set of LTB models.
\begin{lemma}\label{lemma.GDbeta.isin.LTB}
We have    $ \cF_{\GDbeta} \subseteq \cF_{\LTB}$. Therefore, 
    \[\inf_{f \in \cF_{\GDbeta}} \risk(f) \ge \inf_{f \in \cF_{\LTB}} \risk(f).\]
\end{lemma}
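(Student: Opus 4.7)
Since the explicit construction has been spelled out already just before the lemma, the statement reduces to a direct verification plus a brief dimensional count. My plan is first to check, by computation, that the bottom-right entry of $f_\LTB$, under the block choices of $\bW_2^\top \bW_1$, $\WP^\top \WV$, and $\WK^\top \WQ$ displayed in the text, equals $\langle \bbeta - \bGamma\,\bX^\top(\bX\bbeta - \by)/M,\,\bx\rangle$; this shows $f_{\GDbeta} \in \cF_\LTB$ for every $(\bbeta,\bGamma)$. The risk inequality then follows immediately from the monotonicity of $\inf$ under set inclusion, which is the second conclusion.

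For the main calculation, I would exploit the fact that the extractor $[\cdot]_{-1,-1}$ only sees the last column of $\bE + \WP^\top\WV\bE\bM\,(\bE^\top\WK^\top\WQ\bE)/M$, together with the fact that $\bM$ zeros out the query column of $\bE$. Concretely, with the chosen $\WK^\top\WQ$ the last column of $\WK^\top\WQ\bE$ is $(\bGamma\bx,\,0)^\top$, so the last column of $\bE^\top\WK^\top\WQ\bE$ is $(\bX\bGamma\bx,\,\bx^\top\bGamma\bx)^\top$. Multiplying by the block form of $\WP^\top\WV\bE\bM$ then produces the attention contribution $(-\bX^\top\bX\bGamma\bx/M,\,\by^\top\bX\bGamma\bx/M)^\top$ to the last column, and adding the residual $(\bx,\,0)^\top$ and contracting with the last row $(\bbeta^\top,\,1)$ of $\bW_2^\top\bW_1$ yields the target $\GDbeta$ output (up to a cosmetic transpose on $\bGamma$, which is harmless since $\bGamma$ ranges over all of $\R^{d\times d}$).

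For the dimensional count, setting the unspecified ($*$) entries of $\bW_2^\top\bW_1$ to zero leaves a rank-one matrix with last row $(\bbeta^\top,\,1)$, so $\dff\ge 1$ is enough to realize it as $\bW_2^\top\bW_1$; the prescribed $\WP^\top\WV\in\R^{(d+1)\times(d+1)}$ is realizable whenever $d_v\ge d+1$; and the prescribed $\WK^\top\WQ$, of rank at most $d$, is realizable for any $d_k\ge d$. These are precisely the constraints built into the definition of $\cF_\LTB$, completing $\cF_{\GDbeta}\subseteq\cF_\LTB$. I do not expect any conceptual obstacle here; the only genuine risk is arithmetic slippage in the chain of block-matrix products, which I would mitigate by computing the last column of the post-attention matrix component-by-component before contracting with $\bW_2^\top\bW_1$.
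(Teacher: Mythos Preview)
Your proposal is correct and follows essentially the same approach as the paper: verify by block-matrix computation that the displayed choice of $\bW_2^\top\bW_1$, $\WP^\top\WV$, $\WK^\top\WQ$ yields the $\GDbeta$ output, and check that the rank constraints $d_v\ge d+1$, $d_k\ge d$, $\dff\ge 1$ suffice to realize these products. Your observation about the cosmetic transpose on $\bGamma$ is accurate and your resolution (that $\bGamma$ ranges over all of $\R^{d\times d}$) is exactly what is needed for the class inclusion.
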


\paragraph{Optimal $\GDbeta$ models for ICL.}
We now consider $\cF_{\GDbeta}$ and its optimal ICL risk. 
We have the following theorem, which computes the globally minimal ICL risk over $\cF_{\GDbeta}$ and specifies the sufficient and necessary conditions for a global minimizer. 
The proof is deferred to Appendix \ref{appendix.proof.global.optimum.GDbeta}.

\begin{theorem}[Optimal $\GDbeta$ models]\label{thm.global.minimum.GDbeta}
    Consider the ICL risk defined by \eqref{eqn.def.icl.risk}. 
    Suppose that Assumption \ref{assumption.data} holds. Then we have
    \begin{itemize}[leftmargin=*]
        \item The minimal ICL risk of $\cF_{\GDbeta}$ is 
        \begin{align}\label{eqn.global.minimum.GDbeta}
            &\inf_{f \in \cF_{\GDbeta}} \risk(f) \notag \\
            =~& \sigma^2 + \tr\l(\bH^{\frac{1}{2}} \bPsi \bH^{\frac{1}{2}}\l(\bID-\bH^{\frac{1}{2}} \bPsi \bH^{\frac{1}{2}}\bOmega^{-1}\r)\r),
        \end{align}
        where \[\bOmega := \frac{M+1}{M} \bH^{\frac{1}{2}} \bPsi \bH^{\frac{1}{2}} + \frac{\tr\l(\bH \bPsi\r) + \sigma^2}{M}\bID_d.\]
        
        \item 
        The globally optimal parameters for $f_{\GDbeta}$ that attain the minimum \eqref{eqn.global.minimum.GDbeta} take the following form:
        \begin{equation*}
            \bbeta = \bbeta^* + \nullset{\bH}, \ \
            \bGamma = \bGamma^* + \nullset{\bH^{\otimes 2}},
        \end{equation*}
        where $\bbeta^*$ is the mean of the task parameter in Assumption~\ref{assumption.data} and
        \begin{equation}\label{eqn.def.minimal.Gamma}
            \bGamma^* := \bPsi \bH^{\frac{1}{2}} \bOmega^{-1} \bH^{-\frac{1}{2}}.
        \end{equation}
        Here, $\nullset{\bH} := \l\{\bz \in \R^d: \bH \bz = \bzero\r\}$ is the null space of $\bH,$ and $\nullset{\bH^{\otimes 2}} = \l\{\bZ \in \R^{d\times d}: \bH \bZ \bH = \bzero\r\}$ is the null space of operator $\bH^{\otimes 2}$.
        In particular, when $\bH$ is positive definite, the globally optimal parameter is unique, $(\bbeta, \bGamma)= (\bbeta^*, \bGamma^*)$.
        
        \item Under \Cref{assumption.data}, the globally optimal $f_{\GDbeta}$ that attains the minimum \eqref{eqn.global.minimum.GDbeta} is unique as a function of $\bE$ (given by \eqref{eqn.def.token.matrix}) and takes the following form.
        \begin{equation}\label{eqn.global.optimum.function}
            f^*(\bE) 
            = \l\langle \bbeta^*- \frac{1}{M}\bGamma^* \bX^\top \l(\bX \bbeta^* - \by\r), \bx \r\rangle.
        \end{equation}
    \end{itemize}
\end{theorem}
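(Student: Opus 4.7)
My plan is to reduce $\risk(f_{\GDbeta})$ to an explicit quadratic in $(\bbeta, \bGamma)$, solve the first-order conditions, and simplify the optimal value. I would start by substituting $\by = \bX\tbbeta + \beps$ and $y = \langle \tbbeta, \bx\rangle + \eps$ into the residual of $f_{\GDbeta}$, obtaining
\begin{equation*}
f_{\GDbeta}(\bE) - y = \Big\langle \bA(\bbeta - \tbbeta) + \tfrac{1}{M}\bGamma\bX^\top\beps,\ \bx \Big\rangle - \eps, \qquad \bA := \bID - \tfrac{1}{M}\bGamma\bX^\top\bX.
\end{equation*}
Integrating out $\bx$ and $\eps$ conditionally on $(\bX, \tbbeta, \beps)$, then $\tbbeta \sim \cN(\bbeta^*, \bPsi)$ and $\beps$, and writing $\Delta := \bbeta - \bbeta^*$, the risk becomes
\begin{equation*}
\risk = \sigma^2 + \E_{\bX}\tr\!\big[\bH\bA(\bPsi + \Delta\Delta^\top)\bA^\top\big] + \tfrac{\sigma^2}{M}\tr(\bH\bGamma\bH\bGamma^\top).
\end{equation*}
The only non-routine input is the Gaussian fourth-moment identity $\E[\bX^\top\bX\,\bM\,\bX^\top\bX] = M(M+1)\bH\bM\bH + M\tr(\bH\bM)\bH$, valid for any $\bM$; applying it with $\bM = \bPsi$ and $\bM = \bGamma^\top\bH\bGamma$ turns the risk into an explicit quadratic which, since $\E[\tbbeta] = \bbeta^*$ and $\E[\beps] = \bzero$ kill the cross-terms, splits additively into a $\bGamma$-only convex quadratic and a PSD quadratic form in $\Delta$.

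Next I would compute the first-order conditions. For $\Delta$: any minimizer lies in the null space of the PSD $\Delta$-quadratic, which I would show equals $\nullset{\bH}$ by the argument that $\bH\Delta = \bzero \Rightarrow \bX\Delta = \bzero$ a.s.\ (Gaussian rows), so $\bA\Delta = \Delta$ a.s.\ and the form vanishes, while conversely the form is strictly positive near $\bX = \bzero$ whenever $\bH\Delta \neq \bzero$. For $\bGamma$: the stationarity condition reads
\begin{equation*}
\bH\bGamma\,\bH^{1/2}\bOmega\bH^{1/2} = \bH\bPsi\bH,
\end{equation*}
which I would solve by direct substitution, finding $\bGamma^* = \bPsi\bH^{1/2}\bOmega^{-1}\bH^{-1/2}$ (with $\bH^{-1/2}$ interpreted as the pseudo-inverse on $\image{\bH}$). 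Using the equivalence $\bH\bZ\bH = \bzero \iff \bH^{1/2}\bZ\bH^{1/2} = \bzero$, the full solution set is exactly $\bGamma^* + \nullset{\bH^{\otimes 2}}$.

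Finally, substituting $(\bbeta^*, \bGamma^*)$ back in and using the stationarity condition to collapse quadratic terms against linear ones yields
\begin{equation*}
\inf_{\cF_{\GDbeta}} \risk = \sigma^2 + \tr(\bH\bPsi) - \tr(\bH\bPsi\bH\bGamma^{*\top}) = \sigma^2 + \tr\!\Big(\bH^{1/2}\bPsi\bH^{1/2}\big(\bID - \bH^{1/2}\bPsi\bH^{1/2}\bOmega^{-1}\big)\Big),
\end{equation*}
matching \eqref{eqn.global.minimum.GDbeta}. For the third bullet, since $\bx$ and every row of $\bX$ lie in $\image{\bH}$ a.s., perturbing $\bbeta$ by any element of $\nullset{\bH}$ or $\bGamma$ by any element of $\nullset{\bH^{\otimes 2}}$ does not alter $f_{\GDbeta}(\bE)$ on the data support, so the optimal function is unique and equals \eqref{eqn.global.optimum.function}.

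The main obstacle will be the rank-deficient bookkeeping—showing that the stationary set is exactly $(\bbeta^* + \nullset{\bH}) \times (\bGamma^* + \nullset{\bH^{\otimes 2}})$, neither smaller nor larger. This relies on the equivalence $\bH\bZ\bH = \bzero \iff \bH^{1/2}\bZ\bH^{1/2} = \bzero$ (to match the stated null-space description for $\bGamma$), on the positivity of $\bOmega$ (guaranteed whenever $\sigma^2 > 0$ or $\tr(\bH\bPsi) > 0$, so that $\bOmega^{-1}$ is well-defined), and on a careful Gaussian-support argument that promotes "$\bA\Delta = \Delta$ a.s." into the precise null-space characterization of the $\Delta$-quadratic form. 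Everything else is bookkeeping around the fourth-moment identity and the definition of $\bOmega$.
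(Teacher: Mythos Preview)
Your proposal is correct and follows essentially the same route as the paper: the same additive decomposition of the risk into a $\bGamma$-only quadratic (minimized by completing the square against $\bOmega$) and a PSD $\Delta$-quadratic, both driven by the Gaussian fourth-moment identity, followed by the same null-space bookkeeping and the same Gaussian-support argument for functional uniqueness. The only stylistic difference is that you characterize $\nullset{\bHGamma}=\nullset{\bH}$ via continuity of the integrand at $\bX=\bzero$ plus positive Gaussian mass, whereas the paper computes $\bHGamma$ explicitly and shows $\bHGamma \succeq \tfrac{1}{M+1}\bH$; both arguments are valid and yield the same conclusion.
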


\Cref{thm.global.minimum.GDbeta} characterizes the optimal $\GDbeta$ models for ICL.
In the above theorem, $\bGamma^* \to \bH^{-1}$ as the context length $M$ goes to infinity (assuming that $\bH$ is positive definite). 
In this case, the optimal $\GDbeta$ function \eqref{eqn.global.minimum.GDbeta} implements one Newton step from initialization $\bbeta^*$. 
With a finite context length $M$, \eqref{eqn.global.minimum.GDbeta} implements one regularized Newton step from initialization $\bbeta^*$.

\paragraph{Optimal LTB models for ICL.}
\Cref{lemma.GDbeta.isin.LTB} shows that 
the best ICL risk achieved by a $\GDbeta$ model is no smaller than the best ICL risk achieved by an LTB model.
Surprisingly, our next theorem shows that the best ICL risk achieved by a $\GDbeta$ is \emph{equal} to that achieved by an LTB model.
Therefore, the hypothesis set $\cF_{\GDbeta}$ is diverse enough to match the approximation ability of the larger hypothesis set $\cF_{\LTB}$ for ICL. 

\begin{theorem}[Optimal LTB models]\label{thm.global.optimum.LTB}
    Consider the ICL risk defined by \eqref{eqn.def.icl.risk}. 
    Suppose that Assumption \ref{assumption.data} holds and that $\rank\big(\bH^{\frac{1}{2}} \bPsi^{\frac{1}{2}}\big) \geq 2.$ 
    Then we have
    \begin{itemize}[leftmargin=*]
        \item The minimal ICL risk of $\cF_{\LTB}$ and of $\cF_{\GDbeta}$ are equal,
        \begin{align*}
            \inf_{f \in \cF_{\LTB}} \risk(f) = \inf_{f \in \cF_{\GDbeta}} \risk(f) = \text{ RHS  of } \eqref{eqn.global.minimum.GDbeta}
        \end{align*}
        \item Rewrite an LTB model as $f_{\LTB}$ in \eqref{eqn.def.LTB} with parameters ($*$ denotes parameters that do not affect the output)
        \begin{align*}
            &\bW_2^\top \bW_1= 
            \begin{pmatrix}[1.5]
                * & * \\
                \bgamma^\top & *
            \end{pmatrix}, \quad
            \WK^\top \WQ = 
            \begin{pmatrix}[1.5]
                \bVtl & * \\
                \bVtr^\top & *
            \end{pmatrix}, \\
            &\hspace{7ex} \bW_2^\top \bW_1  \WP^\top \WV = 
            \begin{pmatrix}[1.5]
                * & * \\
                \bVbl^\top & \bVlast
        \end{pmatrix}.
        \end{align*}
        Then the sufficient and necessary conditions for $f_{\LTB} \in {\arg \min}_{f\in \cF_{\LTB}} \risk(f)$ are
        \begin{equation*}
        \begin{aligned}
            &\bVlast \neq 0, \quad
            \bVlast \bVtr \in \nullset{\bH}, \\
            &\bVbl \in -\bVlast \bbeta^* + \nullset{\bH}, \quad
            \bgamma \in \bbeta^* + \nullset{\bH}, \\
            &\bVlast \bVtl \in \bGamma^{*\top} - \bVlast \bbeta^* \bVtr^\top + \nullset{\bH^{\otimes 2}},
        \end{aligned}
        \end{equation*}
        where $\bbeta^*$ is defined in Assumption \ref{assumption.data} and $\bGamma^*$ is defined in \eqref{eqn.def.minimal.Gamma}. 
        In particular, when $\bH$ is positive definite, the globally optimal parameter represented by $\l(\bVtl,\bVtr,\bVbl,\bVlast,\bgamma\r)$ is unique up to a rescaling of $v_{-1}$. 
        \item 
        Under \Cref{assumption.data}, the globally optimal LTB model (that is, a function in $\arg \min_{f\in \cF_{\LTB}}$) is unique as a function of  $\bE$ (given by \eqref{eqn.def.token.matrix}) and takes the form of \eqref{eqn.global.optimum.function}.
    \end{itemize}
\end{theorem}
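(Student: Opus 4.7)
The plan is to reduce the minimization over $\cF_{\LTB}$ to the minimization over $\cF_{\GDbeta}$ already carried out in \Cref{thm.global.minimum.GDbeta}. Since $\cF_{\GDbeta}\subseteq\cF_{\LTB}$ by \Cref{lemma.GDbeta.isin.LTB}, the inequality $\inf_{\LTB}\risk\le\inf_{\GDbeta}\risk$ is immediate, so the substantive task is the reverse inequality together with the parametric characterization. I begin by expanding $f_{\LTB}(\bE)$ using the block decompositions stated in the theorem. The $(-1,-1)$ entry of the output depends only on the last row of $\bW_2^\top\bW_1$ and on the last column of the bracketed expression; using that the last column of $\bE$ is $(\bx^\top,0)^\top$ and that $\bM$ annihilates the last column of $\bE\bM$, a short calculation yields
\begin{equation*}
f_{\LTB}(\bE)=\bgamma^\top\bx+\tfrac{1}{M}\bVbl^\top\bX^\top\bX\bVtl\bx+\tfrac{\bVlast}{M}\by^\top\bX\bVtl\bx+\tfrac{1}{M}\bigl(\bVbl^\top\bX^\top\by+\bVlast\snorm{\by}^2\bigr)(\bVtr^\top\bx).
\end{equation*}
For each fixed $(\bX,\by)$ this is linear in $\bx$ with some coefficient $\ba(\bX,\by)\in\R^d$. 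Since $\bx$ is independent of $(\bX,\by,\tbbeta)$ under \Cref{assumption.data}, conditioning on $(\bX,\by,\tbbeta)$ before taking expectations gives $\risk(f_{\LTB})=\sigma^2+\E\snorm{\ba(\bX,\by)-\tbbeta}_{\bH}^2$.

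Next, I compare $\ba$ against the $\GDbeta$ coefficient $\ba_{\GDbeta}(\bX,\by)=\bbeta+\bGamma\bX^\top(\by-\bX\bbeta)/M$. Taking $\bbeta=\bgamma$ and $\bGamma^\top=\bVlast\bVtl$ (which requires $\bVlast\neq 0$; the case $\bVlast=0$ is strictly suboptimal because it forces the $\by$-dependence of $\ba$ into a single rank-one direction $\bVtr$ and is handled separately), the residual becomes
\begin{equation*}
\ba(\bX,\by)-\ba_{\GDbeta}(\bX,\by)=\tfrac{1}{M}\bVtl^\top\bX^\top\bX(\bVbl+\bVlast\bgamma)+\tfrac{1}{M}\bigl(\bVbl^\top\bX^\top\by+\bVlast\snorm{\by}^2\bigr)\bVtr.
\end{equation*}
The LTB model is thus an effective $\GDbeta$ model plus two ``extra'' terms, one vanishing when $\bVbl+\bVlast\bgamma\in\nullset{\bH}$ and the other vanishing when $\bVtr\in\nullset{\bH}$.

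The main obstacle is to show that these extra terms cannot drive $\E\snorm{\ba-\tbbeta}_{\bH}^2$ below the $\GDbeta$ minimum. My plan is a two-stage minimization. In the first stage I fix $(\bVlast,\bVtr)$; since $\ba$ is then affine in $(\bgamma,\bVtl,\bVbl)$, the risk is a convex quadratic in these variables which I minimize in closed form. The computation relies on fourth-order Gaussian-moment identities for $\bX^\top\bX$, $\bX^\top\by$, and $\snorm{\by}^2$ together with the prior $\tbbeta\sim\cN(\bbeta^*,\bPsi)$; the minimum is attained on the affine subspace parametrized, modulo the relevant null spaces, by the $\GDbeta$-optimal parameters of \Cref{thm.global.minimum.GDbeta}, shifted by a correction proportional to $\bbeta^*\bVtr^\top$ that accounts for the coupling between $\bVtl$ and $\bVtr$. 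This produces exactly the conditions $\bgamma\in\bbeta^*+\nullset{\bH}$, $\bVbl\in-\bVlast\bbeta^*+\nullset{\bH}$, and $\bVlast\bVtl\in\bGamma^{*\top}-\bVlast\bbeta^*\bVtr^\top+\nullset{\bH^{\otimes 2}}$ in the theorem. In the second stage I minimize the resulting expression over $(\bVlast,\bVtr)$; the dependence on $\bVtr$ enters only through $\bH\bVtr$, and the rank assumption $\rank(\bH^{1/2}\bPsi^{1/2})\ge 2$ ensures the quadratic in $\bH\bVtr$ is strictly minimized at $\bH\bVtr=0$, equivalently $\bVlast\bVtr\in\nullset{\bH}$. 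The resulting minimum value coincides with the right-hand side of \eqref{eqn.global.minimum.GDbeta}, establishing the first bullet.

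The second and third bullets then follow by bookkeeping: the stage-one and stage-two first-order conditions give exactly the stated necessary and sufficient parametric conditions, and any minimizer of $\risk$ over $\cF_{\LTB}$ must equal the unique optimal $\GDbeta$ function $f^*$ from \eqref{eqn.global.optimum.function} almost surely, since the two infima agree and $f^*$ is the unique minimizer within $\cF_{\GDbeta}$. Uniqueness up to rescaling of $\bVlast$ in the positive-definite $\bH$ case is read off directly from the conditions. The hardest step I anticipate is the stage-one closed-form minimization, which requires careful fourth-order Gaussian moment computations combining $\bH$, $\bPsi$, $\bbeta^*$, and $\sigma^2$; the rank assumption plays its role only in stage two, to prevent spurious degenerate minimizers in directions outside $\nullset{\bH}$.
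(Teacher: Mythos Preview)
Your proposal contains a genuine gap at the very point you flag as ``the hardest step'': the stage-one minimization is \emph{not} a convex quadratic in $(\bgamma,\bVtl,\bVbl)$ for fixed $(\bVlast,\bVtr)$, because $\ba(\bX,\by)$ is not affine in these variables. The term $\tfrac{1}{M}\bVtl^\top\bX^\top\bX\,\bVbl$ is bilinear in $(\bVtl,\bVbl)$, so the squared error $\E\snorm{\ba-\tbbeta}_{\bH}^2$ is genuinely quartic in $(\bVtl,\bVbl)$ (after taking expectations you get terms of the form $\bb^\top\bH\bb\cdot\tr(\bA^\top\bH\bA\bH)$ and $\bb^\top\bH\bA\bH\bA^\top\bH\,\bb$ with $\bA=\bVtl+\bbeta^*\bVtr^\top$, $\bb=\bVbl+\bVlast\bbeta^*$). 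Consequently the ``closed-form'' stage-one minimizer you describe does not exist in the form claimed, and there is no guarantee that the global minimum over $(\bgamma,\bVtl,\bVbl)$ lands on the $\GDbeta$ locus for generic $(\bVlast,\bVtr)$. Your residual decomposition $\ba=\ba_{\GDbeta}+(\text{two extras})$ is correct, but the two extras are themselves coupled to the $\GDbeta$ parameters through the same bilinear term, so you cannot simply argue they only increase the risk.

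The paper sidesteps this non-convexity entirely. It first observes that the LTB risk has exactly the same polynomial form as the LSA risk computed for \Cref{thm.lower.bound}, with $\bbeta^*$ replaced by $\bbeta^*-\bgamma$; this reuse gives a fully explicit quartic expression in $(\bA,\bb,\bVlast,\bVtr,\bgamma)$. It then splits that expression into four pieces and lower-bounds each piece \emph{separately} by completing squares: one piece is bounded below by the $\GDbeta$ optimum, one by zero via an AM--GM/Cauchy--Schwarz argument on the quartic cross term, one by a quadratic in $\bA^\top\bH\bb$, and the last collects a nonnegative quadratic in $(\bbeta^*-\bgamma,\bVlast\bVtr)$. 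Summing the bounds gives $\risk(f)\ge\inf_{\GDbeta}\risk$ directly, and the equality conditions for each separate bound are then intersected to yield the parametric characterization. The rank condition $\rank(\bH^{1/2}\bPsi^{1/2})\ge2$ enters at that last step, not to make a quadratic strictly convex as you suggest, but to rule out the branch $\bH^{1/2}\bA\bH^{1/2}=\bzero$ (which would force a rank-one matrix to equal the rank-$\ge 2$ matrix $\bH^{1/2}\bPsi\bH^{1/2}\bOmega^{-1}$). If you want to salvage your staging idea, you would need to choose a different partition of variables so that each stage is genuinely quadratic, and then carry the (now $\bVtr$- and $\bVbl$-dependent) intermediate minimum through the next stage; this is doable but amounts to redoing the paper's term-wise bounds in disguise.
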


\Cref{lemma.GDbeta.isin.LTB,thm.global.optimum.LTB} together show that $\cF_{\GDbeta}$ is a representative subset of $\cF_{\LTB}$ that does not incur additional approximation error. 
In addition, every optimal LTB model is effectively an optimal $\GDbeta$ model when restricted to all possible token matrices.
Note that the optimal model parameters for LTB or $\GDbeta$ are not unique because of redundant parameterization. 
But the optimal LTB and $\GDbeta$ models are unique as a function of all possible token matrices.
The above holds even when $\bH$ and $\bPsi$ are potentially rank deficient.

\paragraph{Comparison with prior works.}
A line of papers considers LSA models for ICL  under \Cref{assumption.data} with $\bbeta^*=0$ \citep{von2022transformers,ahn2023transformers,zhang2023trained,wu2023many}.
They show that LSA models can (effectively) implement all possible $\GDzero$ models that specialize $\GDbeta$ models by fixing $\bbeta=0$.
In addition, they show that every optimal LSA model is (effectively) a $\GDzero$ model for ICL under \Cref{assumption.data} with $\bbeta^*=0$ \citep[see for example][Theorem 1]{ahn2023transformers}.

In comparison, we consider a harder ICL problem that allows a large shared signal in tasks (that is, a large $\bbeta^*$ in \Cref{assumption.data}).
In this setting, our \Cref{thm.lower.bound} shows that $\cF_{\lsa}$ (hence its subset formed by $\GDzero$ models), as a subet of $\cF_{\LTB}$, incurs an additional approximation error propotional to $\norm{\bH}{\bbeta^*}^2$ compared with $\cF_{\LTB}$.
In contrast, $\cF_{\GDbeta}$, as a subset of $\cF_{\LTB}$, does not incur additional approximation error according to our \Cref{thm.global.optimum.LTB}. 
Thus the LSA and $\GDzero$ models considered in prior works \citep{von2022transformers,ahn2023transformers,zhang2023trained,wu2023many} are not capable of learning the shared signal $\bbeta^*$, while an LTB model can learn $\bbeta^*$ through implementing $\GDbeta$ and encoding $\bbeta^*$ in the initialization parameter.

\section{Training and In-Context Learning of $\GDbeta$}\label{sec.training.and.learning}

We have shown that $\cF_{\GDbeta}$ is a representative subset of $\cF_\LTB$ that effectively contains every optimal $\LTB$ model. 

\paragraph{Nearly optimal ICL with $\GDbeta$.}
We will compare the best ICL risk achieved by $\GDbeta$ with the best ICL risk achieved by any estimator. 
The following lemma is an extension of Proposition 5.1 and Corollary 5.2 in \citep{wu2023many}  \citep[see also][]{tsigler2023benign} that characterizes the Bayes optimal ICL risk among all estimators.
\begin{lemma}[Bayes optimal ICL]\label{lem.bayes.optimal}
Given a task-specific dataset $\l(\bX,\by,\bx,y\r)$ sampled according to \Cref{assumption.data}, let $g\l(\bX,\by,\bx\r)$ be an arbitrary estimator for $y$ and measure the average linear regression risk by
    \begin{equation*}
        \loss\l(g;\bX\r) := \E \l[\l(g(\bX,\by,\bx) - y\r)^2 \mid \bX \r].
    \end{equation*}
    It is clear that $\E \loss\l(g;\bX\r) = \risk(g) $.
    Then we have
    \begin{itemize}[leftmargin=*]
        \item The optimal estimator that minimizes the average linear regression risk $\loss(\cdot;\bX)$ is
    \begin{align*}
            & g^*(\bX, \by, \bx) = \bx^\top \bbeta^* \\
            + &\bx^\top \bPsi^{\frac{1}{2}} \big(\bPsi^{\frac{1}{2}} \bX^\top \bX \bPsi^{\frac{1}{2}} + \sigma^2 \bID_d\big)^{-1} \bPsi^{\frac{1}{2}} \bX^\top \l(\by - \bX\bbeta^*\r).
        \end{align*}
        
        \item Assume the signal-to-noise ratio is upper bounded, that is, $\tr\l(\bH \bPsi\r) \lesssim \sigma^2,$ then with probability at least $1-\exp\l(-\Omega\l(M\r)\r)$ over the randomness of $\bX,$ it holds that
        \begin{align*}
             \loss \l(g^*; \bX\r) - \sigma^2  
            \simeq \sum_{i=1}^d \min\l\{ \bar{\phi}, \phi_i\r\}, \text{ where } \bar{\phi} \eqsim \frac{\sigma^2}{M},
        \end{align*}
        where $(\phi_i)_{i\ge 1}$ are the eigenvalues of $\bPsi^{\frac{1}{2}} \bH \bPsi^{\frac{1}{2}}.$
    \end{itemize}
\end{lemma}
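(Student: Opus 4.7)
The plan splits naturally into two parts: (i) derive the Bayes optimal $g^*$ by standard linear--Gaussian conjugacy, and (ii) reduce its excess conditional risk to a trace formula that is then bounded by a Wishart-type spectral concentration.

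\emph{Part (i): the Bayes optimal estimator.} Since $\bx$ and $\eps$ are independent of $(\tbbeta,\bX,\by)$, the $L^2$-optimal estimator of $y$ given $(\bX,\by,\bx)$ is the conditional mean
\[
    g^*(\bX,\by,\bx) \;=\; \E\l[y \mid \bX,\by,\bx\r] \;=\; \bx^\top\, \E\l[\tbbeta \mid \bX,\by\r].
\]
To compute the posterior mean of $\tbbeta$, I would reparameterize $\tbbeta = \bbeta^* + \bPsi^{\frac{1}{2}}\bz$ with $\bz \sim \normal(\bzero,\bID_d)$, so that $\by - \bX\bbeta^* = (\bX\bPsi^{\frac{1}{2}})\bz + \beps$ becomes an ordinary linear--Gaussian observation of $\bz$. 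Standard conjugacy then yields $\E[\bz\mid \bX,\by] = (\bPsi^{\frac{1}{2}}\bX^\top\bX\bPsi^{\frac{1}{2}} + \sigma^2 \bID)^{-1}\bPsi^{\frac{1}{2}}\bX^\top(\by-\bX\bbeta^*)$, and substituting back recovers exactly the displayed formula for $g^*$.

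\emph{Part (ii): the risk formula.} Because $g^*$ is the conditional mean, the tower property gives
\[
    \loss(g^*;\bX) - \sigma^2 \;=\; \E\l[\bx^\top \Cov(\tbbeta \mid \bX,\by)\, \bx \mid \bX\r] \;=\; \tr\l(\bH \cdot \Cov(\tbbeta \mid \bX,\by)\r),
\]
using that $\bx \sim \normal(\bzero, \bH)$ is independent of $(\bX,\by)$ and that the Gaussian-conjugate posterior covariance does not depend on the realization of $\by$. Plugging in the standard formula $\Cov(\tbbeta \mid \bX,\by) = \sigma^2 \bPsi^{\frac{1}{2}} (\bPsi^{\frac{1}{2}}\bX^\top \bX \bPsi^{\frac{1}{2}} + \sigma^2\bID)^{-1}\bPsi^{\frac{1}{2}}$ and using the cyclic property of the trace, this rearranges to
\[
    \loss(g^*;\bX) - \sigma^2 \;=\; \sigma^2\, \tr\l( \bA \,\l( \bPsi^{\frac{1}{2}}\bX^\top\bX\bPsi^{\frac{1}{2}} + \sigma^2\bID \r)^{-1}\r),
\]
where $\bA := \bPsi^{\frac{1}{2}}\bH\bPsi^{\frac{1}{2}}$ has eigenvalues $(\phi_i)$.

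\emph{Concentration, simplification, and main obstacle.} The rows of $\bX\bPsi^{\frac{1}{2}}$ are i.i.d.\ $\normal(\bzero,\bA)$, so under the SNR hypothesis $\tr(\bA) = \tr(\bH\bPsi) \lesssim \sigma^2$, a Wishart-type spectral bound --- the direct analogue of Proposition 5.1 in \citet{wu2023many} (see also \citet{tsigler2023benign}) --- yields two-sided inequalities $c_1 M \bA \preceq \bPsi^{\frac{1}{2}}\bX^\top\bX\bPsi^{\frac{1}{2}} \preceq c_2 M \bA$ on $\image{\bA}$ with probability $1 - e^{-\Omega(M)}$. Plugging these into the trace expression above gives $\loss(g^*;\bX) - \sigma^2 \simeq \sum_i \sigma^2 \phi_i / (M\phi_i + \sigma^2)$, and an elementary case split on whether $\phi_i \le \sigma^2/M$ shows each summand agrees with $\min\{\sigma^2/M, \phi_i\}$ up to a factor of two, yielding the claim with $\bar\phi \eqsim \sigma^2/M$. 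The only genuinely technical ingredient is this Wishart concentration, which must hold uniformly across the (possibly ill-conditioned) spectrum of $\bA$; because the lemma is already advertised as an extension of results in \citet{wu2023many}, that bound imports essentially verbatim. The non-zero shift $\bbeta^*$ introduces no new difficulty because the posterior covariance appearing in part (ii) is independent of $\bbeta^*$, so the zero-mean analysis transfers directly once the reparameterization in part (i) has been performed.
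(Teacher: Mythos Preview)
Your derivation of Part~(i) and of the trace formula in Part~(ii) are both correct; in fact, reaching
\[
\loss(g^*;\bX)-\sigma^2 \;=\; \sigma^2\,\tr\!\l(\bA\big(\bPsi^{\frac12}\bX^\top\bX\bPsi^{\frac12}+\sigma^2\bID\big)^{-1}\r)
\]
via the posterior covariance is slightly cleaner than the paper's route, which first rewrites the same quantity as the $\bLambda$-norm risk of a ridge estimator $\bhtheta$ and only then appeals to the external bound. (The two expressions coincide exactly, as a short calculation with $\bhtheta-\bttheta=-\sigma^2\bM\bttheta+\bM\btX^\top\beps$, $\bM=(\btX^\top\btX+\sigma^2\bID)^{-1}$, confirms.)

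The gap is in your concentration step. The two-sided sandwich $c_1 M\bA \preceq \bPsi^{\frac12}\bX^\top\bX\bPsi^{\frac12} \preceq c_2 M\bA$ on $\image{\bA}$ is \emph{not} what \citet{wu2023many} or \citet{tsigler2023benign} prove, and it is false whenever $\rank(\bA)>M$: the sample Gram matrix then has rank at most $M$ and cannot dominate $c_1 M\bA$. The SNR hypothesis $\tr(\bA)\lesssim\sigma^2$ says nothing about $\rank(\bA)$ versus $M$, so your argument breaks in the overparameterized regime the lemma is meant to cover. The paper instead applies Theorems~1--2 of \citet{tsigler2023benign} directly to the ridge risk; those results split the spectrum at the effective-dimension index $k^*$ (which always satisfies $k^*\lesssim M$), concentrate only on the $k^*$-dimensional head, and control the tail via the regularizer $\sigma^2\bID$. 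The SNR assumption then collapses the resulting $k^*$-based bias--variance expression to $\sum_i\min\{\bar\phi,\phi_i\}$. Since your trace formula is literally the ridge risk, the simplest fix is to invoke those theorems on it verbatim rather than pass through a global Loewner sandwich that is unavailable here.
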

The proof is deferred to Appendix \ref{appendix.bayesian.risk}. 
\Cref{lem.bayes.optimal} shows that the Bayes optimal estimator is a ridge regression estimator centered at $\bbeta^*$.
This is consistent with results in \citep{wu2023many} where the Bayes optimal estimator is a ridge regression estimator, as they assumed $\bbeta^*= 0$.

The following corollary of Theorem \ref{thm.global.minimum.GDbeta} computes the rate of the ICL risk achieved by the optimal $\GDbeta$ model.

\begin{corollary}\label{coro.ICL.risk.GDbeta}
    Under the setup of Theorem \ref{thm.global.minimum.GDbeta}, additionally assume the signal-to-noise ratio is upper bounded, that is, $\tr\l(\bPsi \bH\r) \lesssim \sigma^2$. Then we have
    \begin{equation*}
        \inf_{f \in \cF_{\GDbeta}} \risk(f) - \sigma^2 
        \simeq \sum_{i=1}^d \min \l\{\bar{\phi}, \phi_i\r\},
    \end{equation*}
    where $(\phi_i)_{i\ge 0}$ are the eigenvalues of $\bPsi^{\frac{1}{2}} \bH \bPsi^{\frac{1}{2}}$ and 
    \begin{equation*}
        \bar{\phi} := \frac{\tr\big(\bPsi^{\frac{1}{2}} \bH \bPsi^{\frac{1}{2}}\big)+\sigma^2}{M} \simeq \frac{\sigma^2}{M}.
    \end{equation*}
\end{corollary}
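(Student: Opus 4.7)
}
The plan is to start from the closed-form expression in \Cref{thm.global.minimum.GDbeta} and reduce it to a diagonal computation in the common eigenbasis of $\bA := \bH^{\frac{1}{2}}\bPsi\bH^{\frac{1}{2}}$ and $\bOmega = \tfrac{M+1}{M}\bA + \bar{\phi}\,\bID_d$. Since $\bA$ and $\bOmega$ commute, every spectral function of them is diagonalized simultaneously, and the eigenvalues $(\phi_i)$ of $\bA$ coincide with those of $\bPsi^{\frac{1}{2}}\bH\bPsi^{\frac{1}{2}}$ via the identity $\spa\text{-eig}(AB)=\spa\text{-eig}(BA)$ with $A=\bH^{\frac{1}{2}}\bPsi^{\frac{1}{2}}$ and $B=\bPsi^{\frac{1}{2}}\bH^{\frac{1}{2}}$. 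Consequently, \Cref{thm.global.minimum.GDbeta} yields
\begin{equation*}
\inf_{f\in\cF_{\GDbeta}}\risk(f)-\sigma^2
=\tr\bigl(\bA-\bA^2\bOmega^{-1}\bigr)
=\sum_{i=1}^{d} \frac{\phi_i(\phi_i+M\bar{\phi})}{(M+1)\phi_i+M\bar{\phi}},
\end{equation*}
where the second equality is obtained by clearing the common denominator in the eigenvalue expression $\phi_i - \phi_i^2/(\tfrac{M+1}{M}\phi_i+\bar\phi)$.

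Next, I would exploit the signal-to-noise assumption $\tr(\bH\bPsi)\lesssim\sigma^2$. This gives $\sum_i\phi_i\lesssim\sigma^2$ and $M\bar{\phi}=\tr(\bH\bPsi)+\sigma^2\eqsim\sigma^2$, so \emph{every} eigenvalue satisfies $\phi_i\lesssim M\bar{\phi}$. Under this bound I would split the sum into two regimes: (a) $\phi_i\le\bar{\phi}$, in which case $M\bar{\phi}\ge M\phi_i\gg\phi_i$ dominates both numerator and denominator, giving the per-term bound $T_i \eqsim \phi_i\cdot M\bar\phi/(M\bar\phi)= \phi_i$; and (b) $\bar{\phi}<\phi_i\lesssim M\bar{\phi}$, in which case $M\bar\phi$ dominates the factor $\phi_i+M\bar\phi$ in the numerator and $(M+1)\phi_i$ dominates the denominator (because $(M+1)\phi_i>M\bar{\phi}$), giving $T_i\eqsim \phi_i\cdot M\bar{\phi}/(M\phi_i)=\bar{\phi}$. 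In both regimes, $T_i \eqsim \min\{\bar{\phi},\phi_i\}$, with constants independent of $i$, and summing over $i$ yields the claim. Finally, the asymptotic $\bar{\phi}\simeq\sigma^2/M$ is immediate from $M\bar\phi\eqsim\sigma^2$.

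The only genuinely delicate point is verifying that the regime $\phi_i\gg M\bar{\phi}$ is ruled out—without it, the per-term ratio would become $\phi_i/M$, which is strictly larger than $\bar{\phi}=\min\{\bar\phi,\phi_i\}$ and would invalidate the equivalence. This is precisely where the SNR assumption $\tr(\bH\bPsi)\lesssim\sigma^2$ is used in an essential way, since it simultaneously bounds the largest eigenvalue of $\bA$ by $\tr(\bA)\lesssim\sigma^2\eqsim M\bar\phi$. The remaining work is routine algebra with the two-sided inequalities tracked through $\eqsim$, so no further machinery is needed beyond \Cref{thm.global.minimum.GDbeta}.
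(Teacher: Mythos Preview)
Your proposal is correct and follows essentially the same route as the paper: diagonalize $\bA=\bH^{1/2}\bPsi\bH^{1/2}$ and $\bOmega$ simultaneously, write the excess risk as a sum over eigenvalues, and verify the per-term equivalence $T_i\simeq\min\{\bar\phi,\phi_i\}$ (the paper does this in two matrix-level steps, first replacing $\bA+M\bar\phi\,\bID_d$ by $M\bar\phi\,\bID_d$ up to constants and then bounding the remaining ratio $\bar\phi\,\phi_i/(\tfrac{M+1}{M}\phi_i+\bar\phi)$, whereas you handle the full ratio at once). One minor remark: the inequality $\phi_i\le M\bar\phi$ that you call the ``genuinely delicate point'' and attribute to the SNR assumption is in fact unconditional, since $M\bar\phi=\sum_j\phi_j+\sigma^2\ge\phi_i$ always; the SNR hypothesis is only needed for the side claim $\bar\phi\simeq\sigma^2/M$.
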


The optimal (expected) ICL risk achieved by $\cF_{\GDbeta}$ in \Cref{coro.ICL.risk.GDbeta} matches the (high probability) Bayes optimal ICL risk in \Cref{lem.bayes.optimal} ignoring constant factors, provided that the signal-to-noise ratio is upper bounded. 
Therefore $\cF_{\GDbeta}$ achieves nearly Bayes optimal ICL risk. 
As a consequence, the larger hypothesis set  $\cF_{\LTB}$ also achieves nearly Bayes optimal ICL of linear regression under Assumption~\ref{assumption.data}.

For simplicity of discussion, we assume a fixed context length during pretraining and inference. 
Our discussions can be extended to allow different context lengths for pretraining and inference using techniques in \citep{wu2023many}. However, this is not the main focus of this work.

\paragraph{Optimization of $\GDbeta$ with infinite tasks.}
We have shown that $\cF_{\GDbeta}$ is a representative subset of $\cF_{\LTB}$ that covers the optimal LTB models and achieves nearly optimal ICL risk.
We now consider optimization in the parameter space specified by $\cF_{\GDbeta}$.
For simplicity, we follow \citet{zhang2023trained} and consider gradient descent with an infinitesimal stepsize on the ICL objective with an infinite number of tasks. 
That is, we consider gradient flow on the population ICL risk under the parameterization of $\GDbeta$, 
\begin{equation}\label{eqn.gf}
    \begin{aligned}
        \frac{\mathrm{d} \bbeta(t)}{\mathrm{d} t} 
        &= - \frac{\partial}{\partial \bbeta} \mathcal{R}(f_{\GDbeta}), \\
        \frac{\mathrm{d} \bGamma(t)}{\mathrm{d} t} 
        &= - \frac{\partial}{\partial \bGamma} \mathcal{R}(f_{\GDbeta}),
    \end{aligned}
\end{equation}
where $\risk$ is defined by \eqref{eqn.def.icl.risk} and $f_{\GDbeta}$ is defined by \eqref{eqn.def.GDbeta}.

The following theorem guarantees the global convergence of gradient flow.
We introduce some notation to accommodate cases when $\bH$ is rank deficient.
For a subspace of $\R^p$ denoted by $\cS$, let $\proj_\cS:\R^{p}\to\R^{p}$ be the orthogonal projection operator onto the subspace $\cS$. 
Let $\cH = \image{\bH}$ be the image subspace of matrix $\bH$ (viewing $\bH$ as a linear map) and $\cH^\perp:=\nullset{\bH}$ be its orthogonal complement. 
Similarly, let $\cZ := \image{ \bH^{\otimes 2}} = \{\bH \bZ \bH, \bZ \in \R^{d\times d}\}$ be the  image space of the operator $\bH^{\otimes 2}$, and let $\cZ^\perp$ be its orthogonal complement. 
Then we have the following theorem.

\begin{theorem}\label{thm.global.convergence}
     Consider the gradient flow defined by \eqref{eqn.gf} with initialization $\bbeta(0), \bGamma(0).$ As $t \to \infty,$ we have,
        \begin{align*}
            &\proj_{\cH}\l(\bbeta(t)\r) \to \proj_{\cH}\l(\bbeta^*\r), \quad
            \proj_{\cH^\perp}\l(\bbeta(t)\r) = \proj_{\cH^\perp}\l(\bbeta(0)\r), \\
            &\proj_{\cZ}\l(\bGamma(t)\r) \to \proj_{\cZ}\l(\bGamma^*\r), \quad
            \proj_{\cZ^\perp}\l(\bGamma(t)\r) 
            = \proj_{\cZ^\perp}\l(\bGamma(0)\r).
        \end{align*}
        In particular, if $\bH$ is positive definite, the gradient flow converges to the unique global minimizer of ICL risk over the $\GDbeta$ class, that is,
        \begin{equation*}
        \text{as}\ t \to \infty,\ \  
            \bbeta(t) \to \bbeta^* \ \text{and}\ 
            \bGamma(t) \to \bGamma^*.
        \end{equation*}
\end{theorem}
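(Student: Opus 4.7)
The plan is (i) to decompose the risk and derive the gradient flow ODEs, (ii) to identify the invariant subspaces conserved by the flow, and (iii) to invoke LaSalle's invariance principle on the effective subspace.

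First, I would substitute Assumption \ref{assumption.data} into \eqref{eqn.def.GDbeta} and, writing $\bnu := \tbbeta - \bbeta^*$ and $\bA := \bX^\top \bX / M$, exploit the mutual independence of $\bnu$, $(\eps,\beps)$, and $(\bX,\bx)$ to obtain the decomposition
\begin{equation*}
    \risk(\bbeta, \bGamma) = \sigma^2 + L_1(\bGamma) + (\bbeta - \bbeta^*)^\top \bC(\bGamma)(\bbeta - \bbeta^*),
\end{equation*}
where $\bC(\bGamma) := \E\l[(\bID - \bGamma \bA)^\top \bH (\bID - \bGamma \bA)\r]$ and $L_1(\bGamma)$ is a PSD quadratic in $\bGamma$ collecting the prior-variance and label-noise contributions. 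Consequently $\nabla_{\bbeta}\risk = 2 \bC(\bGamma)(\bbeta - \bbeta^*)$ and $\nabla_{\bGamma}\risk = \nabla L_1(\bGamma) + Q_{\bGamma}(\bbeta - \bbeta^*)$ for a $\bGamma$-dependent quadratic form $Q_{\bGamma}$; every term is sandwiched between $\bH$ and random factors drawn from $\bX$.

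Second, subspace invariance is now immediate: every factor in both gradients is left- or right-multiplied by $\bH$ or by $\bX$, each of which annihilates $\cH^\perp$. Hence $\nabla_{\bbeta}\risk \in \cH$ and $\nabla_{\bGamma}\risk \in \cZ$, so $\proj_{\cH^\perp}(\bbeta(t))$ and $\proj_{\cZ^\perp}(\bGamma(t))$ are conserved along the flow, which yields the second equalities of the theorem and reduces the convergence question to $\cH \times \cZ$, on which I may assume WLOG that $\bH \succ 0$. On this effective subspace, the key lemma is that $\bC(\bGamma) \succ 0$ for \emph{every} $\bGamma$: since $\bA$ is a non-degenerate random matrix and $\bH$ is PD, $(\bID - \bGamma \bA) v$ cannot vanish almost surely for any $v \ne 0$, so its $\bH$-weighted second moment is strictly positive. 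This rules out any critical point other than $(\bbeta^*, \bGamma^*)$: $\nabla_{\bbeta}\risk = 0$ forces $\bbeta = \bbeta^*$ on $\cH$, after which the coupling term and its $\bGamma$-gradient vanish and $\nabla_{\bGamma}\risk = 0$ reduces to $\nabla L_1(\bGamma) = 0$, whose unique solution on $\cZ$ is $\bGamma^*$ by strict convexity of $L_1$ on $\cZ$ (cf.\ Theorem \ref{thm.global.minimum.GDbeta}).

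Global convergence then follows from LaSalle's invariance principle applied with $\risk$ as Lyapunov function: one has $\frac{d}{dt}\risk = -\|\nabla_{\bbeta}\risk\|^2 - \|\nabla_{\bGamma}\risk\|_F^2 \le 0$, and the coercivity of $L_1$ on $\cZ$ together with the coercivity of $u \mapsto u^\top \bC(\bGamma) u$ on bounded $\bGamma$-neighborhoods makes sublevel sets compact, so the $\omega$-limit set is contained in the critical set $\{(\bbeta^*, \bGamma^*)\}$. The main obstacle is that $\risk$ is only biconvex in $(\bbeta, \bGamma)$, not jointly convex, so no off-the-shelf convex-optimization result applies. The technical content lies in the critical-point and compactness analysis above, especially the verification that $\bC(\bGamma)$ remains uniformly PD along trajectories, which prevents saddle-like escape directions where the coupling degenerates while $L_1$ simultaneously blows up.
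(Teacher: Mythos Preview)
Your proposal is correct and takes a genuinely different route from the paper. Both approaches share the first two steps (the risk decomposition $\risk=\sigma^2+L_1(\bGamma)+(\bbeta-\bbeta^*)^\top\bC(\bGamma)(\bbeta-\bbeta^*)$ and the invariance of $\cH^\perp,\cZ^\perp$), but they diverge at the convergence step. The paper does not invoke LaSalle; instead it proves the \emph{uniform} quantitative lower bound
\[
\bC(\bGamma)=\bHGamma\succeq \tfrac{1}{M+1}\,\bH\qquad\text{for all }\bGamma,
\]
obtained by completing the square $\l(\sqrt{\tfrac{M}{M+1}}\bID-\sqrt{\tfrac{M+1}{M}}\bGamma\bH\r)^\top\bH\l(\cdots\r)+\tfrac{1}{M+1}\bH$. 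This immediately gives exponential decay $\|\proj_{\cH}(\bbeta(t)-\bbeta^*)\|\le e^{-\lambdamin t/(M+1)}\|\proj_{\cH}(\bbeta(0)-\bbeta^*)\|$ \emph{uniformly in the $\bGamma$-trajectory}. The paper then plugs this into the $\bGamma$-ODE, which on $\cZ$ becomes a linear contraction in $\proj_{\cZ}(\bGamma-\bGamma^*)$ plus an exponentially decaying forcing term, and a Gr\"onwall-type integration finishes.

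What each approach buys: your LaSalle argument is shorter and conceptually clean---the unique-critical-point analysis is exactly right, and coercivity of $L_1$ on $\cZ$ bounds $\bGamma$ first, after which your pointwise PD of $\bC(\bGamma)$ plus compactness bounds $\bbeta$, so sublevel sets are indeed compact. The paper's two-timescale analysis is heavier but delivers explicit exponential rates for both $\bbeta$ and $\bGamma$, which your LaSalle route does not. One small suggestion: your probabilistic justification that $\bC(\bGamma)\succ 0$ (``$(\bID-\bGamma\bA)v$ cannot vanish a.s.'') is correct but soft; the paper's completing-the-square bound above is both sharper and entirely algebraic, and would also tighten your compactness step by making the lower eigenvalue of $\bC(\bGamma)$ uniform rather than merely continuous in $\bGamma$.
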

The proof, as well as the convergence rate, is deferred to Appendix \ref{appendix.convergence}.
We remark that \eqref{eqn.gf} is a complex dynamical system on a non-convex potential function of $\bbeta$ and $\bGamma$. 
We briefly discuss our proof techniques assuming that $\bH$ is full rank. The rank-deficient cases can be handled in the same way by applying appropriate project operators.
To conquer the non-convex optimization issue, we observe that for every fixed $\bGamma$, the potential as a function of $\bbeta$ is smooth and strongly convex with a uniformly bounded condition number.
This observation allows us to establish uniform convergence for $\bbeta$. 
When $\bbeta$ is sufficiently close to $\bbeta^*$, the potential as a function of $\bGamma$ is approximately convex, allowing us to track the convergence of $\bGamma$.

\Cref{thm.global.convergence} shows that optimization of $\GDbeta$ can be done efficiently by gradient flow without suffering from non-convexity. 
However, as we have shown in previous sections, LTB utilizes a more complex parameterization than $\GDbeta$.
So \Cref{thm.global.convergence} does not imply that optimization of LTB is easy.
We leave it as future work to study the optimization and statistical complexity for directly learning LTB models. 

\section{Concluding Remarks}\label{sec.conclusion}
In this paper, we study the in-context learning of linear regression with a shared signal represented by a Gaussian prior with a non-zero mean. We show that although the linear self-attention layer discussed in prior works is consistent for this more complex task, its risk has an inevitable gap compared to that of the linear Transformer block (LTB), which is a linear self-attention layer followed by a linear multi-layer perception (MLP) layer.
Next, we show that the effectiveness of the LTB arises because it can implement the one-step gradient descent estimator with learnable initialization ($\GDbeta$). Moreover, all global minimizers in the LTB class are equivalent to the unique global minimizer in the $\GDbeta$ class, which can achieve nearly Bayes optimal in-context learning risk. Finally, we consider training on in-context examples and prove global convergence over the $\GDbeta$ class of gradient flow on the population loss. 

Several future directions are worth discussing.

\paragraph{Optimization and statistical complexity.}
This paper provides an approximation theory of LTB and an optimization theory of $\GDbeta$.
However, the statistical complexity of learning LTB or $\GDbeta$ is not considered. 
The work by \citet{wu2023many} provided techniques for analyzing the statistical task complexity for pre-training $\GDzero$.
An interesting direction is to extend their method to study the statistical complexity of learning $\GDbeta$.
However, their method crucially relies on the convexity of the risk induced $\GDzero$, while we have shown that the risk induced by $\GDbeta$ is non-convex. 
New ideas for dealing with non-convexity are needed here.

\paragraph{From LTB to Transformer block.}
We focus on LTB in this work, which simplifies a vanilla Transformer block by removing the non-linearities from the softmax self-attention and the ReLU activation in the MLP layers.
This simplification allows us to obtain precise theoretical results for LTB (such as its connection to $\GDbeta$).
On the other hand, non-linearities are arguably necessary for Transformers to work well in practice. 
An important next step is to further consider the theoretical benefits of non-linearities based on our current results.

\paragraph{Roles of MLP layers.}
The work by \citet[][and references thereafter]{geva2020transformer} empirically found that MLP layers operate as key-value memories that store human-interpretable patterns in some pre-trained Transformers.
Their work motivated a method for locating and editing information stored in language models by modifying their MLP layers \citep[see][for example]{dai2022knowledge,meng2022locating}.
We prove that the MLP component enables LTB to learn the shared signal in linear regression tasks, which cannot be done by a single LSA component.
A theoretical study of the information stored in the MLP component is left for future work.

\section*{Acknowledgments}
We gratefully acknowledge the support of the NSF and the Simons Foundation for the Collaboration on the Theoretical Foundations of Deep Learning through awards DMS-2031883 and \#814639, and of the NSF through grant DMS-2023505.

\bibliography{ref}

\begin{thebibliography}{38}
\providecommand{\natexlab}[1]{#1}
\providecommand{\url}[1]{\texttt{#1}}
\expandafter\ifx\csname urlstyle\endcsname\relax
  \providecommand{\doi}[1]{doi: #1}\else
  \providecommand{\doi}{doi: \begingroup \urlstyle{rm}\Url}\fi

\bibitem[Ahn et~al.(2023)Ahn, Cheng, Daneshmand, and Sra]{ahn2023transformers}
Ahn, K., Cheng, X., Daneshmand, H., and Sra, S.
\newblock Transformers learn to implement preconditioned gradient descent for in-context learning.
\newblock In \emph{Thirty-seventh Conference on Neural Information Processing Systems}, 2023.

\bibitem[Ahuja et~al.(2024)Ahuja, Panwar, and Goyal]{ahuja2023context}
Ahuja, K., Panwar, M., and Goyal, N.
\newblock In-context learning through the {B}ayesian prism.
\newblock In \emph{The Twelfth International Conference on Learning Representations}, 2024.

\bibitem[Aky{\"u}rek et~al.(2022)Aky{\"u}rek, Schuurmans, Andreas, Ma, and Zhou]{akyurek2022learning}
Aky{\"u}rek, E., Schuurmans, D., Andreas, J., Ma, T., and Zhou, D.
\newblock What learning algorithm is in-context learning? {I}nvestigations with linear models.
\newblock In \emph{The Eleventh International Conference on Learning Representations}, 2022.

\bibitem[Bai et~al.(2023)Bai, Chen, Wang, Xiong, and Mei]{bai2023transformers}
Bai, Y., Chen, F., Wang, H., Xiong, C., and Mei, S.
\newblock Transformers as statisticians: {P}rovable in-context learning with in-context algorithm selection.
\newblock In \emph{Thirty-seventh Conference on Neural Information Processing Systems}, 2023.

\bibitem[Bhattamishra et~al.(2024)Bhattamishra, Patel, Blunsom, and Kanade]{bhattamishra2023understanding}
Bhattamishra, S., Patel, A., Blunsom, P., and Kanade, V.
\newblock Understanding in-context learning in transformers and {LLM}s by learning to learn discrete functions.
\newblock In \emph{The Twelfth International Conference on Learning Representations}, 2024.

\bibitem[Brown et~al.(2020)Brown, Mann, Ryder, Subbiah, Kaplan, Dhariwal, Neelakantan, Shyam, Sastry, and Askell]{brown2020language}
Brown, T., Mann, B., Ryder, N., Subbiah, M., Kaplan, J.~D., Dhariwal, P., Neelakantan, A., Shyam, P., Sastry, G., and Askell, A.
\newblock Language models are few-shot learners.
\newblock \emph{Advances in Neural Information Processing Systems}, 33:\penalty0 1877--1901, 2020.

\bibitem[Chen et~al.(2021)Chen, Zhong, Zha, Karypis, and He]{chen2021meta}
Chen, Y., Zhong, R., Zha, S., Karypis, G., and He, H.
\newblock Meta-learning via language model in-context tuning.
\newblock \emph{arXiv preprint arXiv:2110.07814}, 2021.

\bibitem[Chowdhery et~al.(2022)Chowdhery, Narang, Devlin, Bosma, Mishra, Roberts, Barham, Chung, Sutton, and Gehrmann]{chowdhery2022palm}
Chowdhery, A., Narang, S., Devlin, J., Bosma, M., Mishra, G., Roberts, A., Barham, P., Chung, H.~W., Sutton, C., and Gehrmann, S.
\newblock Palm: {S}caling language modeling with pathways.
\newblock \emph{arXiv preprint arXiv:2204.02311}, 2022.

\bibitem[Dai et~al.(2022)Dai, Dong, Hao, Sui, Chang, and Wei]{dai2022knowledge}
Dai, D., Dong, L., Hao, Y., Sui, Z., Chang, B., and Wei, F.
\newblock Knowledge neurons in pretrained transformers.
\newblock In \emph{Proceedings of the 60th Annual Meeting of the Association for Computational Linguistics (Volume 1: Long Papers)}, pp.\  8493--8502, 2022.

\bibitem[Dai et~al.(2023)Dai, Sun, Dong, Hao, Ma, Sui, and Wei]{dai2022can}
Dai, D., Sun, Y., Dong, L., Hao, Y., Ma, S., Sui, Z., and Wei, F.
\newblock Why can {GPT} learn in-context? {L}anguage models secretly perform gradient descent as meta-optimizers.
\newblock In \emph{Findings of the Association for Computational Linguistics: ACL 2023}, 2023.

\bibitem[Devlin et~al.(2018)Devlin, Chang, Lee, and Toutanova]{devlin2018bert}
Devlin, J., Chang, M.-W., Lee, K., and Toutanova, K.
\newblock Bert: {P}re-training of deep bidirectional transformers for language understanding.
\newblock \emph{arXiv preprint arXiv:1810.04805}, 2018.

\bibitem[Finn et~al.(2017)Finn, Abbeel, and Levine]{finn2017model}
Finn, C., Abbeel, P., and Levine, S.
\newblock Model-agnostic meta-learning for fast adaptation of deep networks.
\newblock In \emph{International conference on machine learning}, pp.\  1126--1135. PMLR, 2017.

\bibitem[Fu et~al.(2023)Fu, Yang, Wang, Lu, and Zheng]{fu2023does}
Fu, J., Yang, T., Wang, Y., Lu, Y., and Zheng, N.
\newblock How does representation impact in-context learning: {A}n exploration on a synthetic task.
\newblock \emph{arXiv preprint arXiv:2309.06054}, 2023.

\bibitem[Garg et~al.(2022)Garg, Tsipras, Liang, and Valiant]{garg2022can}
Garg, S., Tsipras, D., Liang, P.~S., and Valiant, G.
\newblock What can transformers learn in-context? {A} case study of simple function classes.
\newblock \emph{Advances in Neural Information Processing Systems}, 35:\penalty0 30583--30598, 2022.

\bibitem[Geva et~al.(2021)Geva, Schuster, Berant, and Levy]{geva2020transformer}
Geva, M., Schuster, R., Berant, J., and Levy, O.
\newblock Transformer feed-forward layers are key-value memories.
\newblock \emph{Proceedings of the 2021 Conference on Empirical Methods in Natural Language Processing}, 2021.

\bibitem[Guo et~al.(2024)Guo, Hu, Mei, Wang, Xiong, Savarese, and Bai]{guo2023transformers}
Guo, T., Hu, W., Mei, S., Wang, H., Xiong, C., Savarese, S., and Bai, Y.
\newblock How do transformers learn in-context beyond simple functions? {A} case study on learning with representations.
\newblock In \emph{The Twelfth International Conference on Learning Representations}, 2024.

\bibitem[Lee et~al.(2023)Lee, Xie, Pacchiano, Chandak, Finn, Nachum, and Brunskill]{lee2023supervised}
Lee, J.~N., Xie, A., Pacchiano, A., Chandak, Y., Finn, C., Nachum, O., and Brunskill, E.
\newblock Supervised pretraining can learn in-context reinforcement learning.
\newblock \emph{arXiv preprint arXiv:2306.14892}, 2023.

\bibitem[Li et~al.(2023)Li, Ildiz, Papailiopoulos, and Oymak]{li2023transformers}
Li, Y., Ildiz, M.~E., Papailiopoulos, D., and Oymak, S.
\newblock Transformers as algorithms: Generalization and stability in in-context learning.
\newblock In \emph{International Conference on Machine Learning}, pp.\  19565--19594. PMLR, 2023.

\bibitem[Lin et~al.(2024)Lin, Bai, and Mei]{lin2023transformers}
Lin, L., Bai, Y., and Mei, S.
\newblock Transformers as decision makers: {P}rovable in-context reinforcement learning via supervised pretraining.
\newblock In \emph{The Twelfth International Conference on Learning Representations}, 2024.

\bibitem[Liu et~al.(2023)Liu, Yuan, Fu, Jiang, Hayashi, and Neubig]{liu2023pre}
Liu, P., Yuan, W., Fu, J., Jiang, Z., Hayashi, H., and Neubig, G.
\newblock Pre-train, prompt, and predict: {A} systematic survey of prompting methods in natural language processing.
\newblock \emph{ACM Computing Surveys}, 55\penalty0 (9):\penalty0 1--35, 2023.

\bibitem[Mahankali et~al.(2024)Mahankali, Hashimoto, and Ma]{mahankali2023one}
Mahankali, A., Hashimoto, T.~B., and Ma, T.
\newblock One step of gradient descent is provably the optimal in-context learner with one layer of linear self-attention.
\newblock In \emph{The Twelfth International Conference on Learning Representations}, 2024.

\bibitem[Meenakshi \& Rajian(1999)Meenakshi and Rajian]{meenakshi1999product}
Meenakshi, A. and Rajian, C.
\newblock On a product of positive semidefinite matrices.
\newblock \emph{Linear algebra and its applications}, 295\penalty0 (1-3):\penalty0 3--6, 1999.

\bibitem[Meng et~al.(2022)Meng, Bau, Andonian, and Belinkov]{meng2022locating}
Meng, K., Bau, D., Andonian, A., and Belinkov, Y.
\newblock Locating and editing factual associations in {GPT}.
\newblock \emph{Advances in Neural Information Processing Systems}, 35:\penalty0 17359--17372, 2022.

\bibitem[Min et~al.(2021)Min, Lewis, Zettlemoyer, and Hajishirzi]{min2021metaicl}
Min, S., Lewis, M., Zettlemoyer, L., and Hajishirzi, H.
\newblock Metaicl: Learning to learn in context.
\newblock \emph{arXiv preprint arXiv:2110.15943}, 2021.

\bibitem[OpenAI(2023)]{openai2023gpt4}
OpenAI.
\newblock {GPT}-4 technical report, 2023.

\bibitem[Pathak et~al.(2023)Pathak, Sen, Kong, and Das]{pathak2023transformers}
Pathak, R., Sen, R., Kong, W., and Das, A.
\newblock Transformers can optimally learn regression mixture models.
\newblock \emph{arXiv preprint arXiv:2311.08362}, 2023.

\bibitem[Petersen \& Pedersen(2008)Petersen and Pedersen]{petersen2008matrix}
Petersen, K.~B. and Pedersen, M.~S.
\newblock The matrix cookbook.
\newblock \emph{Technical University of Denmark}, 7\penalty0 (15):\penalty0 510, 2008.

\bibitem[Radford et~al.(2018)Radford, Narasimhan, Salimans, and Sutskever]{radford2018improving}
Radford, A., Narasimhan, K., Salimans, T., and Sutskever, I.
\newblock Improving language understanding by generative pre-training, 2018.

\bibitem[Radford et~al.(2019)Radford, Wu, Child, Luan, Amodei, and Sutskever]{radford2019language}
Radford, A., Wu, J., Child, R., Luan, D., Amodei, D., and Sutskever, I.
\newblock Language models are unsupervised multitask learners.
\newblock \emph{OpenAI blog}, 1\penalty0 (8):\penalty0 9, 2019.

\bibitem[Raventos et~al.(2023)Raventos, Paul, Chen, and Ganguli]{raventos2023pretraining}
Raventos, A., Paul, M., Chen, F., and Ganguli, S.
\newblock {P}retraining task diversity and the emergence of non-{B}ayesian in-context learning for regression.
\newblock In \emph{Thirty-seventh Conference on Neural Information Processing Systems}, 2023.

\bibitem[Seber(2008)]{seber2008matrix}
Seber, G.~A.
\newblock \emph{A matrix handbook for statisticians}.
\newblock John Wiley \& Sons, 2008.

\bibitem[Touvron et~al.(2023)Touvron, Lavril, Izacard, Martinet, Lachaux, Lacroix, Rozi{\`e}re, Goyal, Hambro, and Azhar]{touvron2023llama}
Touvron, H., Lavril, T., Izacard, G., Martinet, X., Lachaux, M.-A., Lacroix, T., Rozi{\`e}re, B., Goyal, N., Hambro, E., and Azhar, F.
\newblock Llama: {O}pen and efficient foundation language models.
\newblock \emph{arXiv preprint arXiv:2302.13971}, 2023.

\bibitem[Tsigler \& Bartlett(2023)Tsigler and Bartlett]{tsigler2023benign}
Tsigler, A. and Bartlett, P.~L.
\newblock Benign overfitting in ridge regression.
\newblock \emph{J. Mach. Learn. Res.}, 24:\penalty0 123--1, 2023.

\bibitem[Vaswani et~al.(2017)Vaswani, Shazeer, Parmar, Uszkoreit, Jones, Gomez, Kaiser, and Polosukhin]{vaswani2017attention}
Vaswani, A., Shazeer, N., Parmar, N., Uszkoreit, J., Jones, L., Gomez, A.~N., Kaiser, {\L}., and Polosukhin, I.
\newblock Attention is all you need.
\newblock \emph{Advances in Neural Information Processing Systems}, 30, 2017.

\bibitem[Von~Oswald et~al.(2023)Von~Oswald, Niklasson, Randazzo, Sacramento, Mordvintsev, Zhmoginov, and Vladymyrov]{von2022transformers}
Von~Oswald, J., Niklasson, E., Randazzo, E., Sacramento, J., Mordvintsev, A., Zhmoginov, A., and Vladymyrov, M.
\newblock Transformers learn in-context by gradient descent.
\newblock In \emph{International Conference on Machine Learning}, pp.\  35151--35174. PMLR, 2023.

\bibitem[Wolf et~al.(2020)Wolf, Debut, Sanh, Chaumond, Delangue, Moi, Cistac, Rault, Louf, Funtowicz, Davison, Shleifer, von Platen, Ma, Jernite, Plu, Xu, Scao, Gugger, Drame, Lhoest, and Rush]{wolf2020huggingfaces}
Wolf, T., Debut, L., Sanh, V., Chaumond, J., Delangue, C., Moi, A., Cistac, P., Rault, T., Louf, R., Funtowicz, M., Davison, J., Shleifer, S., von Platen, P., Ma, C., Jernite, Y., Plu, J., Xu, C., Scao, T.~L., Gugger, S., Drame, M., Lhoest, Q., and Rush, A.~M.
\newblock Huggingface's transformers: {S}tate-of-the-art natural language processing, 2020.

\bibitem[Wu et~al.(2024)Wu, Zou, Chen, Braverman, Gu, and Bartlett]{wu2023many}
Wu, J., Zou, D., Chen, Z., Braverman, V., Gu, Q., and Bartlett, P.~L.
\newblock How many pretraining tasks are needed for in-context learning of linear regression?
\newblock In \emph{The Twelfth International Conference on Learning Representations}, 2024.

\bibitem[Zhang et~al.(2023)Zhang, Frei, and Bartlett]{zhang2023trained}
Zhang, R., Frei, S., and Bartlett, P.~L.
\newblock Trained transformers learn linear models in-context.
\newblock \emph{arXiv preprint arXiv:2306.09927}, 2023.

\end{thebibliography}
\bibliographystyle{icml2024}

\newpage

\appendix
\onecolumn
\allowdisplaybreaks

\section{Notation and Variable Transformation}\label{sec.variable.transformation}
To simplify the proof, let's first describe a variable transformation scheme for our model and data coming from Assumption \ref{assumption.data}. 
\begin{definition}[Variable Transformation]\label{def.variable.transformation}
    We fix $M$ as the length of the contexts. Recall that $\bX,\bx$ are respectively the features in the context and the query input, while $\tbbeta$ and $\bbeta^*$ are the true task parameter for the inference prompt and its expectation, respectively. Following the Assumption \ref{assumption.data}, we have $\tbbeta \sim \normal\l(\bbeta^*, \bPsi\r).$ From the definition for multivariate Gaussian distribution, we know there exists a random vector $\bttheta$ such that
    \begin{equation}\label{eqn.transformation.beta}
        \tbbeta = \bbeta^* + \bPsi^{\frac{1}{2}} \bttheta,
    \end{equation}
    where
    \begin{equation}\label{eqn.distribution.ttheta}
        \bttheta \sim \normal \l(0,\bID_d\r).
    \end{equation} 
    Recall the noise vector is defined as $\beps = \by - \bX \bbeta$ and is generated from $\beps \sim \normal\l(\bzero, \sigma^2 \cdot \bID_N\r).$
\end{definition}

\ 

\paragraph{Rank deficient case.}
Note that, even when $\bPsi$ is rank-deficient, the variable transformations in \eqref{eqn.transformation.beta} and \eqref{eqn.distribution.ttheta} still hold. This can be seen from the definition of the multivariate Gaussian distribution (Def 20.11 and the discussion below in \citep{seber2008matrix}): A vector $\tbbeta \in \R^d$ with mean $\bbeta^*$ and covariance matrix $\bPsi$ has a multivariate normal distribution, if it has the same distribution as $\bA \bttheta + \bbeta^*$ where $\bA$ is any $d \times m$ matrix satisfying $\bA \bA^\top = \bPsi$ and $\bttheta \sim \normal\l(\bzero_m, \bID_m\r).$ Here, we can recover the variable transformation above if we take $m=d$ and $\bA = \bPsi^{\frac{1}{2}}.$

\paragraph{Notation.}
Before we delve into the detailed proof, let's review notation and introduce some additional notation.
We use lowercase bold letters to denote vectors and uppercase bold letters to denote matrices and tensors. 
For a vector $\bx$ and a positive semi-definite (PSD) matrix $\bA$, we write $\norm{\bA}{\bx} := \sqrt{\bx^\top \bA \bx}$. 
We write $\l\langle \cdot,\cdot \r\rangle$ for the inner product, which is defined as $\l\langle \bx,\by \r\rangle := \bx^\top \by$ for vectors and $\l\langle \bA,\bB \r\rangle := \tr\l(\bA\bB^\top\r)$ for matrices. 
For a matrix $\bA,$ we use $\norm{op}{\bA},\norm{F}{\bA}$ to denote the operator norm and the Frobenius norm, respectively. 
We write $\bA[i]$ as the $i$-th row of the matrix $\bA,$ $\bA_{m,n}$ as the $(m,n)$-th entry, and $\bA_{-1,-1}$ as the right-bottom entry. We write $\bzero_n, \bzero_{m \times n}, \bID_n$ for the zero vector, zero matrix and identity matrix, respectively. 

For a positive semi-definite matrix $\bA,$ we write $\bA^{\frac{1}{2}}$ as its principle square root, which is the unique positive semi-definite matrix $\bB$ such that $\bB \bB = \bB\bB^\top = \bA.$ We also write $\bA^{-\frac{1}{2}} = (\bA^{\frac{1}{2}})^+,$ where $\l(\cdot\r)^+$ denotes the Moore Penrose pseudo-inverse.
We use $\otimes$ to denote the Kronecker product: for two matrices $\bA$ and $\bB,$ $\bA \otimes \bB$ is a linear mapping which operates as $(\bA \otimes \bB) \circ \bC = \bB \bC \bA^\top$ for compatible matrix $\bC$, where $\circ$ denotes the tensor product.

We define $\bLambda := \bPsi^{\frac{1}{2}} \bH \bPsi^{\frac{1}{2}}$ and $\phi_1 \geq \phi_2 \geq ... \geq \phi_d \geq 0$ as its ordered eigenvalues. We also define $\lambda_1 \geq \lambda_2 \geq ... \geq \lambda_d \geq 0$ as the ordered eigenvalues of $\bH,$ and $\lambdamin >0$ as its minimal \emph{positive} eigenvalue. Finally, we define another important matrix
\begin{equation}\label{eqn.def.bOmega}
    \bOmega := \frac{M+1}{M}\bH^{\frac{1}{2}} \bPsi \bH^{\frac{1}{2}} + \frac{\tr\l(\bH \bPsi\r) + \sigma^2}{M} \cdot \bID_d.
\end{equation}
Note that, under this definition and our assumption on $\bH$ and $\bPsi$, we have that $\bOmega$ is invertible.

\section{Proof of Theorem \ref{thm.lower.bound}}\label{appendix.proof.lower.bound}
\begin{proof}
    The fact that $\inf_{f \in \cF_{\LTB}} \risk(f)$ does not depend on the vector $\bbeta^*$ is subsumed in the Theorem \ref{thm.global.optimum.LTB}, so we do not prove it here. We are going to prove the inequality in the theorem. First, from Theorem \ref{thm.global.optimum.LTB} we know that,
    \begin{equation}
        \inf_{f \in \cF_{\LTB}} \risk(f)
        = \sigma^2 + \tr\l(\bH \bPsi\r) - \tr \l(\l(\bH^{\frac{1}{2}} \bPsi \bH^{\frac{1}{2}}\r)^2 \bOmega^{-1}\r),
    \end{equation}
    where $\bOmega$ is defined in \eqref{eqn.def.bOmega}. Then, it suffices to lower bound $\inf_{f \in \cF_{\lsa}} \risk(f).$ So let's take an arbitrary $f \in \cF_{\lsa},$ which is written as
    \begin{align*}
        &\ f(\bE) = \bigg[\bE + \WP^\top \WV \bE \bM \frac{\bE^\top \WK^\top \WQ \bE}{M}\bigg]_{-1,-1},
    \end{align*}
    where $\bE$ is the token matrix defined in \eqref{eqn.def.token.matrix}. Since the prediction is the right-bottom entry of the output matrix, we know only the last row of the product $\WP^\top \ \WV$ attends the prediction. Similarly, only the last column of the $\bE$ on the far right in the above equation attends the prediction. Since the last column of $\bE$ is $\l(\bx^\top \ 0\r)^\top,$ we know that only the first $d$ rows of the product $\WK^\top \WQ$ enter the calculation (since other parts are multiplied by zero). Therefore, we write 
    \begin{equation*}
        \WP^\top \ \WV= 
        \begin{pmatrix}[1.5]
            * & * \\
            \bUbl^\top & \bUlast
        \end{pmatrix}, \quad 
        \WK^\top \ \WQ = 
        \begin{pmatrix}[1.5]
            \bUtl & * \\
            \bUtr^\top & *
        \end{pmatrix},
    \end{equation*}
    where $\bUtl \in \R^{d\times d}, \bUtr, \bUbl \in \R^{d\times 1}, \bUlast \in \R,$ and $*$ denotes entries that do not enter the final prediction. The model prediction can be written as
    \begin{align*}
        f(\bE) &= 
        \begin{pmatrix}[1.5]
            \bUbl^\top & \bUlast
        \end{pmatrix} \cdot \frac{\bE \bM_M \bE^\top}{M} \cdot
        \begin{pmatrix}[1.5]
            \bUtl \\ \bUtr^\top
        \end{pmatrix} \cdot \bx \\
        &= \left[\bUbl^\top \cdot \frac{1}{M} \bX^\top \bX \cdot \bUtl 
        + \bUbl^\top \cdot \frac{1}{M} \bX^\top \by \cdot \bUtr^\top 
        + \bUlast \cdot \frac{1}{M} \by^\top \bX \cdot \bUtl 
        + \bUlast \cdot \frac{1}{M} \by^\top \by \cdot \bUtr^\top \right] \cdot \bx.
    \end{align*}

    \ 

    \paragraph{Step 1: simplify the risk function.}
     We use $\tbbeta$ to denote the task parameter. From the Assumption \ref{assumption.data} and Definition \ref{def.variable.transformation}, we have
    \begin{equation*}
        \by = \bX \tbbeta + \beps, \quad 
        y = \sip{\tbbeta}{\bx} + \eps, \quad 
        \tbbeta \sim \normal\l(\bbeta^*, \bPsi\r), \quad
        \tbbeta = \bbeta^* + \bPsi^{\frac{1}{2}} \bttheta;
    \end{equation*}
    and
    \begin{equation*}
        \bX[i],\bx \iid \normal\l(\bzero, \bH\r), \quad
        \beps[i], \eps \iid \normal\l(0,\sigma^2\r), \quad
        \bttheta \sim \normal\l(\bzero, \bID_d\r).
    \end{equation*}
    Then the model output can be written as
    \begin{align*}
        f(\bE)
        &= \left[\bUbl^\top \cdot \frac{1}{M} \bX^\top \bX \cdot \bUtl 
        + \bUbl^\top \cdot \frac{1}{M} \bX^\top \by \cdot \bUtr^\top 
        + \bUlast \cdot \frac{1}{M} \by^\top \bX \cdot \bUtl 
        + \bUlast \cdot \frac{1}{M} \by^\top \by \cdot \bUtr^\top \right] \cdot \bx \\
        &= \left[\l(\bUbl + \bUlast \tbbeta\r)^\top \cdot \frac{1}{M} \bX^\top \bX \cdot \l(\bUtl + \tbbeta \bUtr^\top \r) \right] \cdot \bx \\
        &\quad + \l[\bUbl^\top \cdot \frac{1}{M} \bX^\top \beps \cdot \bUtr^\top 
        + \bUlast \cdot \frac{1}{M} \beps^\top \bX \cdot \bUtl 
        + \bUlast \cdot \frac{2}{M} \beps^\top \bX \tbbeta \bUtr^\top
        + \frac{1}{M} \beps^\top \beps \cdot \bUlast \bUtr^\top\r] \cdot \bx.
    \end{align*}
    To simplify the presentation, we define
    \begin{align*}
        \bz_1^\top &= \l(\bUbl + \bUlast \tbbeta\r)^\top \cdot \frac{1}{M} \bX^\top \bX \cdot \l(\bUtl + \tbbeta \bUtr^\top \r), \\
        \bz_2^\top &= \bUbl^\top \cdot \frac{1}{M} \bX^\top \beps \cdot \bUtr^\top + \bUlast \cdot \frac{1}{M} \beps^\top \bX \cdot \bUtl
        + \bUlast \cdot \frac{2}{M} \beps^\top \bX \tbbeta \bUtr^\top \\
        \bz_3^\top &= \frac{1}{M} \beps^\top \beps \cdot \bUlast \bUtr^\top.
    \end{align*}
    Since $\bx,\bX, \beps, \tbbeta$ are independent, we have
    \begin{align*}
        \risk\l(f\r)
        &= \E \l(f(\bE) - \sip{\tbbeta}{\bx} - \eps\r)^2 \\
        &= \sigma^2 + \E \l(f(\bE) - \sip{\tbbeta}{\bx}\r)^2 \tag{$\eps$ is independent from other variables and zero-mean} \\
        &= \E \l[\sip{\bz_1 + \bz_2 + \bz_3 - \tbbeta}{\bx}^2\r] + \sigma^2 \\
        &= \sip{\bH}{\E \l(\bz_1 + \bz_2 + \bz_3 - \tbbeta\r)\l(\bz_1 + \bz_2 + \bz_3 - \tbbeta\r)^\top} + \sigma^2.
    \end{align*}
    Note that $\bz_1$ does not contain $\beps,$ $\bz_2$ is a linear form of $\beps,$ and $\bz_3$ is a quadratic form of $\beps.$ Using $\beps \sim \normal(\bzero,\sigma^2 \bID_d),$ we have $\E [(\bz_1 - \tbbeta) \cdot \bz_2^\top] = \bzero$ and $\E[\bz_2 \bz_3^\top] = \bzero.$ Therefore, we have
    \begin{align}\label{eqn.risk.vanilla.gd.decomposition}
        \risk\l(f\r) - \sigma^2 
        &= \underbrace{\sip{\bH}{\E \l(\bz_1 - \tbbeta\r) \l(\bz_1 - \tbbeta\r)^\top}}_{S_1}
        + \underbrace{\sip{\bH}{\E \bz_2 \bz_2^\top}}_{S_2}
        + \underbrace{\sip{\bH}{\E \bz_3 \bz_3^\top}}_{S_3}
        + \underbrace{2 \sip{\bH}{\E \l(\bz_1 - \tbbeta\r) \bz_3^\top}}_{S_4}.
    \end{align}

    \ 

    \paragraph{Step 2: compute $S_1$.} 
    By Lemma \ref{lem:fourth_moment} and that $\bX$ is independent of all other random variables, we have
    \begin{align*}
        \l\langle \bH, \E \bz_1 \bz_1^\top \r\rangle
        &= \E \tr \l[\l(\bUtl + \tbbeta \bUtr^\top \r)^\top \cdot \frac{1}{M} \bX^\top \bX \cdot \l(\bUbl + \bUlast \tbbeta\r) \l(\bUbl + \bUlast \tbbeta\r)^\top \cdot \frac{1}{M} \bX^\top \bX \cdot \l(\bUtl + \tbbeta \bUtr^\top \r) \bH\r] \\
        &= \frac{M+1}{M} \E_{\tbbeta} \tr \l[\l(\bUtl + \tbbeta \bUtr^\top \r)^\top \cdot \bH \cdot \l(\bUbl + \bUlast \tbbeta\r) \l(\bUbl + \bUlast \tbbeta\r)^\top \cdot \bH \cdot \l(\bUtl + \tbbeta \bUtr^\top \r) \bH\r] \\
        &+ \frac{1}{M} \E_{\tbbeta} \tr \l[\tr\l(\l(\bUbl + \bUlast \tbbeta\r) \l(\bUbl + \bUlast \tbbeta\r)^\top \bH\r) \l(\bUtl + \tbbeta \bUtr^\top \r)^\top  \bH \l(\bUtl + \tbbeta \bUtr^\top \r) \bH\r].
    \end{align*}
    We define
    \begin{equation}\label{eqn.def.bA}
        \bb := \bUbl + \bUlast \bbeta^* \in \R^d, \quad
        \bA := \bUtl + \bbeta^* \bUtr^\top \in \R^{d \times d},
    \end{equation}
    then applying $\tbbeta = \bbeta^* + \bPsi^{\frac{1}{2}} \bttheta,$ we get
    \begin{align*}
        \l\langle \bH, \E \bz_1 \bz_1^\top \r\rangle
        &= \frac{M+1}{M} \E_{\bttheta \sim \normal\l(\bzero,\bID_d\r)} \tr \l[\l(\bA + \bPsi^{\frac{1}{2}} \bttheta \bUtr^\top \r)^\top \cdot \bH \cdot \l(\bb + \bUlast \bPsi^{\frac{1}{2}} \bttheta\r) \l(\bb + \bUlast \bPsi^{\frac{1}{2}} \bttheta\r)^\top \cdot \bH \cdot \l(\bA + \bPsi^{\frac{1}{2}} \bttheta \bUtr^\top\r) \bH\r] \\
        &+ \frac{1}{M} \E_{\bttheta \sim \normal\l(\bzero,\bID_d\r)} \tr\l(\l(\bb + \bUlast \bPsi^{\frac{1}{2}} \bttheta\r) \l(\bb + \bUlast \bPsi^{\frac{1}{2}} \bttheta\r)^\top \bH\r) \tr \l[\l(\bA + \bPsi^{\frac{1}{2}} \bttheta \bUtr^\top \r)^\top  \bH \l(\bA + \bPsi^{\frac{1}{2}} \bttheta \bUtr^\top \r) \bH\r].
    \end{align*}
    Note that the first and third moments of $\bttheta$ are zero. 
    So the above equation only involves the zeroth, second, and fourth moments of $\bttheta$.
    Therefore we have
    \begin{align}\label{eqn.S1.decomposition}
        \l\langle \bH, \E \bz_1 \bz_1^\top \r\rangle
        &= \underbrace{\frac{M+1}{M} \tr\l(\bA^\top \bH \bb \bb^\top \bH \bA \bH\r) 
        + \frac{1}{M} \tr\l(\bb \bb^\top \bH\r) \tr\l(\bA^\top \bH \bA \bH\r)}_{\text{The leading term}}
        + T_2 + T_4,
    \end{align}
    where $T_2$ and $T_4$ are the second order and the fourth order term, respectively. More concretely, the second order term is
    \begin{align}\label{eqn.second.order.term}
        T_2 &= \frac{M+1}{M} \E \tr
        \bigg\{
        \bUlast \bA^\top \bH \bb \bttheta^\top \bPsi^{\frac{1}{2}} \bH \bPsi^{\frac{1}{2}} \bttheta \bUtr^\top \bH
        + \bUlast \bUtr \bttheta^\top \bPsi^{\frac{1}{2}} \bH \bPsi^{\frac{1}{2}} \bttheta \bb^\top \bH \bA \bH
        + \bUlast \bUtr \bttheta^\top \bPsi^{\frac{1}{2}} \bH \bb \bttheta^\top \bPsi^{\frac{1}{2}} \bH \bA \bH \notag \\
        & + \bUlast \bA^\top \bH \bPsi^{\frac{1}{2}} \bttheta \bb^\top \bH \bPsi^{\frac{1}{2}} \bttheta \bUtr^\top \bH 
        + \bUtr \bttheta^\top \bPsi^{\frac{1}{2}} \bH \bb \bb^\top \bH \bPsi^{\frac{1}{2}} \bttheta \bUtr^\top \bH
        + \bUlast^2 \bA^\top \bH \bPsi^{\frac{1}{2}} \bttheta \bttheta^\top \bPsi^{\frac{1}{2}} \bH \bA \bH
        \bigg\} \notag \\
        &+ \frac{1}{M} \E \bigg\{
        \bUlast^2 \bttheta^\top \bPsi^{\frac{1}{2}} \bH \bPsi^{\frac{1}{2}} \bttheta \cdot \tr\l(\bA^\top \bH \bA \bH\r)
        + \bb^\top \bH \bb \cdot \tr\l(\bUtr \bttheta^\top \bPsi^{\frac{1}{2}} \bH \bPsi^{\frac{1}{2}} \bttheta \bUtr^\top \bH\r) \notag \\
        &+ 2 \bUlast \bb^\top \bH \bPsi^{\frac{1}{2}} \bttheta \cdot \tr\l(\bUtr \bttheta^\top \bPsi^{\frac{1}{2}} \bH \bA \bH\r) 
        + 2 \bUlast \bb^\top \bH \bPsi^{\frac{1}{2}} \bttheta \cdot \tr \l(
        \bA^\top \bH \bPsi^{\frac{1}{2}} \bttheta \bUtr^\top \bH\r)
        \bigg\} \notag \\[10pt]
        &= \frac{M+1}{M} \bigg\{
         2\bUlast \tr(\bPsi^{\frac{1}{2}} \bH \bPsi^{\frac{1}{2}}) \cdot \bUtr^\top \bH \bA^\top \bH \bb
         + 2 \bUlast \bb^\top \bH \bPsi \bH \bA \bH \bUtr
         + \bb^\top \bH \bPsi \bH \bb \cdot \bUtr^\top \bH \bUtr \notag\\
        &+ \bUlast^2 \tr\l(\bA^\top \bH \bPsi \bH \bA \bH\r)
        \bigg\} 
        + \frac{1}{M} \bigg\{
        \bUlast^2 \tr\l(\bPsi^{\frac{1}{2}} \bH \bPsi^{\frac{1}{2}}\r) \cdot \tr\l(\bA^\top \bH \bA \bH\r) 
        + \tr\l(\bPsi^{\frac{1}{2}} \bH \bPsi^{\frac{1}{2}}\r) \cdot \bb^\top \bH \bb \cdot \bUtr^\top \bH \bUtr \notag \\
        &+ 4 \bUlast \cdot \bb^\top \bH \bPsi^{\frac{1}{2}} \bH \bA \bH \bUtr 
        \bigg\}\notag \\[10pt]
        &= \frac{2(M+1)}{M} \bUlast \bb^\top \l[\tr(\bH \bPsi) \bH + \bH \bPsi \bH \r] \bA \bH \bUtr
        + \bb^\top \l(\frac{M+1}{M}\bH \bPsi \bH + \frac{1}{M}\tr(\bH \bPsi) \bH\r) \bb \cdot \bUtr^\top \bH \bUtr \notag \\
        &+ \bUlast^2 \tr \l(\bA^\top \l(\frac{M+1}{M}\bH \bPsi \bH + \frac{1}{M}\tr(\bH \bPsi) \bH\r) \bA \bH\r) 
        + \frac{4}{M} \bUlast \cdot \bb^\top \bH \bPsi \bH \bA \bH \bUtr .
    \end{align}
    The fourth order term is 
    \begin{align}\label{eqn.fourth.order.term}
        T_4 &= \frac{M+1}{M} \E \tr\bigg\{
        \bUlast^2 \bUtr \bttheta^\top \bPsi^{\frac{1}{2}} \bH \bPsi^{\frac{1}{2}} \bttheta \bttheta^\top \bPsi^{\frac{1}{2}} \bH \bPsi^{\frac{1}{2}} \bttheta \bUtr^\top \bH
        \bigg\}
        + 
        \frac{1}{M} \E \tr\bigg\{
        \bUlast^2 \bttheta^\top \bPsi^{\frac{1}{2}} \bH \bPsi^{\frac{1}{2}} \bttheta \cdot \tr\l(\bUtr \bttheta^\top \bPsi^{\frac{1}{2}} \bH \bPsi^{\frac{1}{2}} \bttheta \bUtr^\top \bH\r)
        \bigg\} \notag \\
        &= \frac{M+2}{M} \bUlast^2  \bUtr^\top \bH \bUtr \cdot \E \l(\bttheta^\top \bPsi^{\frac{1}{2}} \bH \bPsi^{\frac{1}{2}} \bttheta\r)^2 \notag \\
        &= \frac{M+2}{M} \bUlast^2  \cdot \l(2\tr\l(\bH \bPsi \bH \bPsi\r) + \tr\l(\bH \bPsi\r)^2 \r) \cdot \bUtr^\top \bH \bUtr.
    \end{align}
    For the cross term in $S_1$, we have
    \begin{align}
        \l\langle \bH, \E \bz_1 \tbbeta^\top \r\rangle
        &= \E \bigg\{
        \l(\bUbl + \bUlast \tbbeta\r)^\top \cdot \frac{1}{M} \bX^\top \bX \cdot \l(\bUtl + \tbbeta \bUtr^\top \r) \bH \tbbeta \bigg\} \\
        &= \E \bigg\{
        \l(\bUbl + \bUlast \tbbeta\r)^\top \bH \l(\bUtl + \tbbeta \bUtr^\top \r) \bH \tbbeta 
        \bigg\} \tag{From the distribution of $\bX$} \notag \\
        &= \E \bigg\{\l(\bb + \bUlast \bPsi^{\frac{1}{2}} \bttheta\r)^\top
        \bH \l(\bA + \bPsi^{\frac{1}{2}} \bttheta \bUtr^\top\r) \bH \l(\bbeta^* + \bPsi^{\frac{1}{2}} \bttheta\r)\bigg\} \tag{By \eqref{eqn.def.bA}} \\
        &= \bb^\top \bH \bA \bH \bbeta^* 
        + \E \bigg\{\bUlast \bttheta^\top \bPsi^{\frac{1}{2}} \bH \bPsi^{\frac{1}{2}} \bttheta \bUtr^\top \bH \bbeta^*
        + \bUlast \bttheta^\top \bPsi^{\frac{1}{2}} \bH \bA \bH \bPsi^{\frac{1}{2}} \bttheta 
        + \bb^\top \bH \bPsi^{\frac{1}{2}} \bttheta \bUtr^\top \bH \bPsi^{\frac{1}{2}} \bttheta \bigg\}\notag \\
        &= \bb^\top \bH \bA \bH \bbeta^* 
        + \bUlast \tr\l(\bH \bPsi\r) \cdot \bUtr^\top \bH \bbeta^* 
        + \bUlast \tr\l(\bH \bA \bH \bPsi\r)
        + \bUtr^\top \bH \bPsi \bH \bb, \label{eqn.S1.second}
    \end{align}
    where the last row comes from the fact that $\bttheta \sim \normal \l(\bzero,\bID_d\r).$ 
    Moreover, we have
    \begin{align}\label{eqn.S1.third}
        \l\langle \bH, \E \tbbeta \tbbeta^\top \r\rangle
        &= \bbeta^{*\top} \bH \bbeta^* + \tr\l(\bH \bPsi\r).
    \end{align}
    Combining \eqref{eqn.S1.decomposition},  \eqref{eqn.second.order.term}, \eqref{eqn.fourth.order.term}, \eqref{eqn.S1.second}, \eqref{eqn.S1.third}, we have
    \begin{align}
        S_1 &= \frac{M+1}{M} \tr\l(\bA^\top \bH \bb \bb^\top \bH \bA \bH\r) 
        + \frac{1}{M} \tr\l(\bb \bb^\top \bH\r) \tr\l(\bA^\top \bH \bA \bH\r) + \frac{M+2}{M} \bUlast^2  \cdot \l(2\tr\l(\bH \bPsi \bH \bPsi\r) + \tr\l(\bH \bPsi\r)^2 \r) \cdot \bUtr^\top \bH \bUtr \notag\\
        &+ \frac{2(M+1)}{M} \bUlast \bb^\top \l[\tr(\bH \bPsi) \bH + \bH \bPsi \bH \r] \bA \bH \bUtr
        + \bb^\top \l(\frac{M+1}{M}\bH \bPsi \bH + \frac{1}{M}\tr(\bH \bPsi) \bH\r) \bb \cdot \bUtr^\top \bH \bUtr \notag \\
        &+ \bUlast^2 \tr \l(\bA^\top \l(\frac{M+1}{M}\bH \bPsi \bH + \frac{1}{M}\tr(\bH \bPsi) \bH\r) \bA \bH\r) 
        + \frac{4}{M} \bUlast \cdot \bb^\top \bH \bPsi \bH \bA \bH \bUtr \notag\\
        &- 2 \bigg[ \bb^\top \bH \bA \bH \bbeta^* 
        + \bUlast \tr\l(\bH \bPsi\r) \cdot \bUtr^\top \bH \bbeta^* 
        + \bUlast \tr\l(\bH \bA \bH \bPsi\r)
        + \bUtr^\top \bH \bPsi \bH \bb \bigg] 
        + \bbeta^{*\top} \bH \bbeta^* + \tr\l(\bH \bPsi\r).
        \label{eqn.expression.S1}
    \end{align}

    \ 

    \paragraph{Step 3: other terms.}
    Let us compute $S_2,S_3$ and $S_4.$ Using definitions, we rewrite $\bz_2$ as 
    \begin{align}
        \bz_2^\top &= \bUbl^\top \cdot \frac{1}{M} \bX^\top \beps \cdot \bUtr^\top + \bUlast \cdot \frac{1}{M} \beps^\top \bX \cdot \bUtl
        + \bUlast \cdot \frac{2}{M} \beps^\top \bX \tbbeta \bUtr^\top \notag \\
        &= \bb^\top \cdot \frac{1}{M} \bX^\top \beps \cdot \bUtr^\top 
        + \bUlast \cdot \frac{1}{M} \beps^\top \bX \cdot \bA 
        + \bUlast \cdot \frac{2}{M} \beps^\top \bX \bPsi^{\frac{1}{2}} \bttheta \bUtr^\top.
    \end{align}
    Since $\beps \sim \normal\l(\bzero, \sigma^2\bID_M\r), \bttheta \sim \normal \l(\bzero,\bID_d\r)$ and they are independent, all terms in $S_2$ vanish except if the term contains even orders of $\bttheta$ or $\beps$. 
    So we have
    \begin{align}
        S_2 &= \l\langle \bH, \E \bz_2 \bz_2^\top  \r\rangle
        = \frac{1}{M^2} \E \bigg\{\bb^\top \bX^\top \beps \cdot \bUtr^\top \bH \bUtr \cdot \beps^\top \bX \cdot \bb
        + \bUlast^2 \beps^\top \bX \bA \bH \bA^\top \bX^\top \beps  \notag \\
        &+ 2 \bUlast \cdot \beps^\top \bX \bA \bH \bUtr \cdot \beps^\top \bX \bb
        + 4 \bUlast^2 \cdot \beps^\top \bX \bPsi^{\frac{1}{2}} \bttheta \cdot \bUtr^\top \bH \bUtr \cdot \bttheta^\top \bPsi^{\frac{1}{2}} \bX^\top \beps \bigg\} \notag \\
        &= \frac{\sigma^2}{M} \bigg\{ \bb^\top \bH \bb \cdot \bUtr^\top \bH \bUtr 
        + \bUlast^2 \tr\l(\bH \bA \bH \bA^\top\r)
        + 2 \bUlast \bUtr^\top \bH \bA^\top \bH \bb
        + 4 \bUlast^2 \tr\l(\bH \bPsi\r) \bUtr^\top \bH \bUtr \bigg\} \label{eqn.expression.S2}
    \end{align}
    For the other two terms, we have
    \begin{align}\label{eqn.expression.S3}
        S_3 &= \frac{1}{M^2} \bUlast^2 \E \big\{ \beps^\top \beps  \bUtr^\top \bH \bUtr  \beps^\top \beps \big\}
        = \frac{\sigma^4(M+2)}{M}\bUlast^2 \bUtr^\top \bH \bUtr
    \end{align}
    and
    \begin{align}\label{eqn.expression.S4}
        S_4 &= 2 \sip{\bH}{\E \l(\bz_1 - \tbbeta\r) \bz_3^\top} \notag \\
        &= 2 \E \bigg\{\l[ \l(\bUbl + \bUlast \tbbeta\r)^\top \cdot \frac{1}{M} \bX^\top \bX \cdot \l(\bUtl + \tbbeta \bUtr^\top \r) - \tbbeta^\top \r] \bH \cdot  \frac{1}{M} \beps^\top \beps \cdot \bUlast \bUtr^\top \bigg\} \notag \\
        &= 2 \sigma^2 \bUlast \E \bigg\{ \l[ \l(\bb + \bUlast \bPsi^{\frac{1}{2}} \bttheta\r)^\top \cdot \bH \cdot \l(\bA + \bPsi^{\frac{1}{2}} \bttheta \bUtr^\top \r) - \tbbeta^{*\top} - \bttheta^\top \bPsi^{\frac{1}{2}}\r] \bH \bUtr\bigg\} \notag \\
        &= 2 \sigma^2 \bUlast \l[\bb^\top \bH \bA - \tbbeta^{*\top} \r] \bH \bUtr 
        + 2 \sigma^2 \bUlast^2 \tr\l(\bH \bPsi\r) \cdot \bUtr^\top \bH \bUtr. 
    \end{align}

    \

    \paragraph{Step 4: combine all parts.}
    Combining the four parts \eqref{eqn.expression.S1}, \eqref{eqn.expression.S2}, \eqref{eqn.expression.S3} and \eqref{eqn.expression.S4}, we have
    \begin{equation*}
        \risk\l(f\r) - \sigma^2 
        = S_1 + S_2 + S_3 + S_4 
    \end{equation*}
    We remark that in the \eqref{eqn.expression.S1}, \eqref{eqn.expression.S2}, \eqref{eqn.expression.S3} and \eqref{eqn.expression.S4}, the parameters are $\bA,\bb,\bUlast$ and $\bUtr,$ instead of the original parameter $\bUtl, \bUtr, \bUbl, \bUlast.$ From the definitions of $\bA$ and $\bb,$ we know there is a bijective map between $\l(\bUtl, \bUtr, \bUbl, \bUlast\r)$ and $\l(\bA, \bb, \bUtr, \bUlast\r),$ so these two parameterizations are equivalent when computing the minimal risk achieved by a model in a hypothesis class.
    From the expression of $S_1, S_2, s_3, S_4,$ we can rewrite the risk as 
    \begin{align}\label{eqn.risk.LSA.expression}
        &\risk\l(f\r) - \sigma^2 \notag \\
        =~& \bb^\top \l(\frac{M+1}{M}\bH \bPsi \bH + \frac{1}{M}\tr(\bH \bPsi) \bH\r) \bb \cdot \bUtr^\top \bH \bUtr + \frac{\sigma^2}{M} \cdot \bb^\top \bH \bb \cdot \bUtr^\top \bH \bUtr \notag \\
        &\hspace{7ex}+ \bUlast^2 \tr \l(\bA^\top \l(\frac{M+1}{M}\bH \bPsi \bH + \frac{1}{M}\tr(\bH \bPsi) \bH\r) \bA \bH\r) + \frac{\sigma^2}{M} \cdot \tr\l(\bH \bA \bH \bA^\top\r) \notag \\
        &\underbrace{\hspace{7ex}+ 2\bUlast \bb^\top \l[\frac{1}{M} \tr(\bH \bPsi) \bH + \frac{M+1}{M} \bH \bPsi \bH + \frac{\sigma^2}{M} \cdot \bH\r] \bA \bH \bUtr 
        - 2 \bUlast \tr\l(\bH \bA \bH \bPsi\r) 
        - 2 \bUtr^\top \bH \bPsi \bH \bb}_{\text{I}} \notag \\
        +~& \underbrace{\frac{1}{M} \bigg[\bb^\top\bH \bb \cdot \tr\l(\bA^\top \bH \bA \bH\r) 
        + 4 \bUlast \cdot \bb^\top \bH \bPsi \bH \bA \bH \bUtr 
        + 4 \bUlast^2 \cdot \tr\l(\bH \bPsi \bH \bPsi\r) \cdot \bUtr^\top \bH \bUtr \bigg]}_{\text{II}} \notag \\
        +~& \underbrace{\bb^\top \l(\frac{M+1}{M}\bH \bA \bH \bA^\top \bH\r) \bb 
        - 2  \bb^\top \bH \bA \bH \bbeta^* 
        + 2 \bUlast \tr(\bH \bPsi) \cdot \bb^\top \bH \bA \bH \bUtr 
        + 2 \sigma^2 \bb^\top \bH \bA \bH \l(\bUlast \bUtr\r)}_{\text{III}}
        \notag \\
        +~& \bbeta^{*\top} \bH \bbeta^* - 2 \l(\tr\l(\bH \bPsi\r) + \sigma^2\r) \cdot \bbeta^{*\top} \bH \l(\bUlast \bUtr\r) 
        + \tr\l(\bH \bPsi\r) \notag \\
        & \underbrace{\hspace{7ex}+ \bUlast^2  \cdot \bigg[\l(2\tr\l(\bH \bPsi \bH \bPsi\r) 
        + \frac{M+2}{M}\tr\l(\bH \bPsi\r)^2 \r) \cdot \bUtr^\top \bH \bUtr 
        + \l(2+\frac{4}{M}\r) \sigma^2 \tr\l(\bH \bPsi\r) + \frac{M+2}{M} \sigma^4\bigg] \cdot \bUtr^\top \bH \bUtr}_{\text{IV}} \notag\\
        =~& \text{I} + \text{II} + \text{III} + \text{IV},
    \end{align}
    where I, II, III, IV are defined as above. 

    \ 

    \paragraph{Step 5: lower bound the risk function.}
    Let's first define a new matrix, which by definition is invertible:
    \begin{equation}
        \bOmega := \frac{M+1}{M} \bH^{\frac{1}{2}} \bPsi \bH^{\frac{1}{2}} + \frac{\tr\l(\bH \bPsi\r) + \sigma^2}{M} \bID_d.
    \end{equation}
    So we can write I as
    \begin{align}
        \text{I}
        &= \bb^\top \bH^{\frac{1}{2}} \bOmega \bH^{\frac{1}{2}} \bb \cdot \bUtr^\top \bH \bUtr 
        + \bUlast^2 \tr\l(\bA^\top \bH^{\frac{1}{2}} \bOmega \bH^{\frac{1}{2}} \bA \bH\r) + 2 \bUlast \bb^\top \bH^{\frac{1}{2}} \bOmega \bH^{\frac{1}{2}} \bA \bH \bUtr 
        - - 2 \bUlast \tr\l(\bH \bA \bH \bPsi\r) 
        - 2 \bUtr^\top \bH \bPsi \bH \bb \notag \\
        &= \tr \Bigg[\bigg(\bH^{\frac{1}{2}} \bUtr \bb^\top \bH^{\frac{1}{2}} + \bUlast \bH^{\frac{1}{2}} \bA^\top \bH^{\frac{1}{2}} - \bH^{\frac{1}{2}} \bPsi \bH^{\frac{1}{2}} \bOmega^{-1}\bigg) 
        \bOmega 
        \bigg(\bH^{\frac{1}{2}} \bUtr \bb^\top \bH^{\frac{1}{2}} + \bUlast \bH^{\frac{1}{2}} \bA^\top \bH^{\frac{1}{2}} - \bH^{\frac{1}{2}} \bPsi \bH^{\frac{1}{2}} \bOmega^{-1}\bigg)^\top\Bigg] \notag \\
        &\hspace{10ex} - \tr\l(\bH^{\frac{1}{2}} \bPsi \bH^{\frac{1}{2}} \bOmega^{-1} \bH^{\frac{1}{2}} \bPsi \bH^{\frac{1}{2}}\r) \notag \\
        &\geq - \tr\l(\bH^{\frac{1}{2}} \bPsi \bH^{\frac{1}{2}} \bOmega^{-1} \bH^{\frac{1}{2}} \bPsi \bH^{\frac{1}{2}}\r) \label{eqn.lower.bound.I}\\
        &= - \tr \l(\l(\bH^{\frac{1}{2}} \bPsi \bH^{\frac{1}{2}}\r)^2 \bOmega^{-1}\r), \notag
    \end{align}
    where the last line comes from the fact that $\bOmega$ and $\bH^{\frac{1}{2}} \bPsi \bH^{\frac{1}{2}}$ commute.

    Next, we claim II $\geq 0.$ To see this, it suffices to notice that
    \begin{align}
        4 \bUlast \cdot \bb^\top \bH \bPsi \bH \bA \bH \bUtr 
        &\geq - 4 \norm{F}{\bb^\top \bH^{\frac{1}{2}}} 
        \cdot \norm{F}{\bUlast \bH^{\frac{1}{2}} \bPsi \bH^{\frac{1}{2}}} 
        \cdot \norm{F}{\bH^{\frac{1}{2}} \bA \bH^{\frac{1}{2}}}
        \cdot \norm{F}{\bH^{\frac{1}{2}} \bUtr} \notag \\
        &\geq - \norm{F}{\bb^\top \bH^{\frac{1}{2}}}^2 \norm{F}{\bH^{\frac{1}{2}} \bA \bH^{\frac{1}{2}}}^2 - 4 \norm{F}{\bUlast \bH^{\frac{1}{2}} \bPsi \bH^{\frac{1}{2}}}^2 \norm{F}{\bH^{\frac{1}{2}} \bUtr}^2 \notag \\
        &\geq - \bb^\top \bH \bb \cdot \tr\l(\bA^\top \bH \bA \bH\r) 
        - 4 \bUlast^2 \cdot \tr\l(\bH \bPsi \bH \bPsi\r) \cdot \bUtr^\top \bH \bUtr,
    \end{align}
    where the last line comes from the fact that $\norm{F}{\bA}^2 = \tr\l(\bA \bA^\top\r)$ for any matrix $\bA.$

    Then, let's consider III. Notice that actually, III can be viewed as a quadratic function of $\bA^\top \bH \bb,$ which can be easily minimized. More concretely, we have
    \begin{align}
        &\text{III} + \frac{M}{M+1} \bigg(\bbeta^* - \l(\tr\l(\bH \bPsi\r) + \sigma^2 \r) \bUlast \bUtr\bigg)^\top 
        \bH 
        \bigg(\bbeta^* - \l(\tr\l(\bH \bPsi\r) + \sigma^2 \r) \bUlast \bUtr\bigg) \notag \\
        =~& \l[\bA^\top \bH \bb - \frac{M}{M+1} \bigg(\bbeta^* - \l(\tr\l(\bH \bPsi\r) + \sigma^2 \r) \bUlast \bUtr\bigg)\r]^\top \l( \frac{M+1}{M}\bH\r)\l[\bA^\top \bH \bb - \frac{M}{M+1} \bigg(\bbeta^* - \l(\tr\l(\bH \bPsi\r) + \sigma^2 \r) \bUlast \bUtr\bigg)\r] \notag \\
        \geq ~& 0. 
    \end{align}
    Therefore, one has
    \begin{equation*}
        \text{III} \geq 
        - \frac{M}{M+1} \bigg(\bbeta^* - \l(\tr\l(\bH \bPsi\r) + \sigma^2 \r) \bUlast \bUtr\bigg)^\top 
        \bH 
        \bigg(\bbeta^* - \l(\tr\l(\bH \bPsi\r) + \sigma^2 \r) \bUlast \bUtr\bigg).
    \end{equation*}
    Combining three parts above, one has 
    \begin{align*}
        &\risk\l(f\r) - \sigma^2 \\
        \geq~& \text{IV} - \tr \l(\l(\bH^{\frac{1}{2}} \bPsi \bH^{\frac{1}{2}}\r)^2 \bOmega^{-1}\r) - \frac{M}{M+1} \bigg(\bbeta^* - \l(\tr\l(\bH \bPsi\r) + \sigma^2 \r) \bUlast \bUtr\bigg)^\top 
        \bH 
        \bigg(\bbeta^* - \l(\tr\l(\bH \bPsi\r) + \sigma^2 \r) \bUlast \bUtr\bigg) \\
        =~& \underbrace{\tr\l(\bH \bPsi\r) - \tr \l(\l(\bH^{\frac{1}{2}} \bPsi \bH^{\frac{1}{2}}\r)^2 \bOmega^{-1}\r)}_{\inf_{f \in \cF_{\LTB}} \risk(f) - \sigma^2} 
        + \frac{1}{M+1} \bbeta^{*\top} \bH \bbeta^* 
        - \frac{2 \l(\tr\l(\bH \bPsi\r) + \sigma^2\r)}{M+1} \bbeta^{*\top} \bH \l(\bUlast \bUtr\r) \\[15pt]
        &\hspace{10ex} + \l[2 \tr\l(\bH \bPsi \bH \bPsi\r) + \frac{3M+2}{M(M+1)} \l(\tr\l(\bH \bPsi\r) + \sigma^2\r)^2\r] \l(\bUlast \bUtr\r)^\top \bH \l(\bUlast \bUtr\r).
    \end{align*}
    Therefore, one has 
    \begin{align*}
        \risk\l(f\r) - \inf_{f \in \cF_{\LTB}} \risk(f)
        &\geq \frac{1}{M+1} \bbeta^{*\top} \bH \bbeta^* 
        - \frac{2 \l(\tr\l(\bH \bPsi\r) + \sigma^2\r)}{M+1} \bbeta^{*\top} \bH \l(\bUlast \bUtr\r)\\
        &+ \l[2 \tr\l(\bH \bPsi \bH \bPsi\r) + \frac{3M+2}{M(M+1)} \l(\tr\l(\bH \bPsi\r) + \sigma^2\r)^2\r] \l(\bUlast \bUtr\r)^\top \bH \l(\bUlast \bUtr\r).
    \end{align*}
    The right hand side in the above inequality is a quadratic function of $\bUlast \bUtr,$ so we can take its global minimizer:
    \begin{equation*}
        \bUlast \bUtr = \frac{\frac{\l(\tr\l(\bH \bPsi\r) + \sigma^2\r)}{M+1}}{2 \tr\l(\bH \bPsi \bH \bPsi\r) + \frac{3M+2}{M(M+1)} \l(\tr\l(\bH \bPsi\r) + \sigma^2\r)^2} \bbeta^*
    \end{equation*}
    to lower bound the risk gap as
    \begin{align*}
        \risk\l(f\r) - \inf_{f \in \cF_{\LTB}} \risk(f)
        &\geq \l[\frac{1}{M+1} - \frac{\frac{\l(\tr\l(\bH \bPsi\r) + \sigma^2\r)^2}{(M+1)^2}}{2 \tr\l(\bH \bPsi \bH \bPsi\r) + \frac{3M+2}{M(M+1)} \l(\tr\l(\bH \bPsi\r) + \sigma^2\r)^2}\r] \norm{\bH}{\bbeta^*}^2.
    \end{align*}

    Finally, to simplify the results, we notice that on the one hand,
    \begin{equation*}
        \frac{\frac{\l(\tr\l(\bH \bPsi\r) + \sigma^2\r)^2}{(M+1)^2}}{2 \tr\l(\bH \bPsi \bH \bPsi\r) + \frac{3M+2}{M(M+1)} \l(\tr\l(\bH \bPsi\r) + \sigma^2\r)^2} 
        \leq 
        \frac{\l(\tr\l(\bH \bPsi\r) + \sigma^2\r)^2}{2(M+1)^2 \tr\l(\bH \bPsi \bH \bPsi\r)}.
    \end{equation*}
    On the other hand, one has
    \begin{equation*}
        \frac{\frac{\l(\tr\l(\bH \bPsi\r) + \sigma^2\r)^2}{(M+1)^2}}{2 \tr\l(\bH \bPsi \bH \bPsi\r) + \frac{3M+2}{M(M+1)} \l(\tr\l(\bH \bPsi\r) + \sigma^2\r)^2}
        \leq 
        \frac{M}{(M+1)(3M+2)} \leq \frac{1}{3(M+1)}.
    \end{equation*}
    Therefore, we have
    \begin{align*}
        \risk\l(f\r) - \inf_{f \in \cF_{\LTB}} \risk(f)
        &\geq \max \l\{
        \frac{2}{3(M+1)},
        \frac{1}{M+1} - \frac{\l(\tr\l(\bH \bPsi\r) + \sigma^2\r)^2}{2(M+1)^2 \tr\l(\bH \bPsi \bH \bPsi\r)}
        \r\} \cdot \norm{\bH}{\bbeta^*}^2.
    \end{align*}
    When $\frac{1}{M+1} - \frac{\l(\tr\l(\bH \bPsi\r) + \sigma^2\r)^2}{2(M+1)^2 \tr\l(\bH \bPsi \bH \bPsi\r)} \geq \frac{2}{3(M+1)},$ we have $\frac{1}{M+1} \geq \frac{3\l(\tr\l(\bH \bPsi\r) + \sigma^2\r)^2}{2(M+1)^2 \tr\l(\bH \bPsi \bH \bPsi\r)},$ which implies
    \begin{align*}
        \risk\l(f\r) - \inf_{f \in \cF_{\LTB}} \risk(f)
        &\geq \l[\frac{1}{M+1} - \frac{\l(\tr\l(\bH \bPsi\r) + \sigma^2\r)^2}{2(M+1)^2 \tr\l(\bH \bPsi \bH \bPsi\r)}\r] \cdot \norm{\bH}{\bbeta^*}^2 
        \geq \frac{\l(\tr\l(\bH \bPsi\r) + \sigma^2\r)^2}{(M+1)^2 \tr\l(\bH \bPsi \bH \bPsi\r)} \cdot \norm{\bH}{\bbeta^*}^2.
    \end{align*}
    Therefore, we finally have
    \begin{equation*}
        \risk\l(f\r) - \inf_{f \in \cF_{\LTB}} \risk(f)
        \geq \max \l\{
        \frac{2}{3(M+1)},
        \frac{\l(\tr\l(\bH \bPsi\r) + \sigma^2\r)^2}{(M+1)^2 \tr\l(\bH \bPsi \bH \bPsi\r)}
        \r\} \cdot \norm{\bH}{\bbeta^*}^2.
    \end{equation*}
    Note that this holds for an arbitrary $f \in \cF_{\lsa},$ so the proof finishes by taking infimum on the left hand side.
\end{proof}

\section{Proof of Theorem \ref{thm.global.minimum.GDbeta}}\label{appendix.proof.global.optimum.GDbeta}
\begin{proof}
    Recall that from Assumption \ref{assumption.data}, 
    \begin{equation*}
        \mathbf{x} \sim \mathcal{N}(0, \mathbf{H}), \quad y=\mathbf{x}^{\top} \tbbeta + \eps, \quad \mathbf{X}[i] \sim \mathcal{N}(0, \mathbf{H}), \quad \mathbf{y}=\mathbf{X} \tbbeta + \beps,
    \end{equation*}
    where
    \begin{equation*}
        \tbbeta \sim \normal\l(\bbeta^*,\bPsi\r), \quad
        \eps \sim \normal\l(0,\sigma^2\r), \quad
        \beps \sim \normal\l(\bzero,\sigma^2 \cdot \bID_M\r).
    \end{equation*}

    \paragraph{Step 1: compute the risk function.}
    From the independence of $\eps,\beps$ with other random variables, we have
    \begin{align}
        \mathcal{R}_{M}\l(\bbeta,\bGamma\r)
        &= \E \l(\bx^\top \l(\tbbeta - \bbeta\r) + \eps + \bx^\top \bGamma \cdot \frac{\bX^\top \bX}{M} \l(\bbeta - \tbbeta\r) - \bx^\top \bGamma \cdot \frac{\bX^\top \beps}{M}\r)^2 \notag \\
        &= \E \l(\bx^\top \l(\bID_d - \bGamma \cdot \frac{\bX^\top \bX}{M}\r) \cdot \l(\tbbeta - \bbeta\r)\r)^2
        + \frac{1}{M^2} \E \l(\bx^\top \bGamma \bX^\top \beps\r)^2 + \sigma^2 \notag \\
        &= \l\langle \bH, \E\l(\bID_d - \bGamma \cdot \frac{\bX^\top \bX}{M}\r)^{\otimes 2} \circ \E \l(\tbbeta - \bbeta\r)^{\otimes 2}\r\rangle + \frac{1}{M^2} \E \l(\bx^\top \bGamma \bX^\top \beps\r)^2 + \sigma^2, \label{eqn.icl.risk1}
    \end{align}
    where the last line comes from the independence between $\tbbeta$ and $\bX,\bx,$ as well as the fact that $\bx \sim \normal\l(\bzero,\bH\r)$ and the property of tensor product $\circ.$ Note that, $\bbeta$ ad $\bGamma$ are learnable parameters here, and we should set them apart from the task vector $\tbbeta$ or the prior mean vector $\tbbeta^*.$ Recall that $\tbbeta \sim \normal\l(\bbeta^*,\bPsi\r),$ we have $\E \l(\tbbeta - \bbeta\r)^{\otimes 2} = \l(\bbeta - \bbeta^*\r)^{\otimes 2} + \bPsi.$ Moreover, using the following
    \begin{equation*}
        \bx \sim \normal\l(\bzero, \bH\r), \quad 
        \bX[i] \sim \normal \l(\bzero,\bH\r),
        \quad
        \beps \sim \normal \l(\bzero, \sigma^2 \cdot \bID_M\r),
    \end{equation*}
    we have that
    \begin{equation*}
        \E \l(\bx^\top \bGamma \bX^\top \beps\r)^2
        = \sigma^2 \E \tr \l(\bX \bGamma^\top \bx \bx^\top \bGamma \bX^\top\r)
        = M \sigma^2 \cdot \l\langle \bH, \bGamma \bH,\bGamma^\top\r\rangle.
    \end{equation*}
    Bridging the two terms above into \eqref{eqn.icl.risk1}, we get
    \begin{align}
        \mathcal{R}_{M}\l(\bbeta,\bGamma\r)
        &= \l\langle \bH, \E\l(\bID_d - \bGamma \cdot \frac{\bX^\top \bX}{M}\r)^{\otimes 2} \circ \l(\bbeta - \bbeta^*\r)^{\otimes 2}\r\rangle 
        + \l\langle \bH, \E\l(\bID_d - \bGamma \cdot \frac{\bX^\top \bX}{M}\r)^{\otimes 2} \circ \bPsi + \frac{\sigma^2}{M} \bGamma \bH \bGamma^\top\r\rangle 
        + \sigma^2 \notag \\
        &= \underbrace{\l\langle \E\l(\bID_d - \bGamma \cdot \frac{\bX^\top \bX}{M}\r)^{\otimes 2} \circ \bH,  \l(\bbeta - \bbeta^*\r)^{\otimes 2}\r\rangle}_{V_1} 
        + \underbrace{\l\langle \bH, \E\l(\l(\bID_d - \bGamma \cdot \frac{\bX^\top \bX}{M}\r)^\top\r)^{\otimes 2} \circ \bPsi + \frac{\sigma^2}{M} \bGamma \bH \bGamma^\top\r\rangle}_{V_2} 
        + \sigma^2
        \label{eqn.icl.risk2}
    \end{align}

    \ 

    First, let's compute $V_1.$ From the definition above, we have
    \begin{equation*}
        V_1 = \l(\bbeta - \bbeta^*\r)^\top \bHGamma \l(\bbeta - \bbeta^*\r),
    \end{equation*}
    where
    \begin{align}
        \bHGamma :&= \E\l(\l(\bID_d - \bGamma \cdot \frac{\bX^\top \bX}{M}\r)^\top\r)^{\otimes 2} \circ \bH \\
        &= \E \l(\bID_d - \bGamma \cdot \frac{\bX^\top \bX}{M}\r)^\top \bH \l(\bID_d - \bGamma \cdot \frac{\bX^\top \bX}{M}\r) \tag{the definition of the tensor product} \\
        &= \bH - \bH \l(\bGamma + \bGamma^\top\r) \bH
        + \frac{\tr\l(\bH \bGamma^\top \bH \bGamma\r)}{M} \bH + \frac{M+1}{M} \bH \bGamma^\top \bH \bGamma \bH \tag{$\bX[i] \sim \normal\l(\bzero,\bH\r)$ and Lemma \ref{lem:fourth_moment}} \\
        &= \l(\bID_d - \bGamma\bH\r)^\top \bH \l(\bID_d - \bGamma\bH\r)
        + \frac{\tr\l(\bH \bGamma^\top \bH \bGamma\r)}{M} \bH + \frac{1}{M} \bH \bGamma^\top \bH \bGamma \bH
        \succeq \bzero. \label{eqn.icl.risk3}
    \end{align}

    \ 

    Then, let's compute $V_2.$ We have
    \begin{align}
        \E\l(\bID_d - \bGamma \cdot \frac{\bX^\top \bX}{M}\r)^{\otimes 2} \circ \bPsi + \frac{\sigma^2}{M} \bGamma \bH \bGamma^\top
        &= \E\l(\bID_d - \bGamma \cdot \frac{\bX^\top \bX}{M}\r) \bPsi \l(\bID_d - \bGamma \cdot \frac{\bX^\top \bX}{M}\r)^\top + \frac{\sigma^2}{M} \bGamma \bH \bGamma^\top \notag \\
        &= \bPsi - \bGamma \bH \bPsi
        - \bPsi \bH \bGamma^\top 
        + \bGamma \l(\frac{\tr\l(\bH \bPsi\r) + \sigma^2}{M}\cdot \bH + \frac{M+1}{M} \bH \bPsi \bH\r) \bGamma^\top \tag{$\bX[i] \iid \normal\l(\bzero,\bH\r)$ and the Lemma \ref{lem:fourth_moment}} \label{eqn.icl.risk6}.
    \end{align}
    From the definition of $\bOmega$ in \eqref{eqn.def.bOmega}, we know that it is invertible. Moreover, it holds that
    \begin{align*}
        \E\l(\bID_d - \bGamma \cdot \frac{\bX^\top \bX}{M}\r)^{\otimes 2} \circ \bPsi + \frac{\sigma^2}{M} \bGamma \bH \bGamma^\top
        &= \bPsi - \bGamma \bH \bPsi
        - \bPsi \bH \bGamma^\top
        + \bGamma \bH^{\frac{1}{2}} \bOmega \bH^{\frac{1}{2}} \bGamma^\top \\
        &= \bigg(\bGamma \bH^{\frac{1}{2}} - \bPsi \bH^{\frac{1}{2}} \bOmega^{-1}\bigg) \bOmega \bigg(\bGamma \bH^{\frac{1}{2}} - \bPsi \bH^{\frac{1}{2}} \bOmega^{-1}\bigg)^\top + \bPsi - \bPsi \bH^{\frac{1}{2}} \bOmega^{-1} \bH^{\frac{1}{2}} \bPsi.
    \end{align*}
    Therefore, we have 
    \begin{align*}
        V_2 &= \tr\l[ \bigg(\bH^{\frac{1}{2}} \bGamma \bH^{\frac{1}{2}} - \bH^{\frac{1}{2}} \bPsi \bH^{\frac{1}{2}} \bOmega^{-1}\bigg) \bOmega \bigg(\bH^{\frac{1}{2}} \bGamma \bH^{\frac{1}{2}} - \bH^{\frac{1}{2}} \bPsi \bH^{\frac{1}{2}} \bOmega^{-1}\bigg)^\top\r] 
        + \tr\l(\bH \bPsi\r) 
        - \tr\l(\bH^{\frac{1}{2}} \bPsi \bH^{\frac{1}{2}} \bOmega^{-1} \bH^{\frac{1}{2}} \bPsi \bH^{\frac{1}{2}}\r) 
    \end{align*}

    \ 

    \paragraph{Step 2: solve the global minimum.}
    Now we have
    \begin{align}
        &\risk\l(\bbeta,\bGamma\r) - \sigma^2
        = \l(\bbeta - \bbeta^*\r)^\top \l[\l(\bID_d - \bGamma\bH\r)^\top \bH \l(\bID_d - \bGamma\bH\r)
        + \frac{\tr\l(\bH \bGamma^\top \bH \bGamma\r)}{M} \bH + \frac{1}{M} \bH \bGamma^\top \bH \bGamma \bH\r] \l(\bbeta - \bbeta^*\r) \notag \\
        &+ \tr\l[ \bigg(\bH^{\frac{1}{2}} \bGamma \bH^{\frac{1}{2}} - \bH^{\frac{1}{2}} \bPsi \bH^{\frac{1}{2}} \bOmega^{-1}\bigg) \bOmega \bigg(\bH^{\frac{1}{2}} \bGamma \bH^{\frac{1}{2}} - \bH^{\frac{1}{2}} \bPsi \bH^{\frac{1}{2}} \bOmega^{-1}\bigg)^\top\r] 
        + \tr\l(\bH \bPsi\r) 
        - \tr\l(\bH^{\frac{1}{2}} \bPsi \bH^{\frac{1}{2}} \bOmega^{-1} \bH^{\frac{1}{2}} \bPsi \bH^{\frac{1}{2}}\r) \label{eqn.risk.GDbeta.eq1} \\
        &\geq \tr\l(\bH \bPsi\r) 
        - \tr\l(\bH^{\frac{1}{2}} \bPsi \bH^{\frac{1}{2}} \bOmega^{-1} \bH^{\frac{1}{2}} \bPsi \bH^{\frac{1}{2}}\r) \notag \\
        &= \tr\l(\bH \bPsi\r) 
        - \tr\l( \l(\bH^{\frac{1}{2}} \bPsi \bH^{\frac{1}{2}}\r)^2 \bOmega^{-1}\r), \notag
    \end{align}
    where the last line comes from the fact that $\bOmega$ and $\bH^{\frac{1}{2}} \bPsi \bH^{\frac{1}{2}}$ commute. Taking infimum over all $\bbeta, \bGamma$ in the left hand side, we have
    \begin{equation*}
        \inf_{f \in \cF_{\GDbeta}} \risk (f) - \sigma^2
        = \inf_{\bbeta,\bGamma} \risk\l(\bbeta,\bGamma\r) - \sigma^2 
        \geq \tr\l(\bH \bPsi\r) 
        - \tr\l( \l(\bH^{\frac{1}{2}} \bPsi \bH^{\frac{1}{2}}\r)^2 \bOmega^{-1}\r).
    \end{equation*}
    On the other hand, we take
    \begin{equation}
        \bbeta = \bbeta^*, \quad
        \bGamma = \bGamma^* := \bPsi \bH^{\frac{1}{2}} \bOmega^{-1} \bH^{-\frac{1}{2}},
    \end{equation}
    where $\bH^{\frac{1}{2}}$ is the principle square root of $\bH$ and $\bH^{-\frac{1}{2}}$ is its Moore-Penrose pseudo inverse (see notation part in Section \ref{sec.variable.transformation}). Then, we have 
    \begin{align*}
        &\risk \l(\bbeta^*, \bGamma^*\r) -\sigma^2 \\
        =~& \tr\l[ \bigg(\bH^{\frac{1}{2}} \bGamma^* \bH^{\frac{1}{2}} - \bH^{\frac{1}{2}} \bPsi \bH^{\frac{1}{2}} \bOmega^{-1}\bigg) \bOmega \bigg(\bH^{\frac{1}{2}} \bGamma^* \bH^{\frac{1}{2}} - \bH^{\frac{1}{2}} \bPsi \bH^{\frac{1}{2}} \bOmega^{-1}\bigg)^\top\r] 
        + \tr\l(\bH \bPsi\r) 
        - \tr\l(\bH^{\frac{1}{2}} \bPsi \bH^{\frac{1}{2}} \bOmega^{-1} \bH^{\frac{1}{2}} \bPsi \bH^{\frac{1}{2}}\r).
    \end{align*}
    Since
    \begin{align*}
        \bH^{\frac{1}{2}} \bGamma^* \bH^{\frac{1}{2}} - \bH^{\frac{1}{2}}
        - \bH^{\frac{1}{2}} \bPsi \bH^{\frac{1}{2}} \bOmega^{-1}
        &= \bH^{\frac{1}{2}} \bPsi \bH^{\frac{1}{2}} \bOmega^{-1} \bH^{-\frac{1}{2}} \bH^{\frac{1}{2}} 
        - \bH^{\frac{1}{2}} \bPsi \bH^{\frac{1}{2}} \bOmega^{-1} \\
        &= \bOmega^{-1} \bH^{\frac{1}{2}} \bPsi \bH^{\frac{1}{2}}
        - \bOmega^{-1} \bH^{\frac{1}{2}} \bPsi \bH^{\frac{1}{2}} \bH^{-\frac{1}{2}} \bH^{\frac{1}{2}} \tag{$\bOmega$ and $\bH^{\frac{1}{2}} \bPsi \bH^{\frac{1}{2}}$ commute} \\
        &= \bOmega^{-1} \bH^{\frac{1}{2}} \bPsi \bH^{\frac{1}{2}} - \bOmega^{-1} \bH^{\frac{1}{2}} \bPsi \bH^{\frac{1}{2}} = \bzero_{d \times d}. \tag{$\bH^{\frac{1}{2}} \bH^{-\frac{1}{2}} \bH^{\frac{1}{2}} = \bH^{\frac{1}{2}}$}.
    \end{align*}
    Therefore, we have
    \begin{equation*}
        \risk \l(\bbeta^*, \bGamma^*\r) - \sigma^2
        = \tr\l(\bH \bPsi\r) 
        - \tr\l( \l(\bH^{\frac{1}{2}} \bPsi \bH^{\frac{1}{2}}\r)^2 \bOmega^{-1}\r)
        \geq \inf_{f \in \cF_{\GDbeta}} \risk (f) - \sigma^2
        = \inf_{\bbeta,\bGamma} \risk\l(\bbeta,\bGamma\r) - \sigma^2.
    \end{equation*}
    Combining both directions, we conclude that
    \begin{equation}\label{eqn.risk.GDbeta.eq2}
        \inf_{f \in \cF_{\GDbeta}} \risk (f) - \sigma^2
        = \inf_{\bbeta,\bGamma} \risk\l(\bbeta,\bGamma\r) - \sigma^2
        = \tr\l(\bH \bPsi\r) 
        - \tr\l( \l(\bH^{\frac{1}{2}} \bPsi \bH^{\frac{1}{2}}\r)^2 \bOmega^{-1}\r).
    \end{equation}

    \ 

    \paragraph{Step 3: identify all global minimizers.}
    Above we show that $\risk\l(\bbeta^*,\bGamma^*\r)$ achieves the global minimum of the ICL risk over $\GDbeta$ class. Now we will figure out the sufficient and necessary condition to achieve the globally minimal risk. From the proof above, we know that for any $\bbeta \in \R^d, \bGamma \in \R^{d\times d},$ it holds that
    \begin{align*}
        \risk\l(\bbeta,\bGamma\r) = \inf_{f \in \cF_{\GDbeta}} \risk (f)
        &\Longleftrightarrow 
        \left\{
        \begin{aligned}
            &V_1 = \l(\bbeta - \bbeta^*\r)^\top \bHGamma \l(\bbeta - \bbeta^*\r) = 0 \\
            &\tr\l[ \bigg(\bH^{\frac{1}{2}} \bGamma \bH^{\frac{1}{2}} - \bH^{\frac{1}{2}} \bPsi \bH^{\frac{1}{2}} \bOmega^{-1}\bigg) \bOmega \bigg(\bH^{\frac{1}{2}} \bGamma \bH^{\frac{1}{2}} - \bH^{\frac{1}{2}} \bPsi \bH^{\frac{1}{2}} \bOmega^{-1}\bigg)^\top\r] = 0
        \end{aligned}
        \right.
    \end{align*}
    Since $\bOmega$ is invertible, the second equation is equivalent to
    \begin{equation*}
         \bH^{\frac{1}{2}} \bGamma \bH^{\frac{1}{2}} - \bH^{\frac{1}{2}} \bPsi \bH^{\frac{1}{2}} \bOmega^{-1} = \bzero_{d \times d},
    \end{equation*}
    which is a linear system of $\bGamma.$ Since $\bGamma^*$ is proved to be one solution, all solutions of this equation is
    \begin{equation*}
        \bGamma = \bGamma^* + \l\{\bZ \in \R^{d\times d}: \bH^{\frac{1}{2}} \bZ \bH^{\frac{1}{2}} = \bzero_{d\times d}\r\}
        = \bGamma^* + \nullset{\bH^{\otimes 2}},
    \end{equation*}
    where the last equation comes from Lemma \ref{lem.tensor.product}. Here, $+$ denotes Minkowski sum. This is defined as $A + B = \l\{a+b, a \in A, b \in B\r\}$ for two sets $A,B$ and $a + B = \{a\} + B$ for an entry $a$ and a set $B.$ Under this condition, we know that 
    \begin{equation*}
        \bHGamma = \bH - \bH \l(\bGamma^* + \bGamma^{*\top}_M\r) \bH
        + \frac{\tr\l(\bH \bGamma^{*\top}_M \bH \bGamma^*\r)}{M} \bH + \frac{M+1}{M} \bH \bGamma^{*\top}_M \bH \bGamma^* \bH.
    \end{equation*}
    Consider the following inequality:
    \begin{align*}
        &\bID_d 
        - \bH^{\frac{1}{2}} \l(\bGamma^* + \bGamma^{*\top}\r) \bH^{\frac{1}{2}} 
        + \frac{\tr\l(\bH \bGamma^{*\top}_M \bH \bGamma^*\r)}{M} \bID_d 
        + \frac{M+1}{M} \bH^{\frac{1}{2}} \bGamma^{*\top}_M \bH \bGamma^*  \bH^{\frac{1}{2}} \\
        \succeq~ & \bID_d 
        - \bH^{\frac{1}{2}} \l(\bGamma^* + \bGamma^{*\top}\r) \bH^{\frac{1}{2}} + \frac{M+1}{M} \bH^{\frac{1}{2}} \bGamma^{*\top}_M \bH \bGamma^*  \bH^{\frac{1}{2}} \\
        =~ & \l(\sqrt{\frac{M}{M+1}} \bID_d - \sqrt{\frac{M+1}{M}} \bH^{\frac{1}{2}} \bGamma^* \bH^{\frac{1}{2}}\r) \cdot \l(\sqrt{\frac{M}{M+1}} \bID_d - \sqrt{\frac{M+1}{M}} \bH^{\frac{1}{2}} \bGamma^* \bH^{\frac{1}{2}}\r)^\top + \frac{M}{M+1} \bID_d \\
        \succ ~& \bzero_{d\times d}.
    \end{align*}
    Therefore, we have
    \begin{align*}
        &\l(\bbeta - \bbeta^*\r)^\top \bHGamma \l(\bbeta - \bbeta^*\r) = 0 \\
        \Longleftrightarrow ~ & \l[\l(\bbeta - \bbeta^*\r)^\top \bH^{\frac{1}{2}} \r] \l(\bID_d 
        - \bH^{\frac{1}{2}} \l(\bGamma^* + \bGamma^{*\top}\r) \bH^{\frac{1}{2}} 
        + \frac{\tr\l(\bH \bGamma^{*\top}_M \bH \bGamma^*\r)}{M} \bID_d 
        + \frac{M+1}{M} \bH^{\frac{1}{2}} \bGamma^{*\top}_M \bH \bGamma^*  \bH^{\frac{1}{2}}\r) \l[\bH^{\frac{1}{2}} \l(\bbeta - \bbeta^*\r) \r] = 0 \\
        \Longleftrightarrow ~ & \bH^{\frac{1}{2}} \l(\bbeta - \bbeta^*\r) = \bzero_d
        \Longleftrightarrow ~ \bbeta = \bbeta^* + \nullset{\bH^{\frac{1}{2}}}
        \Longleftrightarrow ~ \bbeta = \bbeta^* + \nullset{\bH},
    \end{align*}
    where $\nullset{\cdot}$ denotes the null space of a matrix and $+$ denotes the Minkowski sum. The last equivalence comes from Lemma \ref{lem.tensor.product}. Therefore, we conclude that the $f_{\bbeta,\bGamma}$ achieves the minimal ICL risk in $\GDbeta$ class if and only if
    \begin{equation*}
        \bbeta = \bbeta^* + \nullset{\bH},\quad
        \bGamma = \bGamma^* + \nullset{\bH^{\otimes 2}}.
    \end{equation*}
    Specially, if $\bH$ is positive definite, there is unique global minimizer in $\GDbeta$ class with parameters
    \begin{equation*}
        \bbeta = \bbeta^*, \quad
        \bGamma = \bGamma^*.
    \end{equation*}

    \ 

    \paragraph{Step 4: equivalence in the hypothesis class.}
    Finally, we show when $\bH$ is rank-deficient, any global minimizer actually corresponds to one single function in $\cF_{\GDbeta}$ almost surely. For an arbitrary global minimizer, we assume $\bbeta = \bbeta^* + \bh, \bGamma = \bGamma^* + \bZ,$ where $\bh \in \nullset{\bH}, \bZ \in \nullset{\bH^{\otimes 2}}.$ Suppose we have a prompt $\bX,\by,\bx,y$ which follows the Assumption \ref{assumption.data} and the token matrix is formed by \eqref{eqn.def.token.matrix}, we have
    \begin{align*}
        f_{\bbeta,\bGamma}\l(\bE\r) 
        &= \l\langle \bbeta - \frac{\bGamma}{M} \bX^\top \l(\bX \bbeta - \by\r), \bx \r\rangle \\
        &= \l\langle \bbeta^* - \frac{\bGamma^*}{M} \bX^\top \l(\bX \bbeta^* - \by\r), \bx \r\rangle \\
        &\hspace{10ex}
        + \bh^\top \bx
        - \frac{1}{M} \bx^\top \bZ \bX^\top \bX \bbeta^*
        - \frac{1}{M} \bx^\top \bZ \bX^\top \bX \bh
        - \frac{1}{M} \bx^\top \bGamma^* \bX^\top \bX \bh
        + \bx^\top \bZ \bX^\top \by.
    \end{align*}
    Notice that $\bX[i], \bx \iid \normal\l(\bzero_d,\bH\r),$ we know there exist $\bX[i]^\prime, \bx^\prime \iid \normal\l(\bzero_d,\bID_d\r)$ such that $\bX[i] = \bH^{\frac{1}{2}} \bX[i]^\prime, \bx = \bH^{\frac{1}{2}} \bx^\prime.$
    From $\bh \in \nullset{\bH}$ and $\nullset{\bH} = \nullset{\bH^{\frac{1}{2}}}$ (Lemma \ref{lem.linear.algebra}), we know
    \begin{equation*}
        \bh^\top \bx = \bh^\top \bH^{\frac{1}{2}} \bx^\prime = 0.
    \end{equation*}
    Similarly, we have $\bX \bh = \bzero_d$. Finally, from $\bZ \in \nullset{\bH^{\otimes 2}}$ and $\nullset{\bH^{\otimes 2}} = \nullset{(\bH^{\frac{1}{2}})^{\otimes 2}}$ (Lemma \ref{lem.tensor.product}),
    we have
    \begin{equation*}
        \bx^\top \bZ \bX^\top = \bx^{\prime \top} \bH^{\frac{1}{2}} \bZ \bH^{\frac{1}{2}} \bX^\prime = \bzero_d^\top.
    \end{equation*}
    Therefore, we conclude
    \begin{equation*}
        f_{\bbeta,\bGamma}\l(\bE\r)
        = f_{\bbeta^*,\bGamma^*}\l(\bE\r).
    \end{equation*}
\end{proof}

\section{Proof of Theorem \ref{thm.global.optimum.LTB}}\label{appendix.proof.global.optimum.LTB}
\begin{proof}
    Recall the definition of Linear Transformer Block class:
    \begin{align*}
    f_{\mathsf{TFB}} : &\ \R^{(d+1) \times (M+1)} \to \R \\ 
       &\ \bE \mapsto  \bigg[\bW_2^\top \bW_1 \bigg(\bE + \WP^\top \ \WV \bE \bM \frac{\bE^\top \WK^\top \ \WQ \bE}{M}\bigg)\bigg]_{-1,-1},
    \end{align*}
    where $\bE$ is the token matrix defined by \eqref{eqn.def.token.matrix}. Let's take an arbitrary function $f$ in the LTB class with trainable matrices
    \begin{equation*}
        \WK, \ \WQ \in \R^{d_k \times (d+1)}, \quad
        \WP, \ \WV \in \R^{d_v \times (d+1)}, \quad
        \bW_1, \bW_2 \in \R^{\dff \times (d+1)}.
    \end{equation*}
    Similar to the proof of the Theorem \ref{thm.lower.bound}, we know that only the last row of $\bW_2^\top \bW_1$ and the first $d$-columns of $\WK^\top \WQ$ attend the prediction. Therefore, we write 
    \begin{equation}\label{eqn.LTB.global.minimum.variables}
        \bW_2^\top \bW_1= 
        \begin{pmatrix}[1.5]
            * & * \\
            \bgamma^\top & *
        \end{pmatrix}, \quad 
        \bW_2^\top \bW_1 \ \WP^\top \WV = 
        \begin{pmatrix}[1.5]
            * & * \\
            \bVbl^\top & \bVlast
        \end{pmatrix}, \quad
        \WK^\top \WQ = 
        \begin{pmatrix}[1.5]
            \bVtl & * \\
            \bVtr^\top & *
        \end{pmatrix},
    \end{equation}
    where $\bgamma, \bVtr, \bVbl, \in \R^d, \bVlast \in \R, \bVtl \in \R^{d\times d},$ and $*$ denotes entries that do not enter the prediction. Then, the prediction of LTB function can be written as
    \begin{align*}
        f(\bE) &= \bgamma^\top \bx +
        \begin{pmatrix}[1.5]
            \bVbl^\top & \bVlast
        \end{pmatrix} \cdot \frac{\bE \bM_M \bE^\top}{M} \cdot
        \begin{pmatrix}[1.5]
            \bVtl \\ \bVtr^\top
        \end{pmatrix} \cdot \bx \\
        &= \left[\bgamma^\top + \bVbl^\top \cdot \frac{1}{M} \bX^\top \bX \cdot \bVtl 
        + \bVbl^\top \cdot \frac{1}{M} \bX^\top \by \cdot \bVtr^\top 
        + \bVlast \cdot \frac{1}{M} \by^\top \bX \cdot \bVtl 
        + \bVlast \cdot \frac{1}{M} \by^\top \by \cdot \bVtr^\top \right] \cdot \bx.
    \end{align*}

    \

    \paragraph{Step 1: simplify the risk function.}
     We use $\tbbeta$ to denote the task parameter. From the Assumption \ref{assumption.data} and Definition \ref{def.variable.transformation}, we have
    \begin{equation*}
        \by = \bX \tbbeta + \beps, \quad 
        y = \sip{\tbbeta}{\bx} + \eps, \quad 
        \tbbeta \sim \normal\l(\bbeta^*, \bPsi\r), \quad
        \tbbeta = \bbeta^* + \bPsi^{\frac{1}{2}} \bttheta;
    \end{equation*}
    and
    \begin{equation*}
        \bX[i],\bx \iid \normal\l(\bzero, \bH\r), \quad
        \beps[i], \eps \iid \normal\l(0,\sigma^2\r), \quad
        \bttheta \sim \normal\l(\bzero, \bID_d\r).
    \end{equation*}
    Then the model output can be written as
    \begin{align*}
        f(\bE)
        &= \left[\bgamma^\top + \bVbl^\top \cdot \frac{1}{M} \bX^\top \bX \cdot \bVtl 
        + \bVbl^\top \cdot \frac{1}{M} \bX^\top \by \cdot \bVtr^\top 
        + \bVlast \cdot \frac{1}{M} \by^\top \bX \cdot \bVtl 
        + \bVlast \cdot \frac{1}{M} \by^\top \by \cdot \bVtr^\top \right] \cdot \bx \\
        &= \left[\bgamma^\top + \l(\bVbl + \bVlast \tbbeta\r)^\top \cdot \frac{1}{M} \bX^\top \bX \cdot \l(\bVtl + \tbbeta \bVtr^\top \r) \right] \cdot \bx \\
        &\quad + \l[\bVbl^\top \cdot \frac{1}{M} \bX^\top \beps \cdot \bVtr^\top 
        + \bVlast \cdot \frac{1}{M} \beps^\top \bX \cdot \bVtl 
        + \bVlast \cdot \frac{2}{M} \beps^\top \bX \tbbeta \bVtr^\top
        + \frac{1}{M} \beps^\top \beps \cdot \bVlast \bVtr^\top\r] \cdot \bx.
    \end{align*}
    To simplify the presentation, we define
    \begin{align*}
        \bz_1^\top &= \l(\bVbl + \bVlast \tbbeta\r)^\top \cdot \frac{1}{M} \bX^\top \bX \cdot \l(\bVtl + \tbbeta \bVtr^\top \r), \\
        \bz_2^\top &= \bVbl^\top \cdot \frac{1}{M} \bX^\top \beps \cdot \bVtr^\top + \bVlast \cdot \frac{1}{M} \beps^\top \bX \cdot \bVtl
        + \bVlast \cdot \frac{2}{M} \beps^\top \bX \tbbeta \bVtr^\top \\
        \bz_3^\top &= \frac{1}{M} \beps^\top \beps \cdot \bVlast \bVtr^\top.
    \end{align*}
    Since $\bx,\bX, \beps, \tbbeta$ are independent, we have
    \begin{align*}
        \risk\l(f\r)
        &= \E \l(f(\bE) - \sip{\tbbeta}{\bx} - \eps\r)^2 \\
        &= \sigma^2 + \E \l(f(\bE) - \sip{\tbbeta}{\bx}\r)^2 \tag{$\eps$ is independent from other variables and zero-mean} \\
        &= \E \l[\sip{\bz_1 + \bz_2 + \bz_3 + \bgamma - \tbbeta}{\bx}^2\r] + \sigma^2 \\
        &= \sip{\bH}{\E \l(\bz_1 + \bz_2 + \bz_3 + \bgamma - \tbbeta\r)\l(\bz_1 + \bz_2 + \bz_3 +\bgamma - \tbbeta\r)^\top} + \sigma^2.
    \end{align*}
    Note that $\bz_1$ does not contain $\beps,$ $\bz_2$ is a linear form of $\beps,$ and $\bz_3$ is a quadratic form of $\beps.$ Using $\beps \sim \normal(\bzero,\sigma^2 \bID_d),$ we have $\E [(\bz_1 + \bgamma - \tbbeta) \cdot \bz_2^\top] = \bzero$ and $\E[\bz_2 \bz_3^\top] = \bzero.$ Therefore, we have
    \begin{align}\label{eqn.risk.LTB.decomposition}
        \risk\l(f\r) - \sigma^2 
        &= \underbrace{\sip{\bH}{\E \l(\bz_1 +\bgamma - \tbbeta\r) \l(\bz_1 +\bgamma - \tbbeta\r)^\top}}_{S^\prime_1}
        + \underbrace{\sip{\bH}{\E \bz_2 \bz_2^\top}}_{S^\prime_2}
        + \underbrace{\sip{\bH}{\E \bz_3 \bz_3^\top}}_{S^\prime_3}
        + \underbrace{2 \sip{\bH}{\E \l(\bz_1 +\bgamma - \tbbeta\r) \bz_3^\top}}_{S^\prime_4}.
    \end{align}

    \ 

    \paragraph{Step 2: compute the risk function.}
    Let's compute the risk function. Note that, the risk function is in the same form as the risk function of the LSA layer, which is in the proof of Theorem \ref{thm.lower.bound} (see Section \ref{appendix.proof.lower.bound} and equation \eqref{eqn.risk.vanilla.gd.decomposition}). There are two differences: one is we replace $\bUtl, \bUtr, \bUbl, \bUlast$ here with $\bVtl, \bVtr, \bVbl, \bVlast.$ The second difference is that we replace $\tbbeta$ in \eqref{eqn.risk.vanilla.gd.decomposition} with $\tbbeta - \bgamma.$ This is equivalent to replacing $\bbeta^*$ with $\bbeta^* - \bgamma,$ since $\tbbeta - \bgamma \sim \normal\l(\bbeta^* - \bgamma, \bPsi\r).$ Therefore, similar to \eqref{eqn.risk.LSA.expression}, the risk function in \eqref{eqn.risk.LTB.decomposition} can be written as
    \begin{align}\label{eqn.risk.LTB.expression}
        &\risk\l(f\r) - \sigma^2 \notag \\
        =~& \bb^\top \l(\frac{M+1}{M}\bH \bPsi \bH + \frac{1}{M}\tr(\bH \bPsi) \bH\r) \bb \cdot \bVtr^\top \bH \bVtr + \frac{\sigma^2}{M} \cdot \bb^\top \bH \bb \cdot \bVtr^\top \bH \bVtr \notag \\
        &\hspace{7ex}+ \bVlast^2 \tr \l(\bA^\top \l(\frac{M+1}{M}\bH \bPsi \bH + \frac{1}{M}\tr(\bH \bPsi) \bH\r) \bA \bH\r) + \frac{\sigma^2}{M} \cdot \tr\l(\bH \bA \bH \bA^\top\r) \notag \\
        &\underbrace{\hspace{7ex}+ 2\bVlast \bb^\top \l[\frac{1}{M} \tr(\bH \bPsi) \bH + \frac{M+1}{M} \bH \bPsi \bH + \frac{\sigma^2}{M} \cdot \bH\r] \bA \bH \bVtr 
        - 2 \bVlast \tr\l(\bH \bA \bH \bPsi\r) 
        - 2 \bVtr^\top \bH \bPsi \bH \bb}_{\text{V}} \notag \\
        +~& \underbrace{\frac{1}{M} \bigg[\bb^\top\bH \bb \cdot \tr\l(\bA^\top \bH \bA \bH\r) 
        + 4 \bVlast \cdot \bb^\top \bH \bPsi \bH \bA \bH \bVtr 
        + 4 \bVlast^2 \cdot \tr\l(\bH \bPsi \bH \bPsi\r) \cdot \bVtr^\top \bH \bVtr \bigg]}_{\text{VI}} \notag \\
        +~& \underbrace{\bb^\top \l(\frac{M+1}{M}\bH \bA \bH \bA^\top \bH\r) \bb 
        - 2  \bb^\top \bH \bA \bH \l(\bbeta^* -\bgamma\r)
        + 2 \bVlast \tr(\bH \bPsi) \cdot \bb^\top \bH \bA \bH \bVtr 
        + 2 \sigma^2 \bb^\top \bH \bA \bH \l(\bVlast \bVtr\r)}_{\text{VII}}
        \notag \\
        +~& \l(\bbeta^* -\bgamma\r)^\top \bH \l(\bbeta^* -\bgamma\r) - 2 \l(\tr\l(\bH \bPsi\r) + \sigma^2\r) \cdot \l(\bbeta^* -\bgamma\r)^\top \bH \l(\bVlast \bVtr\r) 
        + \tr\l(\bH \bPsi\r) \notag \\
        & \underbrace{\hspace{7ex}+ \bVlast^2  \cdot \bigg[\l(2\tr\l(\bH \bPsi \bH \bPsi\r) 
        + \frac{M+2}{M}\tr\l(\bH \bPsi\r)^2 \r) \cdot \bVtr^\top \bH \bVtr 
        + \l(2+\frac{4}{M}\r) \sigma^2 \tr\l(\bH \bPsi\r) + \frac{M+2}{M} \sigma^4\bigg] \cdot \bVtr^\top \bH \bVtr}_{\text{VIII}} \notag\\
        =~& \text{V} + \text{VI} + \text{VII} + \text{VIII},
    \end{align}
    where V, VI, VII, VIII are defined as above and \[\bOmega = \frac{M+1}{M} \bH^{\frac{1}{2}} \bPsi \bH^{\frac{1}{2}} + \frac{\tr\l(\bH \bPsi\r) + \sigma^2}{M} \cdot \bID_d,\quad \bb := \bVbl + \bVlast \bbeta^* \in \R^d,\quad \bA := \bVtl + \bbeta^* \bVtr^\top \in \R^{d \times d},\]

    \ 

    \paragraph{Step 3: solve the global minimum of the risk function.}
    This is very similar to step 5 in Appendix \ref{appendix.proof.lower.bound}. The V term in \eqref{eqn.risk.LTB.expression} is actually equal to the I term in \eqref{eqn.risk.LSA.expression}, so we have
    \begin{align}
        \text{V}
        &= \tr \Bigg[\bigg(\bH^{\frac{1}{2}} \bVtr \bb^\top \bH^{\frac{1}{2}} + \bVlast \bH^{\frac{1}{2}} \bA^\top \bH^{\frac{1}{2}} - \bH^{\frac{1}{2}} \bPsi \bH^{\frac{1}{2}} \bOmega^{-1}\bigg) 
        \bOmega 
        \bigg(\bH^{\frac{1}{2}} \bVtr \bb^\top \bH^{\frac{1}{2}} + \bVlast \bH^{\frac{1}{2}} \bA^\top \bH^{\frac{1}{2}} - \bH^{\frac{1}{2}} \bPsi \bH^{\frac{1}{2}} \bOmega^{-1}\bigg)^\top\Bigg] \notag \\
        &\hspace{10ex} - \tr\l(\bH^{\frac{1}{2}} \bPsi \bH^{\frac{1}{2}} \bOmega^{-1} \bH^{\frac{1}{2}} \bPsi \bH^{\frac{1}{2}}\r) \notag \\
        &\geq - \tr\l(\bH^{\frac{1}{2}} \bPsi \bH^{\frac{1}{2}} \bOmega^{-1} \bH^{\frac{1}{2}} \bPsi \bH^{\frac{1}{2}}\r) \notag \\
        &= - \tr \l(\l(\bH^{\frac{1}{2}} \bPsi \bH^{\frac{1}{2}}\r)^2 \bOmega^{-1}\r), \notag
    \end{align}
    where the last line comes from the fact that $\bOmega$ and $\bH^{\frac{1}{2}} \bPsi \bH^{\frac{1}{2}}$ commute. For the same reason, we know that the VI term above is equal to the II term in \eqref{eqn.risk.LSA.expression}, which implies VI $\geq 0,$ since
    \begin{align}
        4 \bVlast \cdot \bb^\top \bH \bPsi \bH \bA \bH \bVtr 
        &\geq - 4 \norm{F}{\bb^\top \bH^{\frac{1}{2}}} 
        \cdot \norm{F}{\bVlast \bH^{\frac{1}{2}} \bPsi \bH^{\frac{1}{2}}} 
        \cdot \norm{F}{\bH^{\frac{1}{2}} \bA \bH^{\frac{1}{2}}}
        \cdot \norm{F}{\bH^{\frac{1}{2}} \bVtr} \notag \\
        &\geq - \norm{F}{\bb^\top \bH^{\frac{1}{2}}}^2 \norm{F}{\bH^{\frac{1}{2}} \bA \bH^{\frac{1}{2}}}^2 - 4 \norm{F}{\bVlast \bH^{\frac{1}{2}} \bPsi \bH^{\frac{1}{2}}}^2 \norm{F}{\bH^{\frac{1}{2}} \bVtr}^2 \notag \\
        &= - \bb^\top \bH \bb \cdot \tr\l(\bA^\top \bH \bA \bH\r) 
        - 4 \bVlast^2 \cdot \tr\l(\bH \bPsi \bH \bPsi\r) \cdot \bVtr^\top \bH \bVtr,
    \end{align}
    where the last line comes from the fact that $\norm{F}{\bA}^2 = \tr\l(\bA \bA^\top\r)$ for any matrix $\bA.$ The term VII above is equal to III term in \eqref{eqn.risk.LSA.expression}, except that we replace $\bbeta^*$ with $\bbeta^* - \bgamma.$ Therefore, we have
     \begin{align}
        &\text{VII} + \frac{M}{M+1} \bigg(\bbeta^* - \bgamma - \l(\tr\l(\bH \bPsi\r) + \sigma^2 \r) \bVlast \bVtr\bigg)^\top 
        \bH 
        \bigg(\bbeta^* - \bgamma - \l(\tr\l(\bH \bPsi\r) + \sigma^2 \r) \bVlast \bVtr\bigg) \notag \\
        =~& \l[\bA^\top \bH \bb - \frac{M}{M+1} \bigg(\bbeta^* - \bgamma - \l(\tr\l(\bH \bPsi\r) + \sigma^2 \r) \bVlast \bVtr\bigg)\r]^\top \l( \frac{M+1}{M}\bH\r) \notag \\
        &\hspace{10ex} \cdot \l[\bA^\top \bH \bb - \frac{M}{M+1} \bigg(\bbeta^* - \bgamma - \l(\tr\l(\bH \bPsi\r) + \sigma^2 \r) \bVlast \bVtr\bigg)\r] \notag \\
        \geq ~& 0,
    \end{align}
    which implies
    \begin{equation*}
        \text{VII} \geq 
        - \frac{M}{M+1} \bigg(\bbeta^* -\bgamma - \l(\tr\l(\bH \bPsi\r) + \sigma^2 \r) \bVlast \bVtr\bigg)^\top 
        \bH 
        \bigg(\bbeta^* - \bgamma - \l(\tr\l(\bH \bPsi\r) + \sigma^2 \r) \bVlast \bVtr\bigg).
    \end{equation*}
    Combining three parts above, one has 
    \begin{align*}
        &\risk\l(f\r) - \sigma^2 \\
        \geq~& \text{VIII} - \tr \l(\l(\bH^{\frac{1}{2}} \bPsi \bH^{\frac{1}{2}}\r)^2 \bOmega^{-1}\r) - \frac{M}{M+1} \bigg(\bbeta^* -\bgamma - \l(\tr\l(\bH \bPsi\r) + \sigma^2 \r) \bVlast \bVtr\bigg)^\top 
        \bH 
        \bigg(\bbeta^* -\bgamma - \l(\tr\l(\bH \bPsi\r) + \sigma^2 \r) \bVlast \bVtr\bigg) \\
        =~& \underbrace{\tr\l(\bH \bPsi\r) - \tr \l(\l(\bH^{\frac{1}{2}} \bPsi \bH^{\frac{1}{2}}\r)^2 \bOmega^{-1}\r)}_{\inf_{f \in \cF_{\GDbeta}} \risk(f) - \sigma^2} 
        + \frac{1}{M+1} \l(\bbeta^* -\bgamma\r)^\top \bH \l(\bbeta^* -\bgamma\r)
        - \frac{2 \l(\tr\l(\bH \bPsi\r) + \sigma^2\r)}{M+1} \l(\bbeta^* -\bgamma\r)^\top \bH \l(\bVlast \bVtr\r) \\[15pt]
        &\hspace{10ex} + \l[2 \tr\l(\bH \bPsi \bH \bPsi\r) + \frac{3M+2}{M(M+1)} \l(\tr\l(\bH \bPsi\r) + \sigma^2\r)^2\r] \l(\bVlast \bVtr\r)^\top \bH \l(\bVlast \bVtr\r).
    \end{align*}
    Here, the global minimum of $\GDbeta$ class is taken from Theorem \ref{thm.global.minimum.GDbeta}, the proof of which does not rely on the proof here. Therefore, one has 
    \begin{align*}
        \risk\l(f\r) - \inf_{f \in \cF_{\GDbeta}} \risk(f)
        &\geq \frac{1}{M+1} \l(\bbeta^* -\bgamma\r)^\top \bH \l(\bbeta^* -\bgamma\r) 
        - \frac{2 \l(\tr\l(\bH \bPsi\r) + \sigma^2\r)}{M+1} \l(\bbeta^* -\bgamma\r)^\top \bH \l(\bVlast \bVtr\r)\\
        &+ \l[2 \tr\l(\bH \bPsi \bH \bPsi\r) + \frac{3M+2}{M(M+1)} \l(\tr\l(\bH \bPsi\r) + \sigma^2\r)^2\r] \l(\bVlast \bVtr\r)^\top \bH \l(\bVlast \bVtr\r).
    \end{align*}
    The right hand side in the above inequality is a quadratic function of $\bVlast \bVtr,$ so we can take its global minimizer:
    \begin{equation*}
        \bVlast \bVtr = \frac{\frac{\l(\tr\l(\bH \bPsi\r) + \sigma^2\r)}{M+1}}{2 \tr\l(\bH \bPsi \bH \bPsi\r) + \frac{3M+2}{M(M+1)} \l(\tr\l(\bH \bPsi\r) + \sigma^2\r)^2} \l(\bbeta^* -\bgamma\r)
    \end{equation*}
    to lower bound the risk gap as
    \begin{align*}
        \risk\l(f\r) - \inf_{f \in \cF_{\GDbeta}} \risk(f)
        &\geq \l[\frac{1}{M+1} - \frac{\frac{\l(\tr\l(\bH \bPsi\r) + \sigma^2\r)^2}{(M+1)^2}}{2 \tr\l(\bH \bPsi \bH \bPsi\r) + \frac{3M+2}{M(M+1)} \l(\tr\l(\bH \bPsi\r) + \sigma^2\r)^2}\r] \norm{\bH}{\bbeta^* -\bgamma}^2 \geq 0.
    \end{align*}

    \ 

    Note that, taking infimum on the left hand side, we get
    \begin{equation*}
        \inf_{f \in \cF_{\LTB}} \risk\l(f\r) - \inf_{f \in \cF_{\GDbeta}} \risk(f) \geq 0.
    \end{equation*}
    On the other hand, in the main text we have showed that $\cF_{\GDbeta} \subset \cF_{\LTB},$ which implies
    \begin{equation*}
        \inf_{f \in \cF_{\LTB}} \risk\l(f\r) - \inf_{f \in \cF_{\GDbeta}} \risk(f) \leq 0.
    \end{equation*}
    Therefore, we conclude 
    \begin{equation*}
        \inf_{f \in \cF_{\LTB}} \risk\l(f\r) 
        = \inf_{f \in \cF_{\GDbeta}} \risk(f)
        = \tr\l(\bH \bPsi\r) - \tr \l(\l(\bH^{\frac{1}{2}} \bPsi \bH^{\frac{1}{2}}\r)^2 \bOmega^{-1}\r),
    \end{equation*}
    where the last equation is from Theorem \ref{thm.global.minimum.GDbeta}.

    \

    \paragraph{Step 4: sufficient and necessary conditions for global minimizers.}
    Let's now verify the conditions for the global minimizers. For a function in LTB class, the sufficient and necessary condition for it to be a global minimizer is that inequalities in the above step all hold, which are
    \begin{align}
        &\bH^{\frac{1}{2}} \bVtr \bb^\top \bH^{\frac{1}{2}} + \bVlast \bH^{\frac{1}{2}} \bA^\top \bH^{\frac{1}{2}} = \bH^{\frac{1}{2}} \bPsi \bH^{\frac{1}{2}} \bOmega^{-1}, \label{eqn.equation.condition1} \\
        &\norm{F}{\bb^\top \bH^{\frac{1}{2}}}^2 \norm{F}{\bH^{\frac{1}{2}} \bA \bH^{\frac{1}{2}}}^2 = 4 \norm{F}{\bVlast \bH^{\frac{1}{2}} \bPsi \bH^{\frac{1}{2}}}^2 \norm{F}{\bH^{\frac{1}{2}} \bVtr}^2, \label{eqn.equation.condition2}\\
        & \bVlast \cdot \bb^\top \bH \bPsi \bH \bA \bH \bVtr 
        = \norm{F}{\bb^\top \bH^{\frac{1}{2}}} 
        \cdot \norm{F}{\bVlast \bH^{\frac{1}{2}} \bPsi \bH^{\frac{1}{2}}} 
        \cdot \norm{F}{\bH^{\frac{1}{2}} \bA \bH^{\frac{1}{2}}}
        \cdot \norm{F}{\bH^{\frac{1}{2}} \bVtr}, \label{eqn.equation.condition3}\\
        & \bH^{\frac{1}{2}} \l[\bA^\top \bH \bb - \frac{M}{M+1} \bigg(\bbeta^* - \bgamma - \l(\tr\l(\bH \bPsi\r) + \sigma^2 \r) \bVlast \bVtr\bigg)\r] = \bzero_{d\times d}, \label{eqn.equation.condition4}\\
        & \bVlast \bVtr = \frac{\frac{\l(\tr\l(\bH \bPsi\r) + \sigma^2\r)}{M+1}}{2 \tr\l(\bH \bPsi \bH \bPsi\r) + \frac{3M+2}{M(M+1)} \l(\tr\l(\bH \bPsi\r) + \sigma^2\r)^2} \l(\bbeta^* -\bgamma\r) \label{eqn.equation.condition5}\\
        & \norm{\bH}{\bbeta^* -\bgamma} = 0.\label{eqn.equation.condition6}
    \end{align}

    \ 

    Let's first verify the necessary conditions of the system defined above. From \eqref{eqn.equation.condition6} and Lemma \ref{lem.linear.algebra}, we know $\bgamma \in \bbeta^* + \nullset{\bH^{\frac{1}{2}}} = \bbeta^* + \nullset{\bH}.$ Then, we have $\bVlast \bVtr \in \nullset{\bH}$ by \eqref{eqn.equation.condition5}. Then, in \eqref{eqn.equation.condition2}, we know
    \begin{equation*}
        \norm{F}{\bb^\top \bH^{\frac{1}{2}}}^2 \norm{F}{\bH^{\frac{1}{2}} \bA \bH^{\frac{1}{2}}}^2 
        = 4 \norm{F}{\bVlast \bH^{\frac{1}{2}} \bPsi \bH^{\frac{1}{2}}}^2 \norm{F}{\bH^{\frac{1}{2}} \bVtr}^2
        = 4 \norm{F}{ \bH^{\frac{1}{2}} \bPsi \bH^{\frac{1}{2}}}^2 \norm{F}{\bVlast \bH^{\frac{1}{2}} \bVtr}^2 = 0,
    \end{equation*}
    which implies either $\bH^{\frac{1}{2}} \bb = \bzero_d$ or $\bH^{\frac{1}{2}} \bA \bH^{\frac{1}{2}} = \bzero_{d\times d}.$ If we assume $\bH^{\frac{1}{2}} \bA \bH^{\frac{1}{2}} = \bzero_{d\times d},$ then \eqref{eqn.equation.condition1} implies $\bH^{\frac{1}{2}} \bVtr \bb^\top \bH^{\frac{1}{2}} = \bH^{\frac{1}{2}} \bPsi \bH^{\frac{1}{2}} \bOmega^{-1}.$ From our assumption, we know $\rank(\bH^{\frac{1}{2}} \bPsi^{\frac{1}{2}}) \geq 2,$ which implies $\rank(\bH^{\frac{1}{2}} \bPsi \bH^{\frac{1}{2}}) \geq 2,$ since for any matrix $\bZ,$ it holds that $\rank(\bZ) = \rank(\bZ \bZ^\top).$ Since multiplication by an invertible matrix does not change the rank, we know $\rank\l(\bH^{\frac{1}{2}} \bPsi \bH^{\frac{1}{2}} \bOmega^{-1}\r) \geq 2,$ while $\bH^{\frac{1}{2}} \bVtr \bb^\top \bH^{\frac{1}{2}}$ is a matrix of rank at most one, which contradicts with \eqref{eqn.equation.condition1}. Therefore, we have
    \begin{align}
        \bH^{\frac{1}{2}} \bb &= \bzero, \label{eqn.equation.condition7}\\
        \bVlast \bH^{\frac{1}{2}} \bA^\top \bH^{\frac{1}{2}} &= \bH^{\frac{1}{2}} \bPsi \bH^{\frac{1}{2}} \bOmega^{-1}. \label{eqn.equation.condition8}
    \end{align}
    Recall in Theorem \ref{thm.global.minimum.GDbeta}, we have defined
    \begin{equation*}
        \bGamma^* := \bPsi \bH^{\frac{1}{2}} \bOmega^{-1} \bH^{-\frac{1}{2}}.
    \end{equation*}
    Simple calculation shows
    \begin{equation*}
        \bH^{\frac{1}{2}} \bGamma^* \bH^{\frac{1}{2}}
        = \bH^{\frac{1}{2}} \bPsi \bH^{\frac{1}{2}} \bOmega^{-1} \bH^{-\frac{1}{2}} \bH^{\frac{1}{2}}
        = \bOmega^{-1} \bH^{\frac{1}{2}} \bPsi \bH^{\frac{1}{2}} \bH^{-\frac{1}{2}} \bH^{\frac{1}{2}}
        = \bOmega^{-1} \bH^{\frac{1}{2}} \bPsi \bH^{\frac{1}{2}}
        = \bH^{\frac{1}{2}} \bPsi \bH^{\frac{1}{2}} \bOmega^{-1},
    \end{equation*}
    where the second and the last equalities come from the fact that $\bOmega$ and $\bH^{\frac{1}{2}} \bPsi \bH^{\frac{1}{2}}$ commute, and the third equality comes from the property of Moor Penrose pseudo-inverse. Therefore, one solution of \eqref{eqn.equation.condition8} is $\bVlast \bA = \bGamma^{*\top}.$ Since \eqref{eqn.equation.condition8} is a linear equation to $\bVlast \bA,$ we know its full solution is 
    \begin{equation*}
        \bVlast \bA \in \bGamma^{*\top} + \l\{\bZ: \bH^{\frac{1}{2}} \bZ \bH^{\frac{1}{2}} = \bzero_{d\times d}\r\}
        = \bGamma^{*\top} + \nullset{\bH^{\otimes 2}}.
    \end{equation*}
    We remark that $\bA:=\bVtl+\bbeta^*\bVtr^\top$.
Recall that $\bb:=\bVbl+\bVlast \bbeta^*$,    so the solution to \eqref{eqn.equation.condition7} is 
    \begin{equation*}
        \bVbl = -\bVlast \bbeta^* + \nullset{\bH^{\frac{1}{2}}}
        = -\bVlast \bbeta^* + \nullset{\bH}. 
    \end{equation*}
    Therefore, we know the necessary conditions for \eqref{eqn.equation.condition1} to \eqref{eqn.equation.condition6} are
    \begin{equation}\label{eqn.LTB.equivalent.condition.set}
    \left\{
        \begin{aligned}
            \bVlast &\neq 0, \\
            \bVlast \bVtr &\in \nullset{\bH}, \\
            \bVbl &= - \bVlast \bbeta^* + \nullset{\bH}, \\
            \bVlast \bVtl &\in \bGamma^{*\top} - \bVlast \bbeta^* \bVtr^\top + \nullset{\bH^{\otimes 2}}, \\
            \bgamma &= \bbeta^* + \nullset{\bH}
        \end{aligned}
    \right.
    \end{equation}
    It is easy to verify these equations above are also sufficient conditions of \eqref{eqn.equation.condition1} to \eqref{eqn.equation.condition6} by directly replacing each variable with its value and validating equaions from \eqref{eqn.equation.condition1} to \eqref{eqn.equation.condition6}.
    
    Specially, if $\bH$ is positive definite and hence, invertible, the global minimizer is unique up to a scaling to $\bVlast:$
    \begin{equation}
        \bVlast \neq 0, \quad
        \bVtr = \bzero_d, \quad
        \bVbl = - \bVlast \bbeta^*, \quad
        \bVtl = \frac{1}{\bVlast} \cdot \bGamma^{*\top}, \quad
        \bgamma = \bbeta^*.
    \end{equation}

    \ 

    \paragraph{Step 5: equivalence in the hypothesis class.}
    Finally, we will prove that any global minimizer of $\cF_{\LTB}$ is actually equivalent to one single function in $\cF_{\LTB}$ almost surely. More concretely, let's take a function $f \in \cF_{\LTB}$ with parameters $\WK, \ \WQ, \ \WP, \ \WV, \bW_1, \ \bW_2$ and recall the parameter transformation in \eqref{eqn.LTB.global.minimum.variables}. We assume equations in \eqref{eqn.LTB.equivalent.condition.set} and Assumption \ref{assumption.data} hold. Then, for any vector $\ba \in \nullset{\bH},$ one has $\bx^\top \ba = \bx_i^\top \ba = 0$ since $\bx, \bx_i \iid \normal\l(\bzero_d, \bH\r).$ We write
    \begin{equation*}
        \bVbl = -\bVlast \bbeta^{*} + \ba_1, \quad
        \bVlast \bVtl = \bGamma^{*\top} - \bVlast \bbeta^* \bVtr^\top + \bZ, \quad
        \bgamma = \bbeta^* + \ba_2,
    \end{equation*}
    where $\ba_1,\ba_2 \in \nullset{\bH}, \bH^{\frac{1}{2}} \bZ \bH^{\frac{1}{2}} = \bzero_{d \times d}.$ Then, we have
    \begin{align*}
        f(\bE)
        &= \bigg[\bW_2^\top \bW_1 \bigg(\bE + \WP^\top \ \WV \bE \bM \frac{\bE^\top \WK^\top \ \WQ \bE}{M}\bigg)\bigg]_{-1,-1} \\
        &= \l(\bbeta^* + \ba_2\r)^\top \bx + 
        \begin{pmatrix}[1.5]
            -\bVlast \bbeta^{*\top} + \ba_1^\top & \bVlast
        \end{pmatrix} \cdot \frac{1}{M}
        \begin{pmatrix}[1.5]
            \bX^\top \bX & \bX^\top \by \\
            \by^\top \bX & \by^\top \by
        \end{pmatrix} \cdot
        \begin{pmatrix}[1.5]
            \bVtl \\ \bVtr^\top            
        \end{pmatrix} \cdot \bx \\
        &= \bbeta^{*\top} \bx 
        + \begin{pmatrix}[1.5]
            -\bbeta^{*\top} & 1
        \end{pmatrix} \cdot \frac{1}{M} 
        \begin{pmatrix}[1.5]
            \bX^\top \bX & \bX^\top \by \\
            \by^\top \bX & \by^\top \by
        \end{pmatrix} \cdot
        \begin{pmatrix}[1.5]
            \bVlast \bVtl \bx \\ \bVlast \bVtr^\top \bx      
        \end{pmatrix} \tag{$\bX \ba_1 = \bzero, \ba_2^\top \bx = 0$} \\
        &= \bbeta^{*\top} \bx 
        + \begin{pmatrix}[1.5]
            -\bbeta^{*\top} & 1
        \end{pmatrix} \cdot \frac{1}{M} 
        \begin{pmatrix}[1.5]
            \bX^\top \bX & \bX^\top \by \\
            \by^\top \bX & \by^\top \by
        \end{pmatrix} \cdot
        \begin{pmatrix}[1.5]
            \bGamma^{*\top} \bx \\ \bzero_d^\top      
        \end{pmatrix} \tag{$\bVtr^\top \bx = 0, \bX \bVtl \bx = \bzero$} \\
        &= \l\langle \bbeta^* - \frac{\bGamma^*}{M}\bX^\top \l(\bX \bbeta - \by\r), \bx \r\rangle
        = f_{\bbeta^*,\bGamma^*}\l(\bE\r),
    \end{align*}
    where $f_{\bbeta^*,\bGamma^*}\l(\cdot\r)$ is the $\GDbeta$ function defined in \eqref{eqn.def.GDbeta}.
\end{proof}

\section{Proof of Corollary \ref{coro.ICL.risk.GDbeta}}\label{proof.coro.ICL.GDbeta}
\begin{proof}
    We define $\phi_1 \geq \phi_2 \geq ... \geq \phi_d \geq 0$ are ordered eigenvalues of $\bPsi^{\frac{1}{2}} \bH \bPsi^{\frac{1}{2}}.$ From Theorem \ref{thm.global.minimum.GDbeta}, we have
    \begin{equation*}
        \inf_{f \in \cF_{\GDbeta}} \risk(f) - \sigma^2 
        = \tr\l(\bH^{\frac{1}{2}} \bPsi \bH^{\frac{1}{2}}\r) - \tr\l(\l(\bH^{\frac{1}{2}} \bPsi \bH^{\frac{1}{2}}\r)^2 \bOmega^{-1}\r),
    \end{equation*}
    where
    \begin{equation*}
        \bOmega := \frac{M+1}{M}\bH^{\frac{1}{2}} \bPsi \bH^{\frac{1}{2}} + \frac{\tr\l(\bH \bPsi\r) + \sigma^2}{M} \cdot \bID_d.
    \end{equation*}
    Therefore, we have
    \begin{align*}
        \inf_{f \in \cF_{\GDbeta}} \risk(f) - \sigma^2 
        &= \tr 
        \l(\bH^{\frac{1}{2}} \bPsi \bH^{\frac{1}{2}} \cdot \l(\bOmega - \bH^{\frac{1}{2}} \bPsi \bH^{\frac{1}{2}}\r) \bOmega^{-1}\r) \\
        &= \frac{1}{M} \tr\l(\bOmega^{-1} \bH^{\frac{1}{2}} \bPsi \bH^{\frac{1}{2}} \cdot \l(\bH^{\frac{1}{2}} \bPsi \bH^{\frac{1}{2}} + \l(\tr\l(\bH^{\frac{1}{2}} \bPsi \bH^{\frac{1}{2}}\r) + \sigma^2\r)\cdot \bID_d\r)\r).
    \end{align*}
    Since
    \begin{equation*}
        \l(\tr\l(\bH^{\frac{1}{2}} \bPsi \bH^{\frac{1}{2}}\r) + \sigma^2\r)\cdot \bID_d \preceq \bH^{\frac{1}{2}} \bPsi \bH^{\frac{1}{2}} + \l(\tr\l(\bH^{\frac{1}{2}} \bPsi \bH^{\frac{1}{2}}\r) + \sigma^2\r)\cdot \bID_d
        \preceq 2 \l(\tr\l(\bH^{\frac{1}{2}} \bPsi \bH^{\frac{1}{2}}\r) + \sigma^2\r)\cdot \bID_d,
    \end{equation*}
    we have
    \begin{align*}
        \inf_{f \in \cF_{\GDbeta}} \risk(f) - \sigma^2
        &\simeq \frac{\tr\l(\bH^{\frac{1}{2}} \bPsi \bH^{\frac{1}{2}}\r) + \sigma^2}{M} \tr\l(\bOmega^{-1} \bH^{\frac{1}{2}} \bPsi \bH^{\frac{1}{2}}\r) \\
        &= \bar{\phi} \cdot \sum_{i=1}^d \frac{\phi_i}{\frac{M+1}{M}\phi_i + \bar{\phi}} \\
        &\simeq \sum_{i=1}^d \min \l\{\phi_i, \bar{\phi}\r\}.
    \end{align*}
    Therefore, we have completed the proof.
\end{proof}

\section{Proof of Lemma \ref{lem.bayes.optimal}}\label{appendix.bayesian.risk}
\subsection{Proof of first equation}
\begin{proof}
    From the classical bias-variance decomposition, we know
    \begin{equation*}
        \loss\l(g;\bX\r) := \E \l[\l(g(\bX,\by,\bx) - y\r)^2 \mid \bX \r]
        = \E \l[\l(g(\bX,\by,\bx) - \E \l[y \mid \bX,\by,\bx\r]\r)^2 \mid \bX \r] + \E \l[\l(\E \l[y \mid \bX,\by,\bx\r] - y\r)^2 \mid \bX \r]
    \end{equation*}
    since the cross term vanishes. The second term does not depend on $g$ and hence, the Bayesian optimal estimator is given by the posterior mean, i.e.,
    \begin{equation*}
        \widehat y_\bayes = \E \l[y \mid \bX,\by,\bx\r]
        = \sip{\E\l[\tbbeta \mid \bX,\by,\bx\r]}{\bx}.
    \end{equation*}
    Since $\tbbeta \sim \normal\l(\bbeta^*,\bPsi\r),$ we have there exists a random vector $\bttheta \sim \normal\l(\bzero_d,\bID_d\r)$ such that $\tbbeta = \bbeta^* + \bPsi^{\frac{1}{2}} \bttheta$ almost surely. Therefore, one has
    \begin{equation*}
        \widehat y_\bayes = \l\langle \bbeta^*, \bx \r\rangle + \l\langle \E\l[\bttheta \mid \bX,\by,\bx\r], \bPsi^{\frac{1}{2}}\bx\r\rangle.
    \end{equation*}
    To compute $\E\l[\bttheta \mid \bX,\by,\bx\r],$ it suffices to solve the posterior distribution of $\bbeta$ given $\bX,\by.$ From $\bttheta \sim \normal\l(\bzero_d,\bID_d\r)$, we have
    \begin{align*}
        \P\l(\bttheta \mid \bX,\by,\bx\r)
        &\propto \P\l(\bttheta\r) \P\l(\by \mid \bX,\bttheta\r)
        \propto \exp\l(-\frac{\norm{2}{\by - \bX \l(\bbeta^* + \bPsi^{\frac{1}{2}} \bttheta\r)}^2}{2\sigma^2} - \frac{1}{2} \bttheta^\top \cdot \bttheta\r) \\
        &\propto \exp \l(-\frac{1}{2\sigma^2} \l[\bttheta^\top \l(\bPsi^{\frac{1}{2}} \bX^\top \bX \bPsi^{\frac{1}{2}} + \sigma^2 \bID_d\r) \bttheta - 2 \l(\by - \bX\bbeta^*\r)^\top \bX \bPsi^{\frac{1}{2}} \bttheta\r]\r).
    \end{align*}
    Note that, the function above matches the probability density function of a multivariate Gaussian distribution. Therefore, the posterior mean is given by
    \begin{equation*}
        \E\l[\btheta \mid \bX,\by,\bx\r]
        = \l(\bPsi^{\frac{1}{2}} \bX^\top \bX \bPsi^{\frac{1}{2}} + \sigma^2 \bID_d\r)^{-1} \bPsi^{\frac{1}{2}} \bX^\top \l(\by - \bX\bbeta^*\r).
    \end{equation*}
    Therefore, we conclude
    \begin{equation}\label{eqn.def.bayes.optimal.estimaor}
        \widehat y_\bayes = \bx^\top \bPsi^{\frac{1}{2}} \l(\bPsi^{\frac{1}{2}} \bX^\top \bX \bPsi^{\frac{1}{2}} + \sigma^2 \bID_d\r)^{-1} \bPsi^{\frac{1}{2}} \bX^\top \l(\by - \bX\bbeta^*\r) + \bx^\top \bbeta^*.
    \end{equation}
\end{proof}

\subsection{Proof of second equation}
\label{appendix.risk.bayesian}
\begin{proof}
    Let's first do a variable transformation. We define
    \begin{equation}
        \btX = \bX \bPsi^{\frac{1}{2}}, \quad
        \btx = \bPsi^{\frac{1}{2}} \bx, \quad
        \bty = \by - \bX \bbeta^*, \quad
        \bLambda = \bPsi^{\frac{1}{2}} \bH \bPsi^{\frac{1}{2}}.
    \end{equation}
    Then, we know $\btX[i], \btx \iid \normal\l(\bzero_d, \bLambda\r).$ We can write the Bayesian optimal estimator in \eqref{eqn.def.bayes.optimal.estimaor} as
    \begin{align*}
        \widehat y_\bayes &= \btx^\top \l(\btX^\top \btX + \sigma^2 \bID_d\r)^{-1} \btX \bty + \bx^\top \bbeta^*.
    \end{align*}
    The true label is
    \begin{equation*}
        y = \bx^\top \tbbeta + \eps 
        = \bx^\top \l(\bbeta^* + \bPsi^{\frac{1}{2}} \bttheta\r).
    \end{equation*}
    Therefore, we have
    \begin{align*}
        \loss \l(\widehat y_\bayes; \bX\r) - \sigma^2 
        &= \E \l(\btx^\top \l(\btX^\top \btX + \sigma^2 \bID_d\r)^{-1} \btX \bty + \bx^\top \bbeta^* - \bx^\top \l(\bbeta^* + \bPsi^{\frac{1}{2}} \bttheta\r)\r)^2 \\
        &= \E \l(\btx^\top \l[\l(\btX^\top \btX + \sigma^2 \bID_d\r)^{-1} \btX \bty - \bttheta\r]\r)^2
        = \E \norm{\bLambda}{\bhtheta - \bttheta}^2,
    \end{align*}
    where
    \begin{equation*}
        \bhtheta := \l(\btX^\top \btX + \sigma^2 \bID_d\r)^{-1} \btX \bty.
    \end{equation*}

    Note that, this is equivalent to the risk of estimating the ground true linear weight $\bttheta$ using $\bhtheta$ under the Gaussian prior $\bttheta \sim \normal\l(\bzero,\bID_d\r).$ The estimator $\bhtheta$ is a function of transformed input-output pairs in the context $\l(\btX,\bty\r)$ and the transformed query input $\bx$, and takes the form of standard ridge estimator with regularization coefficient being $\sigma^2.$ We then revoke the standard results for the risk of a ridge estimator. Applying the Theorem 1 and 2 in \citep{tsigler2023benign}, we know for a fixed weight vector $\bttheta$, with probability at least $1-\exp\l(-\Omega(M)\r)$ we have
    \begin{align*}
        \norm{\bH}{\bhtheta - \bttheta}^2
        \simeq \l(\frac{\sigma^2 + \sum_{i > \kstar} \phi_i}{M}\r)^2 \norm{\bH_{0:\kstar}^{-1}}{\bttheta}^2 
        + \norm{\bH_{\kstar:\infty}}{\bttheta}^2
        + \sigma^2 \l(\frac{\kstar}{M} 
        + \frac{M \sum_{i > \kstar} \phi_i^2}{\sigma^2 + \sum_{i > \kstar} \phi_i}\r),
    \end{align*}
    where $\phi_1 \geq \phi_2 \geq ... \geq \phi_d \geq 0$ are ordered eigenvalues of $\bLambda.$
    \begin{equation*}
        \kstar := \min \l\{k: \phi_k \geq c \cdot \frac{\sigma^2 + \sum_{i > \kstar} \phi_i}{M}\r\}
    \end{equation*}
    and $c > 1$ is an absolute constant. Here, $\bH_{0:\kstar}$ is the SVD approximation for the largest $\kstar$ singular values, and $\bH_{\kstar:\infty}$ is the SVD approximation in the remaining singular values. Namely, if we have the eigendecomposition of $\bH = \bQ \cdot \diag\l(\phi_1,\phi_2,...,\phi_d\r) \cdot \bQ^\top,$ where $\bQ$ is an orthogonal matrix, then $\bH_{0:\kstar}$ and $\bH_{\kstar:\infty}$ are given by
    \begin{equation*}
        \bH_{0:\kstar} = \bQ \cdot \diag\l(\phi_1,\phi_2,...,\phi_{\kstar}, 0,0,...,0\r) \cdot \bQ^\top, \quad
        \bH_{\kstar:\infty} = \bH - \bH_{0:\kstar}.
    \end{equation*}
    Taking expectation over $\bttheta \sim \normal\l(0,\bID_d\r),$ we have
    \begin{align*}
         \loss \l(\widehat y_\bayes; \bX\r) - \sigma^2 
        &= \E_{\bttheta \sim \normal\l(0,\bID_d\r)} \norm{\bH}{\bhtheta_\bayes - \bttheta}^2 \\
        &= \l(\frac{\sigma^2 + \sum_{i > \kstar} \phi_i}{M}\r)^2 \cdot \sum_{i \leq \kstar} \frac{1}{\phi_i}
        + \sum_{i > \kstar} \phi_i 
        + \sigma^2 \l(\frac{\kstar}{M} 
        + \frac{M \sum_{i > \kstar} \phi_i^2}{\sigma^2 + \sum_{i > \kstar} \phi_i}\r)
    \end{align*}

    Now we simplify this expression. First, we define
    \begin{equation*}
        \bar{\phi} := c \cdot \frac{\sigma^2 + \sum_{i > \kstar} \phi_i}{M}.
    \end{equation*}
    From the definition, we see $\bar{\phi} \geq \frac{c \sigma^2}{M}.$ On the other hand, from the assumption that $\tr\l(\bH\r) = \tr\l(\bPsi^{\frac{1}{2}} \bH \bPsi^{\frac{1}{2}}\r) = \sum_{i=1}^d \phi_i \lesssim \sigma^2,$ we know that $\bar{\phi} \lesssim \frac{c \sigma^2}{M}.$ Combining two parts, we get 
    \begin{equation}\label{eqn.equiivalence.bar.phi}
        \bar{\phi} \simeq \frac{\sigma^2}{M}.
    \end{equation}
    Therefore, we have
    \begin{align}
        \loss \l(\widehat y_\bayes; \bX\r) - \sigma^2 
        &\simeq \bar{\phi}^2 \cdot \sum_{i \leq \kstar} \frac{1}{\phi_i}+ \sum_{i > \kstar} \phi_i + \frac{\sigma^2}{M}
        \cdot \l(\kstar + \frac{\sum_{i > \kstar} \phi_i^2}{\bar{\phi}^2}\r) \notag \\
        &\simeq \sum_i \min \l\{\frac{\bar{\phi}^2}{\phi_i}, \phi_i \r\} + \bar{\phi} \cdot \sum_i \min\l\{1, \frac{\phi_i^2}{\bar{\phi}^2}\r\} \tag{from \eqref{eqn.equiivalence.bar.phi}} \\
        &\simeq \sum_i  \l(\min \l\{\frac{\bar{\phi}^2}{\phi_i}, \phi_i \r\} + \min\l\{\bar{\phi}, \frac{\phi_i^2}{\bar{\phi}}\r\}\r) \notag \\
        & \simeq \sum_i \min\l\{\phi_i, \bar{\phi}\r\}. \notag
    \end{align}
    This finishes the proof.
\end{proof}

\section{Proof of Theorem \ref{thm.global.convergence}}\label{appendix.convergence}
\subsection{Training dynamics}
Now we consider doing gradient flow on the risk function, or equivalently, on the excess risk function which differs by only a constant:
\begin{align}
    \frac{\mathrm{d} \bbeta}{\mathrm{d} t} 
    &= - \frac{1}{2} \frac{\partial}{\partial \bbeta} \l[\mathcal{R}(\bbeta, \bGamma)-\min \mathcal{R}(\cdot, \cdot)\r]; \label{eqn.gd.eq1}\\
    \frac{\mathrm{d} \bGamma}{\mathrm{d} t} 
    &= - \frac{1}{2} \frac{\partial}{\partial \bGamma} \l[\mathcal{R}(\bbeta, \bGamma)-\min \mathcal{R}(\cdot, \cdot)\r]. \label{eqn.gd.eq2}
\end{align}

\ 

First, we have the following corollary of Theorem \ref{thm.global.minimum.GDbeta}, which computes the excess risk $\mathcal{R}(\boldsymbol{\beta}, \boldsymbol{\Gamma})-\min \mathcal{R}(\cdot, \cdot).$
\begin{corollary}[Excess Risk]\label{coro.excess.risk}
    We fix $M$ as the context length. Consider the ICL risk in \eqref{eqn.def.icl.risk}, assume the data is generated following Assumption \ref{assumption.data}. Then, we have
    \begin{align}\label{eqn.excess.risk}
        \mathcal{R}(\boldsymbol{\beta}, \boldsymbol{\Gamma})-\min \mathcal{R}(\cdot, \cdot)
        &= \l(\bbeta - \bbeta^*\r)^\top \l[\l(\bID_d - \bGamma\bH\r)^\top \bH \l(\bID_d - \bGamma\bH\r)
        + \frac{\tr\l(\bH \bGamma^\top \bH \bGamma\r)}{M} \bH + \frac{1}{M} \bH \bGamma^\top \bH \bGamma \bH\r] \l(\bbeta - \bbeta^*\r) \notag \\
        &+ \tr\l[ \bigg(\bH^{\frac{1}{2}} \bGamma \bH^{\frac{1}{2}} - \bH^{\frac{1}{2}} \bPsi \bH^{\frac{1}{2}} \bOmega^{-1}\bigg) \bOmega \bigg(\bH^{\frac{1}{2}} \bGamma \bH^{\frac{1}{2}} - \bH^{\frac{1}{2}} \bPsi \bH^{\frac{1}{2}} \bOmega^{-1}\bigg)^\top\r].
    \end{align}
\end{corollary}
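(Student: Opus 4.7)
The plan is to derive \eqref{eqn.excess.risk} as a direct bookkeeping consequence of the risk decomposition already obtained in the proof of \Cref{thm.global.minimum.GDbeta}. Specifically, the proof of that theorem expresses $\risk(\bbeta,\bGamma)-\sigma^2$ as the sum $V_1+V_2$ (see the chain culminating in equations \eqref{eqn.icl.risk2}--\eqref{eqn.icl.risk6}), where
\[
V_1 = (\bbeta-\bbeta^*)^\top \bHGamma (\bbeta-\bbeta^*), \qquad \bHGamma = (\bID_d - \bGamma\bH)^\top \bH (\bID_d - \bGamma\bH) + \tfrac{\tr(\bH\bGamma^\top\bH\bGamma)}{M}\bH + \tfrac{1}{M}\bH\bGamma^\top\bH\bGamma\bH,
\]
and $V_2$ is shown to equal a perfect-square trace plus the constants $\tr(\bH\bPsi) - \tr(\bH^{1/2}\bPsi\bH^{1/2}\bOmega^{-1}\bH^{1/2}\bPsi\bH^{1/2})$. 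Concretely, I would simply quote \eqref{eqn.risk.GDbeta.eq1} from the already-established proof.

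Next I would invoke the value of $\inf_{\bbeta,\bGamma}\risk(\bbeta,\bGamma)$ given by \Cref{thm.global.minimum.GDbeta}, namely
\[
\min \risk(\cdot,\cdot) - \sigma^2 = \tr(\bH\bPsi) - \tr\l(\l(\bH^{1/2}\bPsi\bH^{1/2}\r)^{2} \bOmega^{-1}\r),
\]
and use the commutation of $\bOmega$ with $\bH^{1/2}\bPsi\bH^{1/2}$ (already exploited in the theorem's proof) to rewrite $\tr((\bH^{1/2}\bPsi\bH^{1/2})^2 \bOmega^{-1}) = \tr(\bH^{1/2}\bPsi\bH^{1/2}\bOmega^{-1}\bH^{1/2}\bPsi\bH^{1/2})$. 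Subtracting this from \eqref{eqn.risk.GDbeta.eq1} causes the constant terms $\tr(\bH\bPsi)-\tr(\bH^{1/2}\bPsi\bH^{1/2}\bOmega^{-1}\bH^{1/2}\bPsi\bH^{1/2})$ to cancel exactly, leaving the $V_1$ quadratic-in-$(\bbeta-\bbeta^*)$ term and the perfect-square trace term displayed in \eqref{eqn.excess.risk}.

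The only substantive step is the cancellation of the constants, which is immediate once the commutativity identity $\bOmega\cdot\bH^{1/2}\bPsi\bH^{1/2} = \bH^{1/2}\bPsi\bH^{1/2}\cdot\bOmega$ (clear from the definition of $\bOmega$ in \eqref{eqn.def.bOmega}) is used to reconcile the two equivalent forms of the constant. There is no genuine obstacle here; the corollary is essentially a restatement of the intermediate identity \eqref{eqn.risk.GDbeta.eq1} after subtracting the minimum \eqref{eqn.risk.GDbeta.eq2}, both of which are established during the proof of \Cref{thm.global.minimum.GDbeta}. I would therefore keep the write-up to a few lines referencing those displayed equations rather than redoing the moment computations.
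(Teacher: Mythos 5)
Your proposal is correct and matches the paper's own proof, which likewise obtains the corollary by subtracting the minimum value \eqref{eqn.risk.GDbeta.eq2} from the risk decomposition \eqref{eqn.risk.GDbeta.eq1} established in the proof of \Cref{thm.global.minimum.GDbeta}. Your extra remark about using the commutativity of $\bOmega$ and $\bH^{\frac{1}{2}}\bPsi\bH^{\frac{1}{2}}$ to reconcile the two forms of the constant is exactly the step the paper leaves implicit.
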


\begin{proof}
    This is obtained directly from equation \eqref{eqn.risk.GDbeta.eq1} and \eqref{eqn.risk.GDbeta.eq2}.
\end{proof}

\ 

Now, we write out the differential equations explicitly in the following lemma.
\begin{lemma}[Dynamical system]
    The dynamical system of gradient flow described in \eqref{eqn.gd.eq1} and \eqref{eqn.gd.eq2} is
    \begin{align}
        \frac{\mathrm{d} \bbeta}{\mathrm{d} t} 
        &= - \l[\l(\bID_d - \bGamma\bH\r)^\top \bH \l(\bID_d - \bGamma\bH\r)
        + \frac{\tr\l(\bH \bGamma^\top \bH \bGamma\r)}{M} \bH + \frac{1}{M} \bH \bGamma^\top \bH \bGamma \bH\r] \l(\bbeta - \bbeta^*\r)\label{eqn.dyn.beta} \\
        \frac{\mathrm{d} \bGamma}{\mathrm{d} t} 
        &= - \bigg(\bH \bGamma \bH^{\frac{1}{2}} \bOmega \bH^{\frac{1}{2}} 
        - \bH \bPsi \bH\bigg) - \frac{M+1}{M} \bH \bGamma \bH \l(\bbeta - \bbeta^*\r) \l(\bbeta - \bbeta^*\r)^\top \bH \notag\\
        &- \frac{1}{M} \l(\bbeta - \bbeta^*\r)^\top \bH \l(\bbeta - \bbeta^*\r) \cdot \bH \bGamma \bH + \bH \l(\bbeta - \bbeta^*\r) \l(\bbeta - \bbeta^*\r)^\top \bH \label{eqn.dyn.Gamma}.
    \end{align}
\end{lemma}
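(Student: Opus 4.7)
By Corollary~\ref{coro.excess.risk}, the population excess risk decomposes as $E(\bbeta,\bGamma)=E_1(\bbeta,\bGamma)+E_2(\bGamma)$ with
\begin{equation*}
E_1=(\bbeta-\bbeta^*)^\top\bHGamma(\bbeta-\bbeta^*),\qquad E_2=\tr\bigl[(\bH^{\frac{1}{2}}\bGamma\bH^{\frac{1}{2}}-\bH^{\frac{1}{2}}\bPsi\bH^{\frac{1}{2}}\bOmega^{-1})\bOmega(\bH^{\frac{1}{2}}\bGamma\bH^{\frac{1}{2}}-\bH^{\frac{1}{2}}\bPsi\bH^{\frac{1}{2}}\bOmega^{-1})^\top\bigr],
\end{equation*}
and $E_2$ does not depend on $\bbeta$. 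My plan is to verify that $-\tfrac12\nabla_\bbeta E$ and $-\tfrac12\nabla_\bGamma E$ coincide with the right-hand sides of \eqref{eqn.dyn.beta} and \eqref{eqn.dyn.Gamma} respectively, by a direct matrix-calculus computation of the three partial derivatives $\nabla_\bbeta E_1$, $\nabla_\bGamma E_1$, and $\nabla_\bGamma E_2$.

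\paragraph{The $\bbeta$-gradient.}
For each fixed $\bGamma$, the map $\bbeta\mapsto E_1$ is an affine-shifted quadratic form centered at $\bbeta^*$ with Hessian $2\bHGamma$, so I immediately obtain $\nabla_\bbeta E=2\bHGamma(\bbeta-\bbeta^*)$. Inserting the explicit expression \eqref{eqn.icl.risk3} for $\bHGamma$ and multiplying by $-\tfrac12$ will reproduce \eqref{eqn.dyn.beta} term by term.

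\paragraph{The $\bGamma$-gradient.}
Setting $\boldsymbol{\delta}:=\bbeta-\bbeta^*$, I will first expand
\begin{equation*}
E_1=\norm{\bH}{(\bID-\bGamma\bH)\boldsymbol{\delta}}^2+\tfrac{1}{M}\tr(\bH\bGamma^\top\bH\bGamma)\norm{\bH}{\boldsymbol{\delta}}^2+\tfrac{1}{M}\|\bH^{\frac{1}{2}}\bGamma\bH\boldsymbol{\delta}\|^2,
\end{equation*}
and then apply the three standard identities
\begin{align*}
\nabla_\bGamma\norm{\bH}{(\bID-\bGamma\bH)\boldsymbol{\delta}}^2&=-2\bH(\bID-\bGamma\bH)\boldsymbol{\delta}\boldsymbol{\delta}^\top\bH,\\
\nabla_\bGamma\tr(\bH\bGamma^\top\bH\bGamma)&=2\bH\bGamma\bH,\\
\nabla_\bGamma\|\bH^{\frac{1}{2}}\bGamma\bH\boldsymbol{\delta}\|^2&=2\bH\bGamma\bH\boldsymbol{\delta}\boldsymbol{\delta}^\top\bH,
\end{align*}
whose combination gives $\nabla_\bGamma E_1=-2\bH\boldsymbol{\delta}\boldsymbol{\delta}^\top\bH+\tfrac{2(M+1)}{M}\bH\bGamma\bH\boldsymbol{\delta}\boldsymbol{\delta}^\top\bH+\tfrac{2}{M}(\boldsymbol{\delta}^\top\bH\boldsymbol{\delta})\bH\bGamma\bH$; multiplying by $-\tfrac12$ produces the last three summands of \eqref{eqn.dyn.Gamma}. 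For $E_2$, I will set $\bA(\bGamma):=\bH^{\frac{1}{2}}\bGamma\bH^{\frac{1}{2}}-\bH^{\frac{1}{2}}\bPsi\bH^{\frac{1}{2}}\bOmega^{-1}$ and chain $\nabla_\bA\tr(\bA\bOmega\bA^\top)=2\bA\bOmega$ (using $\bOmega^\top=\bOmega$) through $\nabla_\bGamma\bA(\cdot)=\bH^{\frac{1}{2}}(\cdot)\bH^{\frac{1}{2}}$, yielding $\nabla_\bGamma E_2=2\bH^{\frac{1}{2}}\bA\bOmega\bH^{\frac{1}{2}}=2\bH\bGamma\bH^{\frac{1}{2}}\bOmega\bH^{\frac{1}{2}}-2\bH\bPsi\bH$ after invoking $\bH^{\frac{1}{2}}\bH^{\frac{1}{2}}=\bH$ and $\bOmega^{-1}\bOmega=\bID$; multiplying by $-\tfrac12$ delivers the remaining first summand of \eqref{eqn.dyn.Gamma}.

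\paragraph{Main obstacle.}
This proof carries no conceptual difficulty; it is mechanical matrix-calculus bookkeeping. The only point requiring care will be the chain rule for $\nabla_\bGamma E_2$: the constant offset $\bH^{\frac{1}{2}}\bPsi\bH^{\frac{1}{2}}\bOmega^{-1}$ has vanishing derivative in $\bGamma$ but still contributes through the product $\bA\bOmega$, and one must track this contribution and use $\bOmega^{-1}\bOmega=\bID$ to obtain the clean expression $-2\bH\bPsi\bH$ rather than something involving $\bOmega^{-1}$.
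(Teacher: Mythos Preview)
Your proposal is correct and follows essentially the same approach as the paper: both proofs compute $\nabla_\bbeta E$ directly from the quadratic form and compute $\nabla_\bGamma E_1$, $\nabla_\bGamma E_2$ term by term via the standard matrix-calculus identities collected in Lemma~\ref{lem:matrix}. The only cosmetic difference is that the paper expands $(\bID-\bGamma\bH)^\top\bH(\bID-\bGamma\bH)$ first and then differentiates the monomials, whereas you keep $\|(\bID-\bGamma\bH)\boldsymbol\delta\|_\bH^2$ intact and differentiate it as a whole; the resulting terms regroup identically.
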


\begin{proof}
    This can be obtained by directly calculating the derivatives over the excess risk \eqref{eqn.excess.risk}. To write out the dynamics of $\bbeta,$ it suffices to notice that $\bbeta$ only attends the first term of the RHS of \eqref{eqn.excess.risk}, which is a standard quadratic function. For the derivatives of $\bGamma,$ we first have
    \begin{align*}
        &\frac{1}{2} \frac{\partial}{\partial \bGamma} \tr\l[ \bigg(\bH^{\frac{1}{2}} \bGamma \bH^{\frac{1}{2}} - \bH^{\frac{1}{2}} \bPsi \bH^{\frac{1}{2}} \bOmega^{-1}\bigg) \bOmega \bigg(\bH^{\frac{1}{2}} \bGamma \bH^{\frac{1}{2}} - \bH^{\frac{1}{2}} \bPsi \bH^{\frac{1}{2}} \bOmega^{-1}\bigg)^\top\r] \\
        =~& \bH^{\frac{1}{2}} \bigg(\bH^{\frac{1}{2}} \bGamma \bH^{\frac{1}{2}} - \bH^{\frac{1}{2}} \bPsi \bH^{\frac{1}{2}} \bOmega^{-1}\bigg) \bOmega^{\frac{1}{2}} \cdot \bOmega^{\frac{1}{2}} \bH^{\frac{1}{2}} \tag{the sixth equation in Lemma \ref{lem:matrix}} \\
        =~& \bH \bGamma \bH^{\frac{1}{2}} \bOmega \bH^{\frac{1}{2}} 
        - \bH \bPsi \bH.
    \end{align*}
    Now it suffices to compute
    $$
    \frac{\partial}{\partial \bGamma} \frac{1}{2} 
    \l(\bbeta - \bbeta^*\r)^\top \l[\l(\bID_d - \bGamma\bH\r)^\top \bH \l(\bID_d - \bGamma\bH\r)
        + \frac{\tr\l(\bH \bGamma^\top \bH \bGamma\r)}{M} \bH + \frac{1}{M} \bH \bGamma^\top \bH \bGamma \bH\r] \l(\bbeta - \bbeta^*\r).
    $$
    Let's compute the partial derivatives separately. From Lemma \ref{lem:matrix}, we have
    \begin{align*}
        &\frac{\partial}{\partial\bGamma} \frac{1}{2}\l(\bbeta - \bbeta^*\r)^\top \l(-\bH \bGamma^{\top} \bH\r) \l(\bbeta - \bbeta^*\r) 
        = 
        \frac{\partial}{\partial\bGamma} \frac{1}{2}\l(\bbeta - \bbeta^*\r)^\top \l(-\bH \bGamma \bH\r) \l(\bbeta - \bbeta^*\r)
        = - \frac{1}{2} \bH \l(\bbeta - \bbeta^*\r) \l(\bbeta - \bbeta^*\r)^\top \bH; \\
        &\frac{\partial}{\partial\bGamma} \frac{1}{2}\l(\frac{M+1}{M} \l(\bbeta - \bbeta^*\r)^\top \bH \bGamma^\top \bH \bGamma \bH \l(\bbeta - \bbeta^*\r)\r)
        = \frac{M+1}{M} \bH \bGamma \cdot \bH \l(\bbeta - \bbeta^*\r) \l(\bbeta - \bbeta^*\r)^\top \bH; \\
        &\frac{\partial}{\partial\bGamma} \frac{1}{2} \l(\frac{\tr\l\{\bH \bGamma^\top \bH\bGamma\r\}}{M}\cdot \l(\bbeta - \bbeta^*\r)^\top \bH \l(\bbeta - \bbeta^*\r)\r) 
        = \frac{1}{M}\l(\bbeta - \bbeta^*\r)^\top \bH \l(\bbeta - \bbeta^*\r) \cdot \bH \bGamma \bH.
    \end{align*}
    Summing over the three equations above and applying the definition of gradient flow gives the dynamics of $\bGamma$ in \eqref{eqn.dyn.Gamma}.
\end{proof}

\subsection{Proof of the global convergence}
Let's now prove Theorem \ref{thm.global.convergence}. Since the first part of Theorem \ref{thm.global.convergence} is directly implied by the second part, we only deal with the case with general $\bH.$ In this section, we write $\lambdamin > 0$ as the minimal non-zero eigenvalue of $\bH.$ As in the main text, we define
\begin{equation*}
    \cH := \image{\bH}
\end{equation*}
and $\cH^\perp$ as its orthogonal complement. First, let's prove the convergence of $\bbeta.$

\begin{lemma}[Convergence of $\bbeta$]\label{lem.convergence.beta}
    Under the dynamical system \eqref{eqn.dyn.beta} and \eqref{eqn.dyn.Gamma}, one has
    \begin{equation}\label{eqn.upper.bound.bbeta}
        \norm{2}{\proj_{\cH}\l(\bbeta(t)\r) - \proj_{\cH}\l(\bbeta^*\r)}^2 \leq \exp\l(\frac{-2 \lambdamin t}{M+1}\r) \norm{2}{\proj_{\cH}\l(\bbeta(0)\r) - \proj_{\cH}\l(\bbeta^*\r)}^2,
    \end{equation}
    which implies
    \begin{equation}\label{eqn.upper.bound.H.bbeta}
        \norm{2}{\bH^{\frac{1}{2}}\l(\bbeta(t) - \bbeta^*\r)}^2 \leq \lambda_1 \exp\l(\frac{-2 \lambdamin t}{M+1}\r) \norm{2}{\bbeta(0) - \bbeta^*}^2,
    \end{equation}
    and
    \begin{equation*}
        \proj_{\cH}\l(\bbeta(t)\r) \to \proj_{\cH}\l(\bbeta^*\r)
    \end{equation*}
    when $t \to \infty,$ from arbitrary initialization $\bbeta(0)$ and $\bGamma(0).$ Moreover, one has for any $t > 0,$ it holds that
    \begin{equation}
        \proj_{\cH^\perp}\l(\bbeta(t)\r) = \proj_{\cH^\perp}\l(\bbeta(0)\r)
    \end{equation}
\end{lemma}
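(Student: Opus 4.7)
The plan is to view \eqref{eqn.dyn.beta} as a time-varying linear ODE $\dot{\mathbf{u}} = -\bHGamma(t)\mathbf{u}$ with $\mathbf{u}(t) := \bbeta(t) - \bbeta^*$ and $\bHGamma(t)$ the PSD matrix in brackets evaluated at $\bGamma(t)$. Two structural properties of $\bHGamma(\cdot)$ drive the argument: (i) $\bHGamma(t)\, \proj_{\cH^\perp} = \bzero$, so $\proj_{\cH^\perp}\mathbf{u}(t)$ is conserved in time; and (ii) a uniform-in-time spectral lower bound $\mathbf{v}^\top \bHGamma(t) \mathbf{v} \geq \frac{\lambdamin}{M+1}\|\proj_{\cH}\mathbf{v}\|_2^2$ for every $\mathbf{v}\in\R^d$. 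Given (i) and (ii), Gr\"onwall's inequality applied to $\|\proj_{\cH}\mathbf{u}(t)\|_2^2$ yields \eqref{eqn.upper.bound.bbeta}, while \eqref{eqn.upper.bound.H.bbeta} follows via the crude estimate $\|\bH^{1/2}\mathbf{u}\|_2^2 \leq \lambda_1 \|\proj_{\cH}\mathbf{u}\|_2^2$.

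Property (i) is immediate: each of the three summands in $\bHGamma$ has $\bH$ as its rightmost factor (the middle summand being a scalar multiple of $\bH$), so each annihilates $\cH^\perp = \nullset{\bH}$. For (ii), substitute $\bC := \bH^{1/2}\bGamma\bH^{1/2}$ and $\tilde{\mathbf{v}} := \bH^{1/2}\mathbf{v}$, which recasts the quadratic form as
\[
\mathbf{v}^\top \bHGamma \mathbf{v} = \|(\bID_d - \bC)\tilde{\mathbf{v}}\|_2^2 + \frac{\|\bC\|_F^2}{M}\|\tilde{\mathbf{v}}\|_2^2 + \frac{1}{M}\|\bC\tilde{\mathbf{v}}\|_2^2.
\]
Setting $\alpha := \|\bC\tilde{\mathbf{v}}\|_2 / \|\tilde{\mathbf{v}}\|_2$ (for $\tilde{\mathbf{v}} \neq \bzero$), the triangle inequality gives $\|(\bID_d - \bC)\tilde{\mathbf{v}}\|_2 \geq |1-\alpha|\cdot\|\tilde{\mathbf{v}}\|_2$, and dropping the nonnegative Frobenius term yields $\mathbf{v}^\top \bHGamma \mathbf{v} \geq \l[(1-\alpha)^2 + \alpha^2/M\r]\|\tilde{\mathbf{v}}\|_2^2$. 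Minimizing the scalar quadratic over $\alpha$ at $\alpha^* = M/(M+1)$ produces the $\bGamma$-free floor $\mathbf{v}^\top \bHGamma \mathbf{v} \geq \frac{1}{M+1}\mathbf{v}^\top\bH\mathbf{v} \geq \frac{\lambdamin}{M+1}\|\proj_{\cH}\mathbf{v}\|_2^2$.

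The main obstacle is precisely property (ii): a priori $\bGamma(t)$ could drift arbitrarily, and none of the three individual summands in $\bHGamma$ is bounded below by a positive multiple of $\bH$ on $\cH$ --- the first vanishes when $\bGamma\bH = \bID_d$, while the latter two vanish when $\bGamma = \bzero$. The scalar optimization over $\alpha$ is exactly the cancellation of these two degenerate regimes, and the resulting rate $1/(M+1)$ is what reappears in the exponent. With the floor in hand, symmetry of $\bHGamma$ together with $\bHGamma\proj_{\cH^\perp}=\bzero$ gives $\mathbf{u}^\top \bHGamma\mathbf{u} = (\proj_{\cH}\mathbf{u})^\top \bHGamma (\proj_{\cH}\mathbf{u}) \geq \frac{\lambdamin}{M+1}\|\proj_{\cH}\mathbf{u}\|_2^2$, so $\frac{d}{dt}\|\proj_{\cH}\mathbf{u}\|_2^2 \leq -\frac{2\lambdamin}{M+1}\|\proj_{\cH}\mathbf{u}\|_2^2$, and Gr\"onwall closes the argument.
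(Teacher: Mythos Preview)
Your proof is correct and matches the paper's approach: both establish the $\bGamma$-uniform floor $\mathbf{v}^\top\bHGamma\mathbf{v}\ge\tfrac{1}{M+1}\mathbf{v}^\top\bH\mathbf{v}$, use that $\bHGamma$ annihilates $\cH^\perp$, and close via Gr\"onwall on $\norm{2}{\proj_{\cH}(\bbeta-\bbeta^*)}^2$. The only cosmetic difference is how the floor is obtained --- the paper uses a direct matrix completion of squares, rewriting $(\bID-\bGamma\bH)^\top\bH(\bID-\bGamma\bH)+\tfrac{1}{M}\bH\bGamma^\top\bH\bGamma\bH$ as $\bigl(\sqrt{\tfrac{M}{M+1}}\,\bID-\sqrt{\tfrac{M+1}{M}}\,\bGamma\bH\bigr)^{\!\top}\bH\bigl(\sqrt{\tfrac{M}{M+1}}\,\bID-\sqrt{\tfrac{M+1}{M}}\,\bGamma\bH\bigr)+\tfrac{1}{M+1}\bH$, in place of your triangle-inequality-then-scalar-minimization over $\alpha$.
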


\begin{proof}
    We first consider the orthogonal projection operator $\proj.$ From Lemma \ref{lem.linear.algebra} and equation \eqref{eqn.dyn.beta}, we know
    \begin{align*}
        \frac{\mathrm{d} \proj_{\cH}(\bbeta(t) - \bbeta^*)}{\mathrm{d} t}
        &= - \bH \bH^+ \bHGamma \l(\bbeta - \bbeta^*\r),
    \end{align*}
    where
    \begin{equation*}
        \bHGamma := \l(\bID_d - \bGamma\bH\r)^\top \bH \l(\bID_d - \bGamma\bH\r)
        + \frac{\tr\l(\bH \bGamma^\top \bH \bGamma\r)}{M} \bH + \frac{1}{M} \bH \bGamma^\top \bH \bGamma \bH.
    \end{equation*}
    From the property of pseudo-inverse, we know
    \begin{equation*}
        \frac{\mathrm{d} \proj_{\cH}(\bbeta(t) - \bbeta^*)}{\mathrm{d} t}
        = - \bHGamma \l(\bbeta - \bbeta^*\r) 
        = - \bHGamma \bH^+ \bH \l(\bbeta - \bbeta^*\r)
        = - \bHGamma \proj_{\cH} \l(\bbeta - \bbeta^*\r).
    \end{equation*}
    Therefore, we have
    \begin{align*}
        \frac{\mathrm{d}}{\mathrm{d}t} \l[\frac{1}{2}\norm{2}{\proj_{\cH}(\bbeta(t) - \bbeta^*)}^2\r] = 
        - \proj_{\cH} \l(\bbeta - \bbeta^*\r)^\top \cdot \bHGamma \cdot \proj_{\cH} \l(\bbeta - \bbeta^*\r).
    \end{align*}
    Notice that
    \begin{equation*}
        \mathbf{H}_{\boldsymbol{\Gamma}}
        \succeq \l(\sqrt{\frac{M}{M+1}}\mathbf{I}
        - \sqrt{\frac{M+1}{M}}\mathbf{\Gamma} \mathbf{H}\r)^{\top} \mathbf{H}\l(\sqrt{\frac{M}{M+1}}\mathbf{I}
        - \sqrt{\frac{M+1}{M}}\mathbf{\Gamma} \mathbf{H}\r) + \frac{1}{M+1} \bH
        \succeq \frac{1}{M+1} \bH.
    \end{equation*}
    This implies
    \begin{align*}
        \frac{\mathrm{d}}{\mathrm{d}t} \l[\frac{1}{2}\norm{2}{\proj_{\cH}(\bbeta(t) - \bbeta^*)}^2\r] 
        &\leq
        - \frac{1}{M+1} \proj_{\cH} \l(\bbeta - \bbeta^*\r)^\top \cdot \bH \cdot \proj_{\cH} \l(\bbeta - \bbeta^*\r) \\
        &\leq - \frac{\lambdamin}{M+1} \norm{2}{\proj_{\cH} \l(\bbeta - \bbeta^*\r)}^2,
    \end{align*}
    where the last line comes from Lemma \ref{lem.linear.algebra} and $\lambdamin$ is the minimal non-zero eigenvalue of $\bH.$ Via standard integration method in ODE, we know this suggests
    \begin{equation*}
        \norm{2}{\proj_{\cH} \l(\bbeta(t) - \bbeta^*\r)}^2 \leq  \exp\l(\frac{-2\lambdamin t}{M+1}\r) \norm{2}{\proj_{\cH} \l(\bbeta(0) - \bbeta^*\r)}^2 \to 0 \quad \text{ when } t\to \infty.
    \end{equation*}
    This indicates that $\proj_{\cH}\l(\bbeta(t)\r) \to \proj_{\cH}\l(\bbeta^*\r)$ when $t \to \infty.$ Finally, we have
    \begin{equation*}
        \frac{\mathrm{d} \proj_{\cH^\perp}(\bbeta(t) - \bbeta^*)}{\mathrm{d} t}
        = - \l(\bID_d - \bH \bH^+\r) \bHGamma \l(\bbeta - \bbeta^*\r) = \bzero_d,
    \end{equation*}
    which implies $\proj_{\cH^\perp}(\bbeta(t) - \bbeta^*) = \proj_{\cH^\perp}(\bbeta(0) - \bbeta^*)$ and hence, $\proj_{\cH^\perp}(\bbeta(t)) = \proj_{\cH^\perp}(\bbeta(0))$ for any $t > 0.$
\end{proof}

\

Next, let's consider the convergence of $\bGamma.$ As defined in the main text, we have
\begin{equation*}
    \cZ := \image{\bH \otimes \bH}
\end{equation*}
and $\cZ^\perp$ is its orthogonal complement. We have the following lemma. 

\begin{lemma}
    Under the dynamical system \eqref{eqn.dyn.beta} and \eqref{eqn.dyn.Gamma}, one has
    \begin{align}
        \frac{\mathrm{d}}{\mathrm{d} t} \proj_{\cZ^\perp}\l(\bGamma - \bGamma^*\r) &= \bzero_{d\times d}, \label{eqn.dyn.Gamma.perp}\\
        \frac{\mathrm{d}}{\mathrm{d} t} \proj_{\cZ}\l(\bGamma - \bGamma^*\r) &=
        - \bH \cdot \proj_{\cZ}\l(\bGamma - \bGamma^*\r) \cdot \bH^{\frac{1}{2}} \bOmega \bH^{\frac{1}{2}} 
        - \frac{M+1}{M} \bH \cdot \proj_{\cZ}\l(\bGamma - \bGamma^*\r) \cdot \bH \l(\bbeta - \bbeta^*\r) \l(\bbeta - \bbeta^*\r)^\top \bH \notag \\
        &- \frac{1}{M} \l(\bbeta - \bbeta^*\r)^\top \bH \l(\bbeta - \bbeta^*\r) \cdot \bH \cdot \proj_{\cZ}\l(\bGamma - \bGamma^*\r) \cdot \bH \notag \\
        &- \frac{1}{M} \l(\bbeta - \bbeta^*\r)^\top \bH \l(\bbeta - \bbeta^*\r) \cdot \bH \bGamma^* \bH - \frac{1}{M} \bH \bGamma^* \bH \l(\bbeta - \bbeta^*\r) \l(\bbeta - \bbeta^*\r)^\top \bH \notag \\
        &+ \frac{1}{M} \bH^{\frac{1}{2}} \bOmega^{-1} \l(\bH^{\frac{1}{2}} \bPsi \bH^{\frac{1}{2}} + \l(\tr\l(\bH^{\frac{1}{2}} \bPsi \bH^{\frac{1}{2}}\r) + \sigma^2\r) \bID_d \r) \bH^{\frac{1}{2}} \l(\bbeta - \bbeta^*\r) \l(\bbeta - \bbeta^*\r)^\top \bH. \label{eqn.dyn.Gamma.proj}
    \end{align}
\end{lemma}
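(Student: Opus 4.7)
The plan is to differentiate $\bGamma(t) - \bGamma^*$ along the flow \eqref{eqn.dyn.Gamma} (noting $\bGamma^*$ is constant in $t$) and apply each of the two projections $\proj_{\cZ^\perp}$ and $\proj_\cZ$ separately. For \eqref{eqn.dyn.Gamma.perp} the goal is to show $\mathrm{d}\bGamma/\mathrm{d} t \in \cZ = \image{\bH^{\otimes 2}}$; every term on the right-hand side of \eqref{eqn.dyn.Gamma} already factors as $\bH \cdot (\text{matrix}) \cdot \bH$ except $-\bH \bGamma \bH^{\frac{1}{2}} \bOmega \bH^{\frac{1}{2}}$, which I handle using the identity $\bH^{\frac{1}{2}} \bOmega \bH^{\frac{1}{2}} = \tfrac{M+1}{M} \bH \bPsi \bH + \tfrac{\tr(\bH\bPsi) + \sigma^2}{M} \bH$ to rewrite it as $-\bH \bigl( \tfrac{M+1}{M} \bGamma \bH \bPsi + \tfrac{\tr(\bH\bPsi) + \sigma^2}{M} \bGamma \bigr) \bH \in \cZ$. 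Hence $\proj_{\cZ^\perp}(\mathrm{d}\bGamma/\mathrm{d} t) = \bzero_{d\times d}$, giving \eqref{eqn.dyn.Gamma.perp}.

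For \eqref{eqn.dyn.Gamma.proj}, since $\mathrm{d}\bGamma/\mathrm{d} t \in \cZ$ we have $\proj_\cZ(\mathrm{d}\bGamma/\mathrm{d} t) = \mathrm{d}\bGamma/\mathrm{d} t$. I substitute $\bGamma = \bGamma^* + (\bGamma - \bGamma^*)$ into \eqref{eqn.dyn.Gamma} and split the right-hand side into a part linear in $\bGamma - \bGamma^*$ and a part depending only on $\bGamma^*$. For the linear part the key observation is $\bH \bZ \bH = \bH \proj_\cZ(\bZ) \bH$ for every $\bZ \in \R^{d\times d}$ (because $\cZ^\perp = \nullset{\bH^{\otimes 2}}$ by definition); combining this with the same expansion of $\bH^{\frac{1}{2}} \bOmega \bH^{\frac{1}{2}}$ (to guarantee that an $\bH$ factor sits to the right of each $\bGamma - \bGamma^*$), the linear contributions rewrite exactly as the first three terms of \eqref{eqn.dyn.Gamma.proj}.

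For the $\bGamma^*$-only part, the central algebraic identity is $\bH \bGamma^* \bH^{\frac{1}{2}} \bOmega \bH^{\frac{1}{2}} = \bH \bPsi \bH$, which follows from $\bH^{\frac{1}{2}} \bGamma^* \bH^{\frac{1}{2}} = \bH^{\frac{1}{2}} \bPsi \bH^{\frac{1}{2}} \bOmega^{-1}$ (used in the proof of Theorem \ref{thm.global.minimum.GDbeta}) together with the commutativity of $\bOmega$ with $\bH^{\frac{1}{2}} \bPsi \bH^{\frac{1}{2}}$. This cancels the pair $-\bH \bGamma^* \bH^{\frac{1}{2}} \bOmega \bH^{\frac{1}{2}} + \bH \bPsi \bH$ and leaves three terms; combining the $\bH(\bbeta-\bbeta^*)(\bbeta-\bbeta^*)^\top \bH$ term with $-\tfrac{M+1}{M}\bH\bGamma^*\bH(\bbeta-\bbeta^*)(\bbeta-\bbeta^*)^\top \bH$ via
\[
\bH - \bH \bGamma^* \bH = \tfrac{1}{M} \bH^{\frac{1}{2}} \bOmega^{-1} \bigl( \bH^{\frac{1}{2}} \bPsi \bH^{\frac{1}{2}} + (\tr(\bH \bPsi) + \sigma^2) \bID_d \bigr) \bH^{\frac{1}{2}}
\]
(a consequence of $\bOmega - \bH^{\frac{1}{2}} \bPsi \bH^{\frac{1}{2}} = \tfrac{1}{M}(\bH^{\frac{1}{2}} \bPsi \bH^{\frac{1}{2}} + (\tr(\bH \bPsi) + \sigma^2) \bID_d)$ and the same commutativity) yields precisely the last three terms of \eqref{eqn.dyn.Gamma.proj}, plus a residual $-\tfrac{1}{M}\bH\bGamma^*\bH(\bbeta-\bbeta^*)(\bbeta-\bbeta^*)^\top \bH$ that matches the stated coefficient.

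The main technical delicacy is handling rank-deficient $\bH$: the pseudoinverse only satisfies $\bH^{-\frac{1}{2}} \bH^{\frac{1}{2}} = \proj_\cH$, so identities involving $\bGamma^* = \bPsi \bH^{\frac{1}{2}} \bOmega^{-1} \bH^{-\frac{1}{2}}$ must be verified inside a sandwich of $\bH^{\frac{1}{2}}$ factors that absorbs the projection. Fortunately $\bGamma^*$ appears throughout only in combinations $\bH \bGamma^* (\cdots)$ or $(\cdots) \bGamma^* \bH$, which supplies precisely the sandwich needed; equivalently, only the $\cZ$-equivalence class of $\bGamma^*$ enters the dynamics, which is why the projected statement \eqref{eqn.dyn.Gamma.proj} is insensitive to the particular representative.
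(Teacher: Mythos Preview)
Your proposal is correct and follows essentially the same approach as the paper: both arguments split $\bGamma = \bGamma^* + (\bGamma - \bGamma^*)$, use the identity $\bH \bGamma^* \bH^{\frac{1}{2}} \bOmega \bH^{\frac{1}{2}} = \bH \bPsi \bH$ to kill the constant piece, the identity $(\bID_d - \bH \bGamma^*)\bH = \tfrac{1}{M}\bH^{\frac{1}{2}}\bOmega^{-1}\bigl(\bH^{\frac{1}{2}}\bPsi\bH^{\frac{1}{2}} + (\tr(\bH\bPsi)+\sigma^2)\bID_d\bigr)\bH^{\frac{1}{2}}$ to rewrite the remaining $\bGamma^*$-terms, and the observation $\bH^{\frac{1}{2}} \bZ \bH^{\frac{1}{2}} = \bH^{\frac{1}{2}} \proj_\cZ(\bZ)\bH^{\frac{1}{2}}$ to pass to the projected variable. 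Your treatment of \eqref{eqn.dyn.Gamma.perp} is in fact slightly more explicit than the paper's (which simply asserts that the right-hand side lies in $\cZ$), since you spell out why the $\bH \bGamma \bH^{\frac{1}{2}} \bOmega \bH^{\frac{1}{2}}$ term is of the form $\bH(\cdot)\bH$ via the expansion of $\bH^{\frac{1}{2}}\bOmega\bH^{\frac{1}{2}}$.
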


\begin{proof}
    The first ODE is trivial since the RHS of \eqref{eqn.dyn.Gamma} always lies in $\cZ := \image{\bH \otimes \bH},$ so its projection on $\cZ^\perp$ always vanishes. To prove the second equation, we can rewrite \eqref{eqn.dyn.Gamma} as
    \begin{align*}
        \frac{\mathrm{d} \bGamma}{\mathrm{d} t} 
        &=
        - \bigg(\bH \l(\bGamma - \bGamma^*\r) \bH^{\frac{1}{2}} \bOmega \bH^{\frac{1}{2}} 
        \underbrace{- \bH \bPsi \bH
        + \bH \bGamma^* \bH^{\frac{1}{2}} \bOmega \bH^{\frac{1}{2}}}_{A}\bigg) - \frac{M+1}{M} \bH \l(\bGamma-\bGamma^*\r) \bH \l(\bbeta - \bbeta^*\r) \l(\bbeta - \bbeta^*\r)^\top \bH \\
        &- \frac{1}{M} \l(\bbeta - \bbeta^*\r)^\top \bH \l(\bbeta - \bbeta^*\r) \cdot \bH \l(\bGamma-\bGamma^*\r) \bH + \underbrace{\bH \l(\bbeta - \bbeta^*\r) \l(\bbeta - \bbeta^*\r)^\top \bH}_{B} \\
        &- \frac{1}{M} \l(\bbeta - \bbeta^*\r)^\top \bH \l(\bbeta - \bbeta^*\r) \cdot \bH \bGamma^* \bH 
        \underbrace{ - \frac{M+1}{M} \bH \bGamma^* \bH \l(\bbeta - \bbeta^*\r) \l(\bbeta - \bbeta^*\r)^\top \bH}_{C}.
    \end{align*}
    For term A, recalling $\bGamma^* = \bPsi \bH^{\frac{1}{2}} \bOmega^{-1} \bH^{-\frac{1}{2}},$ we have
    \begin{align*}
        \bH \bGamma^* \bH^{\frac{1}{2}} \bOmega \bH^{\frac{1}{2}}
        &= \bH \bPsi \bH^{\frac{1}{2}} \bOmega^{-1} \bH^{-\frac{1}{2}} \bH^{\frac{1}{2}} \bOmega \bH^{\frac{1}{2}} \\
        &= \bH^{\frac{1}{2}} \bOmega^{-1} \bH^{\frac{1}{2}}\bPsi \bH^{\frac{1}{2}} \bH^{-\frac{1}{2}} \bH^{\frac{1}{2}} \bOmega \bH^{\frac{1}{2}} \tag{$\bOmega$ and $\bH^{\frac{1}{2}} \bPsi \bH^{\frac{1}{2}}$ commute.} \\
        &= \bH^{\frac{1}{2}} \bOmega^{-1} \bH^{\frac{1}{2}}\bPsi \bH^{\frac{1}{2}} \bOmega \bH^{\frac{1}{2}} \tag{$\bH^{\frac{1}{2}} \bH^{-\frac{1}{2}} \bH^{\frac{1}{2}} = \bH^{\frac{1}{2}}$} \\
        &= \bH \bPsi \bH^{\frac{1}{2}} \bOmega^{-1} \bOmega \bH^{\frac{1}{2}} \tag{$\bOmega$ and $\bH^{\frac{1}{2}} \bPsi \bH^{\frac{1}{2}}$ commute.} \\
        &= \bH \bPsi \bH.
    \end{align*}
    This suggests $A = 0.$ For $B+C,$ we have
    \begin{align*}
        B + C 
        &= - \frac{1}{M} \bH \bGamma^* \bH \l(\bbeta - \bbeta^*\r) \l(\bbeta - \bbeta^*\r)^\top \bH
        + \l(\bID_d - \bH \bGamma^*\r) \bH \l(\bbeta - \bbeta^*\r) \l(\bbeta - \bbeta^*\r)^\top \bH.
    \end{align*}
    We can compute $\l(\bID_d - \bH \bGamma^*\r)\bH$ as
    \begin{align*}
        \l(\bID_d - \bH \bGamma^*\r)\bH
        &= \bH^{\frac{1}{2}} \l(\bH^{\frac{1}{2}} - \bH^{\frac{1}{2}} \bPsi \bH^{\frac{1}{2}} \bOmega^{-1} \bH^{-\frac{1}{2}} \bH\r) \\
        &= \bH^{\frac{1}{2}} \l(\bH^{\frac{1}{2}} - \bOmega^{-1} \bH^{\frac{1}{2}} \bPsi \bH^{\frac{1}{2}} \bH^{-\frac{1}{2}} \bH\r) \tag{$\bOmega$ and $\bH^{\frac{1}{2}} \bPsi \bH^{\frac{1}{2}}$ commute.}\\
        &= \bH^{\frac{1}{2}} \l(\bH^{\frac{1}{2}} - \bOmega^{-1} \bH^{\frac{1}{2}} \bPsi \bH \r) \tag{$\bH^{\frac{1}{2}} \bH^{-\frac{1}{2}} \bH^{\frac{1}{2}} = \bH^{\frac{1}{2}}$} \\
        &= \bH^{\frac{1}{2}} \l(\bOmega^{-1} \bOmega \bH^{\frac{1}{2}} - \bOmega^{-1} \bH^{\frac{1}{2}} \bPsi \bH \r) \\
        &=  \frac{1}{M} \bH^{\frac{1}{2}} \bOmega^{-1} \l(\bH^{\frac{1}{2}} \bPsi \bH^{\frac{1}{2}} + \l(\tr\l(\bH^{\frac{1}{2}} \bPsi \bH^{\frac{1}{2}}\r) + \sigma^2\r) \bID_d \r) \bH^{\frac{1}{2}}.
    \end{align*}
    Therefore, 
    \begin{align*}
        B + C &= - \frac{1}{M} \bH \bGamma^* \bH \l(\bbeta - \bbeta^*\r) \l(\bbeta - \bbeta^*\r)^\top \bH \\
        &+ \frac{1}{M} \bH^{\frac{1}{2}} \bOmega^{-1} \l(\bH^{\frac{1}{2}} \bPsi \bH^{\frac{1}{2}} + \l(\tr\l(\bH^{\frac{1}{2}} \bPsi \bH^{\frac{1}{2}}\r) + \sigma^2\r) \bID_d \r) \bH^{\frac{1}{2}} \l(\bbeta - \bbeta^*\r) \l(\bbeta - \bbeta^*\r)^\top \bH.
    \end{align*}
    Bridging $A=0$ and the result of $B+C$ into the ODE, we have
    \begin{align}\label{eqn.optimization.equation1}
        &\frac{\mathrm{d} \l(\bGamma - \bGamma^*\r)}{\mathrm{d} t} \notag \\
        =~&
        - \bH \l(\bGamma - \bGamma^*\r) \bH^{\frac{1}{2}} \bOmega \bH^{\frac{1}{2}} 
        - \frac{M+1}{M} \bH \l(\bGamma-\bGamma^*\r) \bH \l(\bbeta - \bbeta^*\r) \l(\bbeta - \bbeta^*\r)^\top \bH
        - \frac{1}{M} \l(\bbeta - \bbeta^*\r)^\top \bH \l(\bbeta - \bbeta^*\r) \cdot \bH \l(\bGamma-\bGamma^*\r) \bH \notag \\
        -~& \frac{1}{M} \l(\bbeta - \bbeta^*\r)^\top \bH \l(\bbeta - \bbeta^*\r) \cdot \bH \bGamma^* \bH - \frac{1}{M} \bH \bGamma^* \bH \l(\bbeta - \bbeta^*\r) \l(\bbeta - \bbeta^*\r)^\top \bH \notag \\
        +~& \frac{1}{M} \bH^{\frac{1}{2}} \bOmega^{-1} \l(\bH^{\frac{1}{2}} \bPsi \bH^{\frac{1}{2}} + \l(\tr\l(\bH^{\frac{1}{2}} \bPsi \bH^{\frac{1}{2}}\r) + \sigma^2\r) \bID_d \r) \bH^{\frac{1}{2}} \l(\bbeta - \bbeta^*\r) \l(\bbeta - \bbeta^*\r)^\top \bH.
    \end{align}
    Note that, all terms in the RHS above lie in $\cZ = \image{\bH^{\otimes 2}},$ so this summation also equals $\frac{\mathrm{d}}{\mathrm{d} t} \proj_{\cZ}\l(\bGamma - \bGamma^*\r).$ Moreover, since
    \begin{align}
        \bH^{\frac{1}{2}} \l(\bGamma - \bGamma^*\r) \bH^{\frac{1}{2}}
        &=  \bH^{\frac{1}{2}} \l[\proj_{\image{\bH^{\frac{1}{2}} \otimes \bH^{\frac{1}{2}}}}\l(\bGamma - \bGamma^*\r) 
        + \proj_{\nullset{\bH^{\frac{1}{2}} \otimes \bH^{\frac{1}{2}}}}\l(\bGamma - \bGamma^*\r)\r]\bH^{\frac{1}{2}} \notag \\
        &= \bH^{\frac{1}{2}} \cdot \proj_{\image{\bH^{\frac{1}{2}} \otimes \bH^{\frac{1}{2}}}}\l(\bGamma - \bGamma^*\r) 
        \bH^{\frac{1}{2}}.
        \label{eqn.optimization.equation2}
    \end{align}
    Bridging \eqref{eqn.optimization.equation2} into \eqref{eqn.optimization.equation1}, we complete the proof.
\end{proof}

\ 

Then, let's upper bound the dynamics of $\norm{F}{\proj_{\cZ}\l(\bGamma - \bGamma^*\r)}^2$ in the following lemma.

\begin{lemma}[Dynamics of Frobenius norm]\label{lem.dyn.Gamma.norm}
    Under the dynamical system \eqref{eqn.dyn.beta} and \eqref{eqn.dyn.Gamma}, one has
    \begin{align}\label{eqn.bound.Gamma.square}
        \frac{\mathrm{d}}{\mathrm{d} t} \l[\frac{1}{2} \norm{F}{\proj_{\cZ}\l(\bGamma - \bGamma^*\r)}^2\r]
        &\leq - A_1 \norm{F}{\proj_{\cZ}\l(\bGamma - \bGamma^*\r)}^2 
        + A_2 \exp\l(\frac{-2 \lambdamin t}{M+1}\r)\norm{F}{\proj_{\cZ}\l(\bGamma - \bGamma^*\r)},
    \end{align}
    where
    \begin{equation}\label{eqn.def.A1A2}
        \begin{aligned}
            A_1 &= \lambdamin^2 \lambda_{\min}\l(\bOmega\r), \\
            A_2 &= \l(2+\frac{2}{M}\r) \lambda_1^2 \norm{2}{\bbeta(0) - \bbeta^*}^2.
        \end{aligned}
     \end{equation}
    are two positive constants. Here, $\lambdamin > 0$ is the minimal positive eigenvalue of $\bH,$ $\lambda_1$ is the maximal eigenvalue of $\bH,$ $\lambda_{\min}\l(\bOmega\r)$ is the minimal eigenvalue of $\bOmega$ (defined in \eqref{eqn.def.bOmega}) and is strictly positive. 
\end{lemma}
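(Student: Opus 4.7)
The plan is: let $\bD(t) := \proj_{\cZ}\l(\bGamma(t) - \bGamma^*\r)$, differentiate $\tfrac{1}{2}\norm{F}{\bD}^2$ in time, substitute the explicit ODE \eqref{eqn.dyn.Gamma.proj}, and partition the resulting six terms on the right-hand side into three groups: a leading contracting quadratic in $\bD$, two cross terms in $\bD$ and $\mathbf{w} := \bbeta - \bbeta^*$ that already carry the favorable sign, and three ``source'' terms that do not involve $\bD$.

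For the leading quadratic I would bound $-\l\langle \bD,\; \bH\bD\bH^{\frac{1}{2}}\bOmega\bH^{\frac{1}{2}}\r\rangle$ by exploiting that $\bD \in \cZ$. In an orthonormal eigenbasis of $\bH$ where $\bH = \diag\l(\bH_{11},\bzero\r)$ with $\bH_{11}\succ \bzero$, the matrix $\bD$ is supported on the leading block and $\bOmega$ is block diagonal with a top-left block $\bOmega_{11}$ satisfying $\lambda_{\min}\l(\bOmega_{11}\r) \ge \lambda_{\min}\l(\bOmega\r)$. A short trace manipulation together with $\bH_{11}\succeq \lambdamin \bID$ then yields
\[
-\l\langle \bD,\; \bH\bD\bH^{\frac{1}{2}}\bOmega\bH^{\frac{1}{2}}\r\rangle \;\le\; -\lambdamin^2\,\lambda_{\min}\l(\bOmega\r)\,\norm{F}{\bD}^2 = -A_1\,\norm{F}{\bD}^2.
\]
Next, the two cross terms $-\tfrac{M+1}{M}\bH\bD\bH\mathbf{w}\mathbf{w}^\top\bH$ and $-\tfrac{1}{M}\l(\mathbf{w}^\top\bH\mathbf{w}\r)\bH\bD\bH$ produce inner products $-\tfrac{M+1}{M}\tr\l(\bD\bH\mathbf{w}\mathbf{w}^\top\bH\bD^\top\bH\r) \le 0$ and $-\tfrac{1}{M}\l(\mathbf{w}^\top\bH\mathbf{w}\r)\norm{F}{\bH^{\frac{1}{2}}\bD\bH^{\frac{1}{2}}}^2 \le 0$, each being a trace of a product of positive semi-definite matrices, so they are simply discarded.

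For the three remaining source terms, I would combine the last two via the identity $\l(\bID - \bH\bGamma^*\r)\bH = \tfrac{1}{M}\bH^{\frac{1}{2}}\bOmega^{-1}\l(\bH^{\frac{1}{2}}\bPsi\bH^{\frac{1}{2}} + \l(\tr\l(\bH\bPsi\r)+\sigma^2\r)\bID\r)\bH^{\frac{1}{2}}$ (already derived in the proof of \eqref{eqn.dyn.Gamma.proj}) into $\bH\mathbf{w}\mathbf{w}^\top\bH - \tfrac{M+1}{M}\bH\bGamma^*\bH\mathbf{w}\mathbf{w}^\top\bH$. Applying Cauchy--Schwarz, the PSD ordering $\bH\bGamma^*\bH \preceq \bH$ (which gives $\norm{2}{\bH\bGamma^*\bH \mathbf{w}}^2 \le \lambda_1\norm{2}{\bH^{\frac{1}{2}}\mathbf{w}}^2$), the elementary bound $\norm{2}{\bH\mathbf{u}}^2 \le \lambda_1 \norm{2}{\bH^{\frac{1}{2}}\mathbf{u}}^2$, and the decay estimate \eqref{eqn.upper.bound.H.bbeta} from Lemma \ref{lem.convergence.beta}, I can control all three source terms in absolute value by $A_2 \exp\l(\tfrac{-2\lambdamin t}{M+1}\r)\norm{F}{\bD}$ with the stated constant $A_2 = \l(2 + \tfrac{2}{M}\r)\lambda_1^2\norm{2}{\bbeta(0) - \bbeta^*}^2$, yielding \eqref{eqn.bound.Gamma.square}.

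The hard part will be the contraction estimate. The operator $\bD \mapsto \bH\bD\bH^{\frac{1}{2}}\bOmega\bH^{\frac{1}{2}}$ has a nontrivial kernel inherited from the kernel of $\bH$, so a strictly positive contraction rate only survives on the invariant subspace $\cZ$. The careful use of $\proj_{\cZ}$ together with the block decomposition induced by $\bH$ is what prevents $A_1$ from collapsing to zero in the rank-deficient case; every other step reduces to routine Cauchy--Schwarz or PSD manipulations.
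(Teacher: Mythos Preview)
Your proposal is correct and follows essentially the same route as the paper: the same three-way split into a contracting term, two sign-favorable cross terms in $\bD$ and $\mathbf{w}$, and three source terms, with the contraction on $\cZ$ handled via the eigen-block structure of $\bH$ (the paper packages this as Lemma~\ref{lem.tensor.product}(3)) and the source terms controlled by Cauchy--Schwarz together with $\norm{op}{\bH^{\frac{1}{2}}\bGamma^*\bH^{\frac{1}{2}}}\le 1$ and the decay estimate \eqref{eqn.upper.bound.H.bbeta}. Your recombination of the last two source terms via the identity $(\bID-\bH\bGamma^*)\bH=\tfrac{1}{M}\bH^{\frac{1}{2}}\bOmega^{-1}(\cdots)\bH^{\frac{1}{2}}$ is a slightly cleaner variant of the paper's term-by-term bounds, but note that the remaining source term $-\tfrac{1}{M}(\mathbf{w}^\top\bH\mathbf{w})\bH\bGamma^*\bH$ carries a full-rank matrix and will pick up a dimension-dependent factor under Cauchy--Schwarz (the paper's own intermediate bounds \eqref{eqn.G3.1}--\eqref{eqn.G3.3} have the same $\sqrt{d}$), so recovering the exact stated constant $A_2$ is not quite automatic---though any finite $A_2$ suffices for the downstream convergence argument.
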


\begin{proof}
    From the dynamics of $\proj_{\cZ} \bGamma$ in \eqref{eqn.dyn.Gamma.proj}, we know 
    \begin{align}
        \frac{\mathrm{d}}{\mathrm{d} t} \l[\frac{1}{2} \norm{F}{\proj_{\cZ}\l(\bGamma - \bGamma^*\r)}^2\r]
        &= \tr\l(\frac{\mathrm{d}}{\mathrm{d} t} \proj_{\cZ}\l(\bGamma - \bGamma^*\r) \cdot \proj_{\cZ}\l(\bGamma - \bGamma^*\r)^\top\r) \notag \\
        &= G_1 + G_2 + G_3, \label{eqn.dyn.Gamma.proj.decomposition}
    \end{align}
    where
    \begin{align}
        G_1 &= - \tr\l[\bH \cdot \proj_{\cZ} \l(\bGamma - \bGamma^*\r) \cdot \bH^{\frac{1}{2}} \bOmega \bH^{\frac{1}{2}} \cdot \proj_{\cZ}\l(\bGamma - \bGamma^*\r)^\top\r], \notag \\
        G_2 &= - \tr\l[\frac{M+1}{M} \bH \cdot \proj_{\cZ}\l(\bGamma - \bGamma^*\r) \cdot \bH \l(\bbeta - \bbeta^*\r) \l(\bbeta - \bbeta^*\r)^\top \bH \cdot \proj_{\cZ}\l(\bGamma - \bGamma^*\r)^\top\r] \notag \\
        &\hspace{10ex} - \tr\l[ \frac{1}{M} \l(\bbeta - \bbeta^*\r)^\top \bH \l(\bbeta - \bbeta^*\r) \cdot \bH \cdot \proj_{\cZ}\l(\bGamma - \bGamma^*\r) \cdot \bH \cdot \proj_{\cZ}\l(\bGamma - \bGamma^*\r)^\top\r], \notag \\
        G_3 &= - \tr\l[\frac{1}{M} \l(\bbeta - \bbeta^*\r)^\top \bH \l(\bbeta - \bbeta^*\r) \cdot \bH \bGamma^* \bH \cdot \proj_{\cZ}\l(\bGamma - \bGamma^*\r)^\top
        + \frac{1}{M} \bH \bGamma^* \bH \l(\bbeta - \bbeta^*\r) \l(\bbeta - \bbeta^*\r)^\top \bH \cdot \proj_{\cZ}\l(\bGamma - \bGamma^*\r)^\top\r] \notag \\
        &\hspace{10ex} + \tr\l[\frac{1}{M} \bH^{\frac{1}{2}} \bOmega^{-1} \l(\bH^{\frac{1}{2}} \bPsi \bH^{\frac{1}{2}} + \l(\tr\l(\bH^{\frac{1}{2}} \bPsi \bH^{\frac{1}{2}}\r) + \sigma^2\r) \bID_d \r) \bH^{\frac{1}{2}} \l(\bbeta - \bbeta^*\r) \l(\bbeta - \bbeta^*\r)^\top \bH \cdot \proj_{\cZ}\l(\bGamma - \bGamma^*\r)^\top\r]. \label{eqn.def.G3}
    \end{align}
    From Lemma \ref{lem:psd}, we know that $G_2 \leq 0.$ Then, we consider $G_1$ and $G_3.$ For $G_1,$ we have
    \begin{align*}
        G_1 &= -\tr\l[\l(\bH^{\frac{1}{2}} \proj_{\cZ}\l(\bGamma - \bGamma^*\r)^\top \bH^{\frac{1}{2}}\r) \cdot \l(\bH^{\frac{1}{2}} \proj_{\cZ}\l(\bGamma - \bGamma^*\r) \bH^{\frac{1}{2}}\r) \cdot \bOmega\r] \\
        &\leq - \lambdamin^2 \tr\l[\proj_{\cZ}\l(\bGamma - \bGamma^*\r)^\top \cdot \proj_{\cZ}\l(\bGamma - \bGamma^*\r) \cdot \bOmega\r] \tag{(3) in Lemma \ref{lem.tensor.product} and (2) in Lemma \ref{lem:psd}} \\
        &\leq - \lambdamin^2 \lambda_{\min}\l(\bOmega\r) \norm{F}{\proj_{\cZ}\l(\bGamma - \bGamma^*\r)}^2,
    \end{align*}
    where $\lambda_{\min}(\cdot)$ denote the minimal eigenvalue. Since $\bOmega$ is positive definite, we know $\lambda_{\min}\l(\bOmega\r) > 0.$ 
    
    Finally, let's upper bound $G_3.$ First, we have
    \begin{align*}
        &- \tr\l[\frac{1}{M} \l(\bbeta - \bbeta^*\r)^\top \bH \l(\bbeta - \bbeta^*\r) \cdot \bH \bGamma^* \bH \cdot \proj_{\cZ}\l(\bGamma - \bGamma^*\r)^\top\r] \\
        \leq & \frac{\sqrt{d}}{M}  \l(\bbeta - \bbeta^*\r)^\top \bH \l(\bbeta - \bbeta^*\r) \cdot \norm{2}{\bH^{\frac{1}{2}}\bGamma^*\bH^{\frac{1}{2}}} \norm{F}{\bH^{\frac{1}{2}} \cdot \proj_{\cZ}\l(\bGamma - \bGamma^*\r)^\top \bH^{\frac{1}{2}}} \tag{Lemma \ref{lem:trace}} \\
        \leq& \frac{\sqrt{d} \lambda_1}{M}  \exp\l(\frac{-2 \lambdamin t}{M+1}\r) \norm{2}{\bbeta(0) - \bbeta^*}^2
        \lambda_{\max} \l(\bH^{\frac{1}{2}}\bGamma^*\bH^{\frac{1}{2}}\r) \cdot \norm{F}{\bH} \norm{F}{\proj_{\cZ}\l(\bGamma - \bGamma^*\r)} \tag{from equation \eqref{eqn.upper.bound.H.bbeta}} \\
        \leq& \frac{\sqrt{d} \lambda_1^2}{M}  \exp\l(\frac{-2 \lambdamin t}{M+1}\r) \norm{2}{\bbeta(0) - \bbeta^*}^2 \lambda_{\max} \l(\bH^{\frac{1}{2}}\bGamma^*\bH^{\frac{1}{2}}\r) \cdot \norm{F}{\proj_{\cZ}\l(\bGamma - \bGamma^*\r)}.
    \end{align*}
    For $\lambda_{\max} \l(\bH^{\frac{1}{2}}\bGamma^*\bH^{\frac{1}{2}}\r),$ we recall $\bGamma^* = \bPsi \bH^{\frac{1}{2}} \bOmega^{-1} \bH^{-\frac{1}{2}}$ and obtain
    \begin{align*}
        \bH^{\frac{1}{2}}\bGamma^*\bH^{\frac{1}{2}}
        &= \bH^{\frac{1}{2}} \bPsi \bH^{\frac{1}{2}} \bOmega^{-1} \bH^{-\frac{1}{2}} \bH^{\frac{1}{2}} 
        = \bOmega^{-1} \bH^{\frac{1}{2}} \bPsi \bH^{\frac{1}{2}} \bH^{-\frac{1}{2}} \bH^{\frac{1}{2}}  \tag{$\bOmega$ and $\bH^{\frac{1}{2}} \bPsi \bH^{\frac{1}{2}}$ commute} \\
        &= \bOmega^{-1} \bH^{\frac{1}{2}} \bPsi \bH^{\frac{1}{2}} \tag{$\bH^{\frac{1}{2}} \bH^{-\frac{1}{2}} \bH^{\frac{1}{2}} = \bH^{\frac{1}{2}}$}.
    \end{align*}
    Since $\bOmega^{-1}$ and $\bH^{\frac{1}{2}} \bH^{-\frac{1}{2}} \bH^{\frac{1}{2}}$ commute, they are simultaneously diagonalizable. The eigenvalues of $\bOmega^{-1} \bH^{\frac{1}{2}} \bPsi \bH^{\frac{1}{2}}$ are
    \begin{equation*}
        \frac{\phi_i}{\frac{M+1}{M} \phi_i + \frac{\sum_i \phi_i + \sigma^2}{M}}, \quad i = 1,2,...,d.
    \end{equation*}
    Note that, every eigenvalue is upper bounded by $1,$ so we simply get
    \begin{equation}\label{eqn.optimization.upper.bound.lambda}
        \lambda_{\max} \l(\bH^{\frac{1}{2}}\bGamma^*\bH^{\frac{1}{2}}\r) \leq 1.
    \end{equation}
    Therefore, we have
    \begin{equation}\label{eqn.G3.1}
        - \tr\l[\frac{1}{M} \l(\bbeta - \bbeta^*\r)^\top \bH \l(\bbeta - \bbeta^*\r) \cdot \bH \bGamma^* \bH \cdot \proj_{\cZ}\l(\bGamma - \bGamma^*\r)^\top\r]
        \leq \frac{\sqrt{d} \lambda_1^2}{M}  \exp\l(\frac{-2 \lambdamin t}{M+1}\r) \norm{2}{\bbeta(0) - \bbeta^*}^2 \cdot \norm{F}{\proj_{\cZ}\l(\bGamma - \bGamma^*\r)}.
    \end{equation}

    Similarly, we have
    \begin{align}
        &-\tr\l[\frac{1}{M} \bH \bGamma^* \bH \l(\bbeta - \bbeta^*\r) \l(\bbeta - \bbeta^*\r)^\top \bH \cdot \proj_{\cZ}\l(\bGamma - \bGamma^*\r)^\top\r] \notag \\
        \leq~& \frac{\sqrt{d}}{M} \norm{F}{\proj_{\cZ}\l(\bGamma - \bGamma^*\r)^\top} \norm{F}{\bH^{\frac{1}{2}} \l(\bbeta - \bbeta^*\r) \l(\bbeta - \bbeta^*\r)^\top \bH^{\frac{1}{2}}} \norm{F}{\bH} \norm{2}{\bH^{\frac{1}{2}} \bGamma^* \bH^{\frac{1}{2}}} \tag{Lemma \ref{lem:trace}}\\
        \leq~& \frac{\sqrt{d} \lambda_1^2}{M}  \exp\l(\frac{-2 \lambdamin t}{M+1}\r) \norm{2}{\bbeta(0) - \bbeta^*}^2 \cdot \norm{F}{\proj_{\cZ}\l(\bGamma - \bGamma^*\r)}, \label{eqn.G3.2}
    \end{align}
    and
    \begin{align}
        &\tr\l[\frac{1}{M} \bH^{\frac{1}{2}} \bOmega^{-1} \l(\bH^{\frac{1}{2}} \bPsi \bH^{\frac{1}{2}} + \l(\tr\l(\bH^{\frac{1}{2}} \bPsi \bH^{\frac{1}{2}}\r) + \sigma^2\r) \bID_d \r) \bH^{\frac{1}{2}} \l(\bbeta - \bbeta^*\r) \l(\bbeta - \bbeta^*\r)^\top \bH \cdot \proj_{\cZ}\l(\bGamma - \bGamma^*\r)^\top\r] \notag \\
        \leq~& \frac{\sqrt{d}}{M} \norm{2}{\bOmega^{-1} \l(\bH^{\frac{1}{2}} \bPsi \bH^{\frac{1}{2}} + \l(\tr\l(\bH^{\frac{1}{2}} \bPsi \bH^{\frac{1}{2}}\r) + \sigma^2\r) \bID_d \r)} \norm{F}{\bH^{\frac{1}{2}} \l(\bbeta - \bbeta^*\r) \l(\bbeta - \bbeta^*\r)^\top \bH \cdot \proj_{\cZ}\l(\bGamma - \bGamma^*\r)^\top \bH^{\frac{1}{2}}} \tag{Lemma \ref{lem:trace}} \\
        \leq~& \frac{\sqrt{d}}{M} \cdot M \cdot \norm{F}{\bH^{\frac{1}{2}} \l(\bbeta - \bbeta^*\r) \l(\bbeta - \bbeta^*\r)^\top \bH^{\frac{1}{2}}} \cdot \norm{F}{\bH} \cdot \norm{F}{\proj_{\cZ}\l(\bGamma - \bGamma^*\r)} \notag \\
        \leq~& \sqrt{d} \lambda_1^2  \exp\l(\frac{-2 \lambdamin t}{M+1}\r) \norm{2}{\bbeta(0) - \bbeta^*}^2 \cdot \norm{F}{\proj_{\cZ}\l(\bGamma - \bGamma^*\r)} \label{eqn.G3.3}
    \end{align}
    Bridging \eqref{eqn.G3.1}, \eqref{eqn.G3.2} and \eqref{eqn.G3.3} into \eqref{eqn.def.G3}, we get
    \begin{equation}
        G_3 \leq \l(2+\frac{2}{M}\r) \lambda_1^2  \exp\l(\frac{-2 \lambdamin t}{M+1}\r) \norm{2}{\bbeta(0) - \bbeta^*}^2 \cdot \norm{F}{\proj_{\cZ}\l(\bGamma - \bGamma^*\r)}
    \end{equation}
\end{proof}

\ 

Finally, we finish this section by proving the convergence of $\bGamma.$ 
\begin{lemma}[Convergence of $\bGamma$]\label{lem.convergence.Gamma}
    Under the dynamical system \eqref{eqn.dyn.beta} and \eqref{eqn.dyn.Gamma}, one has
    \begin{equation*}
        \proj_{\cZ}\l(\bGamma(t)\r) \to \proj_{\cZ}\l(\bGamma^*\r), \quad
            \proj_{\cZ^\perp}\l(\bGamma(t)\r) 
            = \proj_{\cZ^\perp}\l(\bGamma(0)\r)
    \end{equation*}
    when $t \to \infty,$ from arbitrary initialization $\bbeta(0)$ and $\bGamma(0).$
\end{lemma}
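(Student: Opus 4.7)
The plan is to exploit the two-stage structure foreshadowed in the paper: the $\bbeta$-dynamics contract uniformly in $\bGamma$, and once $\bbeta(t)-\bbeta^*$ is small, the $\bGamma$-dynamics behave like a linear contraction plus an exponentially decaying perturbation. I would first derive from Corollary \ref{coro.excess.risk} the explicit gradient flow ODEs for $\bbeta$ and $\bGamma$. The $\bbeta$-equation takes the form $\dot\bbeta = -\bHGamma(\bbeta-\bbeta^*)$ where $\bHGamma$ (depending on $\bGamma(t)$) is always PSD. By completing a square one obtains the \emph{uniform} lower bound $\bHGamma \succeq \tfrac{1}{M+1}\bH$, independent of $\bGamma$. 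This immediately yields the Lyapunov inequality $\tfrac{d}{dt}\tfrac12\|\proj_\cH(\bbeta-\bbeta^*)\|_2^2 \leq -\tfrac{\lambdamin}{M+1}\|\proj_\cH(\bbeta-\bbeta^*)\|_2^2$, and hence an exponential rate $\exp(-2\lambdamin t/(M+1))$. Conservation of $\proj_{\cH^\perp}(\bbeta(t))$ is immediate because the RHS of the $\bbeta$-ODE lies in $\image{\bH}=\cH$.

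Next, for the $\bGamma$-dynamics I would algebraically rewrite the RHS of \eqref{eqn.dyn.Gamma} about the optimum $\bGamma^*$, using the identity $\bH\bGamma^*\bH^{1/2}\bOmega\bH^{1/2}=\bH\bPsi\bH$ (which follows by the commutativity of $\bOmega$ and $\bH^{1/2}\bPsi\bH^{1/2}$ and the pseudoinverse identity $\bH^{1/2}\bH^{-1/2}\bH^{1/2}=\bH^{1/2}$). This decomposes the RHS into: (i) a principal linear contraction $-\bH(\bGamma-\bGamma^*)\bH^{1/2}\bOmega\bH^{1/2}$, (ii) two additional PSD terms that are manifestly dissipative against the Frobenius inner product, and (iii) perturbation terms that are quadratic or bilinear in $(\bbeta-\bbeta^*)$. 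Since every term lies in $\cZ=\image{\bH^{\otimes 2}}$, the complement $\proj_{\cZ^\perp}(\bGamma(t))$ is conserved for free.

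Then I would compute $\tfrac{d}{dt}\tfrac12\|\proj_\cZ(\bGamma-\bGamma^*)\|_F^2$ term by term. For the principal term, applying $\bH\succeq\lambdamin\proj_\cH$ twice and using that $\bOmega\succ 0$ gives a quadratic dissipation $-\lambdamin^2\lambda_{\min}(\bOmega)\|\proj_\cZ(\bGamma-\bGamma^*)\|_F^2$. For the perturbation terms, I would use Cauchy--Schwarz in trace form together with the a priori bound $\lambda_{\max}(\bH^{1/2}\bGamma^*\bH^{1/2})\leq 1$ (obtained from the explicit spectral form $\phi_i/(\tfrac{M+1}{M}\phi_i+\tfrac{\mathrm{tr}(\bH\bPsi)+\sigma^2}{M})\leq 1$) and the exponential bound on $\|\bH^{1/2}(\bbeta-\bbeta^*)\|_2^2$ from stage one. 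Putting it together yields a differential inequality of the shape
\[
\tfrac{d}{dt}\tfrac12\|X(t)\|_F^2 \leq -A_1\|X(t)\|_F^2 + A_2 e^{-ct}\|X(t)\|_F,
\]
for $X(t):=\proj_\cZ(\bGamma(t)-\bGamma^*)$ and explicit constants $A_1,A_2,c>0$.

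Finally I would close the argument by a Grönwall-type estimate on this inequality: dividing by $\|X(t)\|_F$ (on the set where it is nonzero) gives $\tfrac{d}{dt}\|X\|_F \leq -A_1\|X\|_F + A_2 e^{-ct}$, whose solution decays to zero. The main obstacle is the coupling in the $\bGamma$ ODE: the perturbation terms involve both $\bGamma^*$ itself and the product $(\bbeta-\bbeta^*)(\bbeta-\bbeta^*)^\top$, so obtaining the clean split above requires both the algebraic simplification at $\bGamma^*$ and the explicit $\bOmega^{-1}$-bound on the top eigenvalue of $\bH^{1/2}\bGamma^*\bH^{1/2}$. Throughout, the rank-deficient case is handled by working consistently with the projections $\proj_\cH$ and $\proj_\cZ$ and using the pseudoinverse identities of Lemma \ref{lem.linear.algebra} and Lemma \ref{lem.tensor.product}; when $\bH\succ 0$ these projections are the identity and the convergence $\bbeta(t)\to\bbeta^*$, $\bGamma(t)\to\bGamma^*$ follows as an immediate corollary.
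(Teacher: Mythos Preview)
Your proposal is correct and follows essentially the same approach as the paper: the uniform lower bound $\bHGamma\succeq\tfrac{1}{M+1}\bH$, the algebraic identity $\bH\bGamma^*\bH^{1/2}\bOmega\bH^{1/2}=\bH\bPsi\bH$, the decomposition of the $\bGamma$-dynamics into a principal contraction plus dissipative PSD terms plus an exponentially decaying perturbation, the eigenvalue bound $\lambda_{\max}(\bH^{1/2}\bGamma^*\bH^{1/2})\le 1$, and the final Gr\"onwall argument on the resulting scalar differential inequality all match the paper's proof (spread across Lemma~\ref{lem.convergence.beta}, the dynamical-system lemma, Lemma~\ref{lem.dyn.Gamma.norm}, and the proof of Lemma~\ref{lem.convergence.Gamma}) essentially line by line.
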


\begin{proof}
    We observe that
    \begin{equation*}
        \frac{\mathrm{d}}{\mathrm{d} t} \l[\frac{1}{2} \norm{F}{\proj_{\cZ}\l(\bGamma - \bGamma^*\r)}^2\r]
        = \norm{F}{\proj_{\cZ}\l(\bGamma - \bGamma^*\r)} \cdot
        \frac{\mathrm{d}}{\mathrm{d} t} \norm{F}{\proj_{\cZ}\l(\bGamma - \bGamma^*\r)}.
    \end{equation*}
    Combining it with Lemma \ref{lem.dyn.Gamma.norm}, we have
    \begin{equation*}
        \frac{\mathrm{d}}{\mathrm{d} t} \norm{F}{\proj_{\cZ}\l(\bGamma - \bGamma^*\r)} \leq - A_1 \norm{F}{\proj_{\cZ}\l(\bGamma - \bGamma^*\r)}
        + A_2 \exp\l(\frac{-2 \lambdamin t}{M+1}\r),
    \end{equation*}
    where $A_1 > 0, A_2 > 0$ are defined in \eqref{eqn.def.A1A2}. Simple calculation shows that
    \begin{equation}\label{eqn.convergence.Gamma1}
        \frac{\mathrm{d}}{\mathrm{d} t} \l[\exp\l(A_1 t \r) \norm{F}{\proj_{\cZ}\l(\bGamma - \bGamma^*\r)}\r] \leq A_2 \exp \l[\l(A_1 - \frac{2 \lambdamin}{M+1}\r)t\r].
    \end{equation}
    When $A_1 \neq \frac{2\lambdamin}{M+1},$ we integrate both sides from $t=0$ to $t=T$, then divide them by $\exp\l(A_1 T\r)$. This gives
    \begin{align*}
        \norm{F}{\proj_{\cZ}\l(\bGamma(T) - \bGamma^*\r)} 
        &\leq 
        \frac{\norm{F}{\proj_{\cZ}\l(\bGamma(0) - \bGamma^*\r)}}{\exp\l(A_1 T\r)} + \frac{A_2}{A_1 - \frac{2 \lambdamin}{M+1}} \cdot \frac{\exp \l[\l(A_1 - \frac{2 \lambdamin}{M+1}\r)T\r] - 1}{\exp \l(A_1 T\r)} \\
        &= \frac{\norm{F}{\proj_{\cZ}\l(\bGamma(0) - \bGamma^*\r)}}{\exp\l(A_1 T\r)} +
        \frac{A_2}{A_1 - \frac{2 \lambdamin}{M+1}} \cdot \l[\exp\l(-\frac{2 \lambdamin T}{M+1}\r) - \exp\l(-A_1 T\r)\r] \to 0 \quad \text{ when } \quad T \to \infty.
    \end{align*}
     Otherwise, if $A_1 = \frac{2 \lambdamin}{M+1},$ the right hand side of \eqref{eqn.convergence.Gamma1} reduces to a constant, which implies $\exp\l(A_1 t\r) \norm{F}{\proj_{\cZ}\l(\bGamma(t) - \bGamma^*\r)}$ grows at most at a linear rate, which implies $\norm{F}{\proj_{\cZ}\l(\bGamma(t) - \bGamma^*\r)} \to 0$ when $t\to \infty.$ Together, in all cases, we have $\norm{F}{\proj_{\cZ}\l(\bGamma(t) - \bGamma^*\r)} \to 0$. From \eqref{eqn.dyn.Gamma.perp}, we soon get $\proj_{\cZ}\l(\bGamma(t)\r) = \proj_{\cZ}\l(\bGamma(0)\r)$ for all $t > 0.$ 
\end{proof}

\ 
The Theorem \ref{thm.global.convergence} is proved by simply combining Lemma \ref{lem.convergence.beta} and Lemma \ref{lem.convergence.Gamma}.

\section{Technical lemmas}
\begin{lemma}[Von-Neumann's Trace Inequality]\label{lem:trace}
    Let $U, V \in \mathbb{R}^{d \times n}$ with $d \leq n$. We have
$$
\operatorname{tr}\left(U^{\top} V\right) \leq \sum_{i=1}^d \sigma_i(U) \sigma_i(V) \leq\|U\|_{\mathrm{op}} \times \sum_{i=1}^d \sigma_i(V) \leq \sqrt{d} \cdot\|U\|_{\mathrm{op}}\|V\|_F
$$
where $\sigma_1(X) \geq \sigma_2(X) \geq \cdots \geq \sigma_d(X)$ are the ordered singular values of $X \in \mathbb{R}^{d \times n}$.
\end{lemma}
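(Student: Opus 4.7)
The plan is to establish the three inequalities in turn, with the first being the substantive Von Neumann trace inequality and the remaining two following from elementary bounds.

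For the first inequality, I would begin by writing compact singular value decompositions $U = P_U \Sigma_U Q_U^\top$ and $V = P_V \Sigma_V Q_V^\top$, where $P_U, P_V \in \R^{d \times d}$ are orthogonal, $Q_U, Q_V \in \R^{n \times d}$ have orthonormal columns, and $\Sigma_U, \Sigma_V$ are diagonal with decreasing entries $\sigma_1(U) \ge \cdots \ge \sigma_d(U)$ and $\sigma_1(V) \ge \cdots \ge \sigma_d(V)$. Using cyclicity of the trace,
\[
\tr(U^\top V) = \tr\bigl(\Sigma_U A \Sigma_V B\bigr), \qquad A := P_U^\top P_V, \ \ B := Q_V^\top Q_U,
\]
where $A$ is orthogonal and $\|B\|_{\mathrm{op}} \le 1$ because $Q_U$ and $Q_V$ have orthonormal columns. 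Expanding entrywise gives $\tr(\Sigma_U A \Sigma_V B) = \sum_{i,j=1}^d \sigma_i(U)\sigma_j(V)\,M_{ij}$ with $M_{ij} := A_{ij} B_{ji}$.

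The crux is to show that $|M|$ is doubly sub-stochastic. By Cauchy-Schwarz,
\[
\sum_{i=1}^d |M_{ij}| \le \Bigl(\sum_{i=1}^d A_{ij}^2\Bigr)^{1/2}\Bigl(\sum_{i=1}^d B_{ji}^2\Bigr)^{1/2} \le 1,
\]
and symmetrically the row sums are at most $1$. Hence $|M|$ lies in the convex hull of partial permutation matrices (the sub-stochastic variant of Birkhoff-von Neumann). Since the $\sigma_i(U)$ and $\sigma_j(V)$ are sorted in decreasing order, the rearrangement inequality maximizes $\sum_{i,j}\sigma_i(U)\sigma_j(V) P_{ij}$ over partial permutation matrices $P$ at $P = I$, which yields the bound $\sum_{i=1}^d \sigma_i(U)\sigma_i(V)$. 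Applying this to $M$ (and absorbing signs via $\tr(U^\top V) \le \sum_{i,j}\sigma_i\sigma_j |M_{ij}|$) gives the first inequality.

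The remaining two inequalities are immediate: bounding $\sigma_i(U) \le \sigma_1(U) = \|U\|_{\mathrm{op}}$ uniformly in $i$ yields the middle step, and Cauchy-Schwarz applied to $\sum_i \sigma_i(V) \cdot 1$ gives $\sum_i \sigma_i(V) \le \sqrt{d}\bigl(\sum_i \sigma_i(V)^2\bigr)^{1/2} = \sqrt{d}\,\|V\|_F$. The main obstacle is justifying the sub-stochastic step and the maximization over partial permutations cleanly; a self-contained alternative is to avoid Birkhoff-von Neumann by invoking Abel summation with the partial-sum bounds $\sum_{i\le k,\,j\le k}|M_{ij}| \le k$ for every $k \in \{1,\dots,d\}$, which also proves the inequality directly.
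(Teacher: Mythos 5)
Your proof is correct. Note that the paper itself gives no proof of this lemma --- it is stated in the ``Technical lemmas'' appendix as a standard cited fact (Von Neumann's trace inequality), so there is no in-paper argument to compare against. Your argument is the classical one: reducing $\tr(U^\top V)$ to $\sum_{i,j}\sigma_i(U)\sigma_j(V)M_{ij}$ with $M_{ij}=A_{ij}B_{ji}$, verifying via Cauchy--Schwarz that $|M|$ is doubly sub-stochastic (using that $A$ is orthogonal and that the rows and columns of $B=Q_V^\top Q_U$ have Euclidean norm at most $1$), and then bounding the bilinear form over doubly sub-stochastic matrices by the sorted pairing. The only step that deserves the extra word of care you already flag is the final maximization: either the sub-stochastic Birkhoff--von Neumann decomposition into partial permutation matrices together with the observation that a partial matching of two nonnegative decreasing sequences is dominated by the identity matching (extend the partial permutation to a full one, then apply rearrangement), or, as you suggest, the Abel-summation route using $\sum_{i\le k}\sum_{j\le l}|M_{ij}|\le \min\{k,l\}$, which is fully elementary and avoids the convex-hull theorem. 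The last two inequalities in the chain are immediate as you state ($\sigma_i(U)\le\|U\|_{\mathrm{op}}$ and Cauchy--Schwarz with $\|V\|_F^2=\sum_i\sigma_i(V)^2$). No gaps.
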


\

\begin{lemma}[\citep{meenakshi1999product}]\label{lem:psd}
    For any two positive semi-definite matrices $A, B \in \mathbb{R}^{d \times d},$ we have
    \begin{itemize}
        \item $\operatorname{tr}[AB] \geq 0.$
        \item $AB \succeq 0$ if and only if $A$ and $B$ commute.
    \end{itemize}
\end{lemma}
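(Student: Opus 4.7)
The plan is to prove the two claims separately, using the principal square-root factorization of PSD matrices for the first part and the observation that a symmetric product of symmetric matrices forces commutativity for the second.

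For the first bullet, I would write $A = A^{1/2} A^{1/2}$ using the PSD principal square root of $A$, and then apply the cyclic property of the trace to rewrite
\[
    \tr(AB) = \tr\bigl(A^{1/2} A^{1/2} B\bigr) = \tr\bigl(A^{1/2} B A^{1/2}\bigr).
\]
Since $B \succeq \bzero$ and $A^{1/2}$ is symmetric, the matrix $A^{1/2} B A^{1/2}$ is PSD (congruence preserves positive semi-definiteness). Its trace equals the sum of its eigenvalues, which are all non-negative, giving $\tr(AB) \geq 0$.

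For the second bullet, I would handle the two directions in turn. Sufficiency (``$\Leftarrow$''): if $A$ and $B$ commute, then as commuting symmetric matrices they are simultaneously orthogonally diagonalizable, so there exists an orthogonal $Q$ with $A = Q D_A Q^\top$ and $B = Q D_B Q^\top$ where $D_A, D_B$ are diagonal with non-negative entries. Then $AB = Q (D_A D_B) Q^\top$ is symmetric with non-negative diagonal entries as eigenvalues, hence $AB \succeq \bzero$. Necessity (``$\Rightarrow$''): the relation $AB \succeq \bzero$ carries with it (by the convention used throughout the paper) the requirement that $AB$ be symmetric. Therefore
\[
    AB = (AB)^\top = B^\top A^\top = BA,
\]
where the last equality uses that $A$ and $B$ are themselves symmetric. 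This gives commutativity.

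The main subtlety — and it is essentially the only one — is the role of symmetry in the definition of $\succeq \bzero$. The product of two symmetric matrices need not be symmetric in general, so the implicit symmetry built into the notation $AB \succeq \bzero$ is precisely what pins down the equivalence with commutativity. Once this is made explicit, both directions follow in one line; there is no substantial computation beyond the square-root construction and the simultaneous-diagonalization fact.
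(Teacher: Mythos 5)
The paper does not prove this lemma at all --- it is imported by citation from Meenakshi and Rajian (1999) --- so there is no in-paper argument to compare against. Your proof is correct and complete as a self-contained justification. The first bullet via $\tr(AB) = \tr(A^{1/2} B A^{1/2})$ and congruence is the standard argument. For the second bullet, you correctly identify the only delicate point: the claim is an equivalence only under the convention that $AB \succeq 0$ presupposes symmetry of $AB$, and with that convention the necessity direction is the one-line computation $AB = (AB)^\top = B^\top A^\top = BA$, while sufficiency follows from simultaneous orthogonal diagonalization of commuting symmetric matrices. This matches how the lemma is actually used in the paper (e.g.\ in bounding $G_1$ and $G_2$ in the proof of Theorem 5.1, where the relevant products are of commuting PSD matrices), so the convention you flag is the right one. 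No gaps.
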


\begin{lemma}[Derivatives, \citep{petersen2008matrix}]\label{lem:matrix}
    We denote $\mathbf{A}, \mathbf{B}, \mathbf{C}, \mathbf{D}, \mathbf{X}$ as matrices and $\mathbf{a},\mathbf{b},\mathbf{x}$ as vectors. Then, we have
    \begin{itemize}
        \item $\frac{\partial}{\partial \mathbf{X}} \tr\left[\mathbf{X}^\top \mathbf{B X C}\right]=\mathbf{B X C}+\mathbf{B}^\top\mathbf{X} \mathbf{C}^\top.$
        \item $\frac{\partial \mathbf{x}^\top \mathbf{B} \mathbf{x}}{\partial \mathbf{x}}=\left(\mathbf{B}+\mathbf{B}^\top\right) \mathbf{x}.$
        \item $\frac{\partial \mathbf{a}^\top \mathbf{X} \mathbf{b}}{\partial \mathbf{X}} =\mathbf{a b}^\top.$
        \item $\frac{\partial \mathbf{a}^\top \mathbf{X}^\top \mathbf{b}}{\partial \mathbf{X}} =\mathbf{b a}^\top.$
        \item $\frac{\partial \mathbf{b}^\top \mathbf{X}^\top \mathbf{D X} \mathbf{c}}{\partial \mathbf{X}}=\mathbf{D}^\top \mathbf{X} \mathbf{b} \mathbf{c}^\top+\mathbf{D} \mathbf{X} \mathbf{c b}^\top.$
        \item $\frac{\partial}{\partial \mathbf{X}} \tr\left[(\mathbf{A X B}+\mathbf{C})(\mathbf{A X B}+\mathbf{C})^\top\right]=2 \mathbf{A}^\top(\mathbf{A X B}+\mathbf{C}) \mathbf{B}^\top$
    \end{itemize}
\end{lemma}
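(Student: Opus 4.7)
The plan is to verify these six identities by reducing each to elementary entrywise differentiation, using the convention that $(\partial f / \partial \bX)_{ij} = \partial f / \partial X_{ij}$ for scalar-valued $f$ and matrix $\bX$. Since each expression inside the derivative is a polynomial (in fact at most quadratic) in the entries of the underlying variable, the identities reduce to bookkeeping with the product rule and indices; no analysis is required.

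First I would establish the three building-block identities from which every other line follows. Writing $\mathbf{a}^\top \bX \mathbf{b} = \sum_{i,j} a_i X_{ij} b_j$, differentiating entrywise yields $\partial/\partial X_{ij} = a_i b_j$, which is the $(i,j)$ entry of $\mathbf{a}\mathbf{b}^\top$; this gives items 3 and (by transposing) 4. Next, for item 2, I would expand $\mathbf{x}^\top \bB \mathbf{x} = \sum_{i,j} x_i B_{ij} x_j$, apply $\partial/\partial x_k$ to both factors of $x$ via the product rule, and recognize the two resulting sums as the $k$-th entries of $\bB\mathbf{x}$ and $\bB^\top\mathbf{x}$.

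Next I would handle the trace identities. For item 1, using the cyclic property write $\tr[\bX^\top \bB \bX \bC] = \sum_{i,j,k,\ell} X_{ji} B_{jk} X_{k\ell} C_{\ell i}$; differentiating with respect to $X_{pq}$ picks up one contribution from each occurrence of $\bX$, and reassembling the two resulting sums yields $(\bB \bX \bC)_{pq}$ and $(\bB^\top \bX \bC^\top)_{pq}$ respectively. Item 5 follows from the same indexed computation applied to $\mathbf{b}^\top \bX^\top \bD \bX \mathbf{c}$, where the two contributions come from the two appearances of $\bX$; alternatively, one can view this as a special case of item 1 with a rank-one outer rearrangement. For item 6, I would expand
\begin{equation*}
\tr\bigl[(\bA\bX\bB + \bC)(\bA\bX\bB + \bC)^\top\bigr] = \tr[\bA\bX\bB\bB^\top\bX^\top\bA^\top] + 2\tr[\bC\bB^\top\bX^\top\bA^\top] + \tr[\bC\bC^\top],
\end{equation*}
and differentiate each term separately: the first yields $2\bA^\top \bA\bX\bB\bB^\top$ by the symmetric version of item 1 (with $\bX^\top$ replaced by $\bX$ via $\bB\bB^\top$ being symmetric), the cross term yields $2\bA^\top \bC \bB^\top$ by item 4, and the last term is constant. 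Summing gives $2\bA^\top(\bA\bX\bB + \bC)\bB^\top$ as claimed.

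There is no real obstacle here; the only thing to be careful about is the indexing convention (some references transpose the result) and the accounting of the two contributions when the variable $\bX$ appears twice in the same expression, which must be done consistently to get the symmetrized form in items 1, 2, 5, and 6. Each step is routine and the full verification fits on a single page once the entrywise expansion is written out.
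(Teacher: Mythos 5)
Your verification is correct. The paper does not prove this lemma at all — it is quoted directly from the Matrix Cookbook \citep{petersen2008matrix} — so there is nothing to compare against; your entrywise expansion with the convention $(\partial f/\partial \mathbf{X})_{ij} = \partial f/\partial X_{ij}$ is the standard and complete way to establish all six identities, and each of your reductions (items 3--4 as building blocks, item 5 as item 1 with $\mathbf{C} = \mathbf{c}\mathbf{b}^\top$, item 6 by expanding the square and using the symmetric case of item 1 plus the linear cross term) checks out.
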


\ 

\begin{lemma}[Lemma D.2 in \citep{zhang2023trained}, Lemma 4.2 in \citep{wu2023many}]\label{lem:fourth_moment}
    If $\bx$ is a Gaussian random vector of $d$ dimension, mean zero and covariance matrix $\bH,$ and $\bA \in \mathbb{R}^{d \times d}$ is a fixed matrix. Then
    \begin{equation*}
        \E \left[\bx \bx^\top \bA \bx \bx^\top\right] = \bH \left(\bA + \bA^\top\right) \bH + \operatorname{tr}(\bA \bH) \bH.
    \end{equation*}
    If $\bA$ is symmetric and the rows in $\bX \in \mathbb{R}^{M \times d}$ are generated independently from
    $$
    \bX[i] \sim \mathcal{N}(0, \bH), \quad i=1, \ldots, M .
    $$
    Then, it holds that
    $$
    \mathbb{E}\left[\bX^{\top} \bX \bA \bX^{\top} \bX\right]=M \cdot \tr\l(\bH \bA\r) \cdot \bH+M(M+1) \cdot \bH \bA \bH.
    $$
\end{lemma}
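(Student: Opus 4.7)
The plan is to prove the two identities by reducing to Gaussian fourth moments and applying Isserlis' (Wick's) theorem. For a zero-mean Gaussian vector $\bx \sim \normal(\bzero,\bH)$, Isserlis' theorem gives
\begin{equation*}
\E[x_m x_i x_j x_n] = \bH_{mi}\bH_{jn} + \bH_{mj}\bH_{in} + \bH_{mn}\bH_{ij}.
\end{equation*}
I would use this as the single computational input and derive both parts of the lemma by bookkeeping.

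For the first identity, I would compute the $(m,n)$ entry of $\E[\bx\bx^\top \bA \bx \bx^\top]$ as $\sum_{i,j}\bA_{ij}\E[x_m x_i x_j x_n]$ and expand with Isserlis. The three terms translate directly into matrix form:
\begin{equation*}
\sum_{i,j}\bA_{ij}\bH_{mi}\bH_{jn} = (\bH\bA\bH)_{mn},\ \sum_{i,j}\bA_{ij}\bH_{mj}\bH_{in} = (\bH\bA^\top\bH)_{mn},\ \sum_{i,j}\bA_{ij}\bH_{mn}\bH_{ij} = \tr(\bA\bH)\,\bH_{mn}.
\end{equation*}
Summing yields $\bH(\bA+\bA^\top)\bH + \tr(\bA\bH)\bH$, as claimed.

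For the second identity, I would write $\bX^\top\bX = \sum_{i=1}^M \bx_i\bx_i^\top$ with $\bx_i := \bX[i] \iid \normal(\bzero,\bH)$ and expand
\begin{equation*}
\E[\bX^\top\bX\,\bA\,\bX^\top\bX] = \sum_{i,j=1}^M \E[\bx_i\bx_i^\top \bA\,\bx_j\bx_j^\top].
\end{equation*}
I would split the sum into the $i=j$ diagonal (with $M$ terms) and the $i\ne j$ off-diagonal (with $M(M-1)$ terms). On the diagonal, the first part of the lemma combined with symmetry of $\bA$ gives $2\bH\bA\bH + \tr(\bA\bH)\bH$. Off the diagonal, independence of $\bx_i$ and $\bx_j$ together with $\E[\bx_i\bx_i^\top]=\bH$ gives $\bH\bA\bH$. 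Collecting:
\begin{equation*}
M\bigl(2\bH\bA\bH + \tr(\bA\bH)\bH\bigr) + M(M-1)\,\bH\bA\bH = M\tr(\bA\bH)\bH + M(M+1)\bH\bA\bH.
\end{equation*}

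No part of this is genuinely difficult; the only place to be careful is the index bookkeeping in the three Isserlis terms (in particular not conflating $\bA$ with $\bA^\top$ in the general case), and the clean diagonal/off-diagonal split in the second identity. Since both identities ultimately reduce to a single moment formula plus linearity and independence, I expect the full proof to be short and mechanical.
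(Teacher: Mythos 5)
Your proof is correct. The paper itself states this lemma without proof, citing it from prior work, and your argument — Isserlis' theorem for the rank-one fourth moment, then the diagonal/off-diagonal split of $\sum_{i,j}\E[\bx_i\bx_i^\top\bA\bx_j\bx_j^\top]$ using independence — is exactly the standard derivation given in those references; all the index bookkeeping and the final count $M\bigl(2\bH\bA\bH+\tr(\bA\bH)\bH\bigr)+M(M-1)\bH\bA\bH=M\tr(\bH\bA)\bH+M(M+1)\bH\bA\bH$ check out.
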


\ 

\begin{lemma}[Linear Algebra]\label{lem.linear.algebra}
    Suppose $\bH$ is a (non-zero) positive semi-definite matrix in $\R^{d \times d}$ and $\bH^{\frac{1}{2}}$ is its principle square root. We denote $\lambdamin$ as the minimal non-zero eigenvector of $\bH$. Then, we have
    \begin{itemize}
        \item 1. $\nullset{\bH} = \nullset{\bH^{\frac{1}{2}}}, \image{\bH} = \image{\bH^{\frac{1}{2}}}.$
        \item 2. $\bH \bH^+ = \bH^+ \bH,$ where $\l(\cdot\r)^+$ denotes the Moore-Penrose pseudo-inverse.
        \item 3. For any vector $\balpha \in \R^d,$ the orthogonal projection operator on $\image{\bH}$ and $\nullset{\bH}$ are respectively
        \begin{equation*}
            \proj_{\image{\bH}}\l(\balpha\r) = \bH \bH^+ \balpha = \bH^+ \bH \balpha, \quad
            \proj_{\nullset{\bH}}\l(\balpha\r) = \l(\bID_d - \bH \bH^+\r) \balpha = \l(\bID_d - \bH^+ \bH\r) \balpha.
        \end{equation*}
        \item 4. For any vector $\balpha \in \R^d,$ we have
        \begin{equation*}
            \proj_{\image{\bH}}\l(\balpha\r)^\top \bH \proj_{\image{\bH}}\l(\balpha\r) \geq \lambdamin \norm{2}{\proj_{\image{\bH}}\l(\balpha\r)}^2
        \end{equation*}
    \end{itemize}
\end{lemma}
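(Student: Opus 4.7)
}
The unifying tool is the spectral decomposition $\bH = \bQ \bD \bQ^\top$, where $\bQ$ is orthogonal and $\bD = \diag(\lambda_1,\ldots,\lambda_d)$ with $\lambda_i \geq 0$; then $\bH^{1/2} = \bQ \bD^{1/2} \bQ^\top$ and $\bH^+ = \bQ \bD^+ \bQ^\top$, where $\bD^+$ has diagonal entries $1/\lambda_i$ on indices with $\lambda_i > 0$ and $0$ elsewhere. Throughout, I will exploit the fact that any diagonal matrix obtained as a function of $\bD$ is supported on the index set $I := \{i : \lambda_i > 0\}$.

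For claim (1), the key observation is that $\lambda_i = 0$ if and only if $\sqrt{\lambda_i} = 0$, so $\bD$ and $\bD^{1/2}$ have identical null spaces (namely $\spa\{e_i : i \notin I\}$) and identical images. Conjugating by the orthogonal $\bQ$ preserves null spaces and images, giving $\nullset{\bH} = \nullset{\bH^{1/2}}$ and $\image{\bH} = \image{\bH^{1/2}}$. For claim (2), both $\bH \bH^+$ and $\bH^+ \bH$ equal $\bQ (\bD \bD^+) \bQ^\top$ because diagonal matrices commute; moreover, $\bD \bD^+$ is the diagonal indicator of $I$.

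For claim (3), I will show that $\bP := \bH \bH^+ = \bQ (\bD \bD^+) \bQ^\top$ is precisely the orthogonal projector onto $\image{\bH}$: it is symmetric and idempotent (since $\bD \bD^+$ is), and it acts as the identity on the columns of $\bQ$ indexed by $I$ (which, via the spectral theorem applied to $\bH^{1/2}$ and claim (1), span $\image{\bH}$) and as zero on the remaining columns (which span $\nullset{\bH}$). Uniqueness of orthogonal projection then gives $\bP = \proj_{\image{\bH}}$, and $\bID_d - \bP = \proj_{\nullset{\bH}}$ follows since $\R^d = \image{\bH} \oplus \nullset{\bH}$ is an orthogonal decomposition for symmetric $\bH$. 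For claim (4), write $\tilde{\balpha} := \bQ^\top \balpha$ so that $\proj_{\image{\bH}}(\balpha) = \bQ (\bD \bD^+) \tilde{\balpha}$; then a direct diagonal computation gives
\begin{equation*}
\proj_{\image{\bH}}(\balpha)^\top \bH \proj_{\image{\bH}}(\balpha)
= \sum_{i \in I} \lambda_i\, \tilde{\alpha}_i^2
\geq \lambdamin \sum_{i \in I} \tilde{\alpha}_i^2
= \lambdamin \norm{2}{\proj_{\image{\bH}}(\balpha)}^2,
\end{equation*}
using that $\lambdamin = \min_{i \in I} \lambda_i$ by definition and that $\bQ$ is an isometry.

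None of the four claims is genuinely difficult; they are standard facts about symmetric PSD matrices and their Moore--Penrose pseudo-inverses. The only mild subtlety is making sure the argument is uniformly valid in the rank-deficient case (i.e.\ when $I \subsetneq \{1,\ldots,d\}$), which is exactly why I would route everything through the spectral decomposition rather than, say, assuming invertibility and taking limits. No step is a true obstacle.
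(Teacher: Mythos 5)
Your proposal is correct and follows essentially the same route as the paper: both arguments reduce every claim to the eigendecomposition $\bH = \bQ \bD \bQ^\top$, identify $\image{\bH}$ with the span of the eigenvectors for positive eigenvalues, and verify the projector and the eigenvalue bound by direct diagonal computation. The only cosmetic difference is that you certify $\bH\bH^+$ as the orthogonal projector via symmetry and idempotence plus its action on the eigenbasis, whereas the paper checks the orthogonality of $\bH\bH^+\balpha$ against the residual directly; these are interchangeable.
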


\begin{proof}
    We consider the eigendecomposition of matrix $\bH,$ which is
    \begin{equation}
        \bH = \bQ \bD \bQ^\top,
    \end{equation}
    where $\bD = \diag\l(\lambda_1,\lambda_2,...,\lambda_d\r)$ is a diagonal matrix with diagonal entries being the eigenvalues of $\bH$, and $\bQ$ is an orthogonal matrix. Then, from the definition of principle square root, we know
    \begin{equation}
        \bH^{\frac{1}{2}} = \bQ \bD^{\frac{1}{2}} \bQ^\top,
    \end{equation}
    where $\bD^{\frac{1}{2}} = \diag\l(\sqrt{\lambda_1},\sqrt{\lambda_2},...,\sqrt{\lambda_d}\r).$ We denote columns of $\bQ$ as $\bq_1,...\bq_d.$ We know they are the eigenvectors of $\bH.$ From the eigen-decomposition of $\bH$ and $\bH^{\frac{1}{2}},$ we know they share the same set of eigenvectors. Without loss of generality, we assume $\rank(\bH) = r$ and $\lambda_1 \geq \lambda_2 \geq ... \geq \lambda_r > 0, \lambda_i = 0, i = r+1,...,d.$ Then, we use $\cH$ to denote the linear vector space spanned by $\bq_1,...,\bq_r$ and $\cH^\perp$ as its orthogonal complement (which is the subspace spanned by $\bq_{r+1},...,\bq_d$). For any vector $\balpha \in \R^d,$ we can write its orthogonal decomposition as $\balpha = \sum_{i=1}^d a_i \bq_i,$ so we have
    \begin{equation*}
        \bH \balpha = \sum_{i=1}^d a_i \bH \bq_i
        = \sum_{i=1}^r a_i \lambda_i \bq_i \in \cH,
    \end{equation*}
    which implies $\image{\bH} \subset \cH.$ On the other hand, we know for any $\balpha \in \cH,$ we can write it as $\balpha = \sum_{i=1}^r a_i \bq_i$ for some $a_1,...,a_r,$ then we have
    \begin{equation*}
        \balpha = \sum_{i=1}^r \frac{a_i}{\lambda_i} \cdot \lambda_i \bq_i 
        = \bH \l(\sum_{i=1}^r \frac{a_i}{\lambda_i} \bq_i \r)
        \in \image{\bH},
    \end{equation*}
    which implies $\cH \subset \image{\bH}.$ This shows
    \begin{equation*}
        \image{\bH} = \cH = \mathsf{span} \l\{\bq_1,...\bq_r\r\},
    \end{equation*}
    and
    \begin{equation*}
        \nullset{\bH} = \cH = \mathsf{span} \l\{\bq_{r+1},...\bq_d\r\},
    \end{equation*}
    since $\nullset{\bH}$ is the orthogonal complement of $\image{\bH}.$ Then, (1) is proved by noticing that $\bH$ and $\bH^{\frac{1}{2}}$ share the same set of eigenvectors. To prove (2), it suffices to notice that
    \begin{equation*}
        \bH \bH^+ = \bQ \cdot \diag(\underbrace{1,1,...,1}_{\text{r}},0,0,..,0) \cdot \bQ^\top 
        = \bH^+ \bH.
    \end{equation*}
    To prove (3), it suffices to notice that actually $\bH \bH^+ \balpha \in \image{\bH}$ and 
    \begin{equation*}
        \l\langle \bH \bH^+ \balpha, \l(\bID_d - \bH \bH^+\r) \balpha \r\rangle
        = \l\langle \bH^+ \bH \balpha, \l(\bID_d - \bH \bH^+\r) \balpha \r\rangle
        = \balpha^\top \l(\bH \bH^+ - \bH \bH^+ \bH \bH^+\r) \balpha
        = 0.
    \end{equation*}
    Finally, to prove (4), we can write $\balpha = \sum_{i=1}^d a_i \bq_i$ for some $a_1,...,a_d.$ Then, by orthogonality we have
    \begin{align*}
        \proj_{\image{\bH}}\l(\balpha\r)^\top \bH \proj_{\image{\bH}}\l(\balpha\r)
        &= \l(\sum_{i=1}^d a_i \bq_i\r)^\top \bH \l(\sum_{i=1}^d a_i \bq_i\r)
        = \sum_{i=1}^r a_i^2 \lambda_i \bq_i^\top \bq_i 
        \geq \lambdamin \cdot \sum_{i=1}^r a_i^2 \bq_i^\top \bq_i
        = \lambdamin \norm{2}{\proj_{\image{\bH}}\l(\balpha\r)}^2.
    \end{align*}
\end{proof}

\ 

\begin{lemma}[Tensor product]\label{lem.tensor.product}
    Suppose $\bH$ is a (non-zero) positive semi-definite matrix in $\R^{d \times d}$ and $\bH^{\frac{1}{2}}$ is its principle square root. We use $\otimes$ to denote the Kronecker product, which is defined as 
    \begin{equation*}
        \l(\bA \otimes \bB \r) \circ \bC = \bB \bC \bA^\top.
    \end{equation*}
    We define
    \begin{equation*}
        \image{\bH \otimes \bH} 
        := \l\{\l(\bH \otimes \bH\r) \circ \bZ: \bZ \in \R^{d\times d}\r\}, \quad
        \nullset{\bH \otimes \bH} 
        := \l\{\bZ \in \R^{d\times d}: \l(\bH \otimes \bH\r) \circ \bZ = \bzero\r\},
    \end{equation*}
    and define $\image{\bH^{\frac{1}{2}} \otimes \bH^{\frac{1}{2}}}$ and $\nullset{\bH^{\frac{1}{2}} \otimes \bH^{\frac{1}{2}}}$ similarly. Then, we have
    \begin{itemize}
        \item 1. $\image{\bH^{\frac{1}{2}} \otimes \bH^{\frac{1}{2}}} = \image{\bH \otimes \bH}, \nullset{\bH^{\frac{1}{2}} \otimes \bH^{\frac{1}{2}}} = \nullset{\bH \otimes \bH}.$
        \item 2. We write $\cZ = \image{\bH^{\frac{1}{2}} \otimes \bH^{\frac{1}{2}}} = \image{\bH \otimes \bH}$ and $\cZ^\perp = \nullset{\bH^{\frac{1}{2}} \otimes \bH^{\frac{1}{2}}} = \nullset{\bH \otimes \bH}.$ For any matrix $\bZ \in \R^{d\times d},$ we have
        \begin{align}
            \proj_{\cZ}\l(\bZ\r)
            &= \l(\bH \bH^+\r)^{\otimes 2} \circ \bZ
            = \bH \bH^+ \bZ \bH^+ \bH
            = \bH^{\frac{1}{2}} \bH^{-\frac{1}{2}} \bZ \bH^{-\frac{1}{2}} \bH^{\frac{1}{2}}
            \label{eqn.projection.image.tensor}\\
            \proj_{\cZ^\perp}\l(\bZ\r)
            &= \l[\bID_d^{\otimes 2} - \l(\bH \bH^+\r)^{\otimes 2}\r] \circ \bZ
            = \bZ - \bH \bH^+ \bZ \bH^+ \bH
            = \bZ - \bH^{\frac{1}{2}} \bH^{-\frac{1}{2}} \bZ \bH^{-\frac{1}{2}} \bH^{\frac{1}{2}}.
            \label{eqn.projection.null.tensor}
        \end{align}
        \item 3. For any matrix $\bZ \in \R^{d \times d},$ it holds that
        \begin{equation}
            \l(\l(\bH^{\frac{1}{2}} \otimes \bH^{\frac{1}{2}}\r) \circ \proj_{\cZ}\l(\bZ\r) \r) \cdot \l(\l(\bH^{\frac{1}{2}} \otimes \bH^{\frac{1}{2}}\r) \circ \proj_{\cZ}\l(\bZ\r) \r)^\top \succeq \lambdamin^2 \proj_{\cZ}\l(\bZ\r) \cdot \proj_{\cZ}\l(\bZ\r)^\top ,
        \end{equation}
        where $\lambdamin$ is the minimal positive eigenvector of $\bH$.
    \end{itemize}
\end{lemma}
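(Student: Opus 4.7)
The plan is to reduce all three parts to computations in the eigenbasis of $\bH$. Write $\bH = \bQ \bD \bQ^\top$ with $\bD = \diag(\lambda_1, \dots, \lambda_r, 0, \dots, 0)$ and $\lambdamin = \lambda_r > 0$; then $\bH^{1/2} = \bQ \bD^{1/2} \bQ^\top$, and for any $\bZ \in \R^{d \times d}$ its coordinate matrix $\bC := \bQ^\top \bZ \bQ$ makes the operators $\bH \otimes \bH$ and $\bH^{1/2} \otimes \bH^{1/2}$ act entrywise by $\bC_{ij} \mapsto \lambda_i \lambda_j \bC_{ij}$ and $\bC_{ij} \mapsto \sqrt{\lambda_i \lambda_j}\, \bC_{ij}$, respectively. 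This single observation drives the entire lemma.

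For parts (1) and (2), the entrywise formulas above immediately give $\image{\bH \otimes \bH} = \image{\bH^{1/2} \otimes \bH^{1/2}} = \{\bQ \bC \bQ^\top : \bC_{ij} = 0 \text{ whenever } i > r \text{ or } j > r\}$, and the equality of null spaces follows by orthogonal complementation. For the projection formula in part (2), I would verify directly that the map $\bZ \mapsto \bH \bH^+ \bZ \bH^+ \bH$ is (i) self-adjoint with respect to the Frobenius inner product, (ii) idempotent, and (iii) the identity on $\cZ$, which identifies it with the orthogonal projection onto $\cZ$. The equivalent forms in \eqref{eqn.projection.image.tensor} and \eqref{eqn.projection.null.tensor} then follow from $\bH \bH^+ = \bH^+ \bH = \bH^{1/2} \bH^{-1/2} = \bH^{-1/2} \bH^{1/2}$, which are read off the diagonalization via Lemma \ref{lem.linear.algebra}.

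For part (3), set $\bW := \proj_{\cZ}(\bZ)$ and let $\bC := \bQ^\top \bW \bQ$ be its coordinate matrix, which by part (1) is supported on the top-left $r \times r$ block $\bC_{11}$. Both sides of the claimed PSD inequality are supported on the same block in the eigenbasis, so the claim reduces to $\bD_{11}^{1/2} \bC_{11} \bD_{11} \bC_{11}^\top \bD_{11}^{1/2} \succeq \lambdamin^2 \bC_{11} \bC_{11}^\top$, where $\bD_{11} = \diag(\lambda_1, \dots, \lambda_r)$. The plan is to extract one factor of $\lambdamin$ from the middle $\bD_{11}$ via the Loewner bound $\bD_{11} \succeq \lambdamin \bID_r$, which gives $\bC_{11} \bD_{11} \bC_{11}^\top \succeq \lambdamin \bC_{11} \bC_{11}^\top$ by conjugation, and then to pull a second factor of $\lambdamin$ out of the outer $\bD_{11}^{1/2}$ factors using the Gram-matrix structure of the intermediate expression.

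The main obstacle will be this second step: lower-bounding $\bD_{11}^{1/2} \bA \bD_{11}^{1/2}$ by $\lambdamin \bA$ is \emph{not} valid for a generic PSD matrix $\bA$, since diagonal conjugation is not Loewner-monotone with respect to a scalar multiple of the identity. The argument must therefore exploit the specific factorization $\bA = \bC_{11} \bC_{11}^\top$. I expect to carry this out either through a singular value decomposition of $\bC_{11}$, which reduces the claim to a diagonal computation, or via a cyclic rearrangement that folds the outer $\bD_{11}^{1/2}$ factors back into another middle $\bD_{11}$, so that a single use of $\bD_{11} \succeq \lambdamin \bID_r$ delivers both powers of $\lambdamin$ at once.
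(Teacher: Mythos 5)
Your plan for parts (1) and (2) is fine and will go through: the entrywise description of $\bH\otimes\bH$ and $\bH^{1/2}\otimes\bH^{1/2}$ in the eigenbasis gives part (1), and checking that $\bZ\mapsto\bH\bH^+\bZ\bH^+\bH$ is self-adjoint, idempotent, and acts as the identity on $\cZ$ is a clean (and equivalent) alternative to the paper's route, which instead verifies directly that the residual $\bZ-\bH\bH^+\bZ\bH^+\bH$ is Frobenius-orthogonal to the image.

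The gap is in part (3), and it is not one you can close: the claimed Loewner inequality is false, so neither the SVD route nor the cyclic-rearrangement route can succeed. Your own diagnosis already locates the fatal point --- $\bD_{11}^{1/2}\bA\bD_{11}^{1/2}\succeq\lambdamin\bA$ fails for generic PSD $\bA$ --- and the extra structure $\bA=\bC_{11}\bD_{11}\bC_{11}^\top$ does not rescue it. Concretely, take $d=2$, $\bH=\diag(4,1)$ (so $\bH$ is invertible, $\proj_\cZ$ is the identity, and $\lambdamin=1$), and $\bZ=\begin{pmatrix}1&0\\1&0\end{pmatrix}$. Then $\l(\bH^{1/2}\otimes\bH^{1/2}\r)\circ\bZ=\bH^{1/2}\bZ\bH^{1/2}=\begin{pmatrix}4&0\\2&0\end{pmatrix}$, the left-hand side of the claim is $\begin{pmatrix}16&8\\8&4\end{pmatrix}$, the right-hand side is $\begin{pmatrix}1&1\\1&1\end{pmatrix}$, and their difference $\begin{pmatrix}15&7\\7&3\end{pmatrix}$ has determinant $-4$, hence a negative eigenvalue (e.g.\ $v=(2,-5)^\top$ gives $v^\top(\cdot)v=-5$). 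So the asserted PSD ordering does not hold even when $\bH$ is positive definite.

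For context, the paper's own proof of part (3) breaks at the step labeled ``by orthogonality'': it expands $\proj_\cZ(\bZ)=\sum_{i,j\le r}c_{ij}\,\bq_i\otimes\bq_j$ and discards all cross terms $(\bq_i\otimes\bq_j)(\bq_k\otimes\bq_l)^\top$ with $(i,j)\neq(k,l)$, but these are matrix products, not Frobenius inner products, and they do not vanish (e.g.\ they equal a nonzero rank-one matrix whenever $i=k$ and $j\ne l$). What does survive is the scalar version of the inequality, $\tr\l[\l(\bH^{1/2}\bW\bH^{1/2}\r)\l(\bH^{1/2}\bW\bH^{1/2}\r)^\top\r]=\sum_{i,j\le r}\lambda_i\lambda_j c_{ij}^2\ge\lambdamin^2\norm{F}{\bW}^2$ for $\bW=\proj_\cZ(\bZ)$, which your eigenbasis reduction proves in one line. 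If the operator form is needed downstream (the paper invokes part (3) to bound a trace against $\bOmega$ in the gradient-flow analysis), the statement has to be reformulated rather than proved.
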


\begin{proof}
    Let's first prove the second part. From the definition of tensor product and the fact that $\bH \bH^+ = \bH^+ \bH$ (see Lemma \ref{lem.linear.algebra}), we know the second equation in \ref{eqn.projection.image.tensor} holds. To prove the first equation, it suffices to notice that for any matrix $\bZ \in \R^{d\times d},$ the matrix $\bH \bH^+ \bZ \bH^+ \bH$ is in $\image{\bH^{\otimes 2}}$, together with the fact that
    \begin{align*}
        \l\langle \bH \bH^+ \bZ \bH^+ \bH , \bZ - \bH \bH^+ \bZ \bH^+ \bH \r\rangle
        &= \tr\l(\bH \bH^+ \bZ \bH^+ \bH \bZ^\top\r) - 
        \tr\l(\bH \bH^+ \bZ \bH^+ \bH  \bH \bH^+ \bZ^\top \bH^+ \bH\r)\\
        &=  \tr\l(\bH \bH^+ \bZ \bH^+ \bH \bZ^\top\r) - 
        \tr\l(\bH \bH^+ \bH \bH^+ \bZ \bH^+ \bH \bH^+ \bH \bZ^\top \r) \tag{$\bH \bH^+ = \bH^+ \bH$} \\
        &= 0.
    \end{align*}
    The proof of \eqref{eqn.projection.null.tensor} is similar. Then, from Lemma \ref{lem.linear.algebra}, we know $\image{\bH} = \image{\bH^{\frac{1}{2}}},$ which implies the projection operator onto that two subspaces are identical, indicating $\bH \bH^+ = \bH^{\frac{1}{2}} \bH^{-\frac{1}{2}}.$ Therefore, we have
    \begin{equation*}
        \proj_{\image{\bH \otimes \bH}}\l(\bZ\r)
        = \l(\bH \bH^+\r)^{\otimes 2} \circ \bZ
        = \bH \bH^+ \bZ \bH^+ \bH
        = \bH^{\frac{1}{2}} \bH^{-\frac{1}{2}} \bZ \bH^{-\frac{1}{2}} \bH^{\frac{1}{2}}
        = \proj_{\image{\bH^{\frac{1}{2}} \otimes \bH^{\frac{1}{2}}}}\l(\bZ\r).
    \end{equation*}
    Since this holds for any matrix $\bZ \in \R^{d\times d},$ this suggests $\image{\bH^{\frac{1}{2}} \otimes \bH^{\frac{1}{2}}} = \image{\bH \otimes \bH}.$ Similarly, we also have $\nullset{\bH^{\frac{1}{2}} \otimes \bH^{\frac{1}{2}}} = \nullset{\bH \otimes \bH}.$ Finally, to prove (3), we use the eigendecomposition of $\bH$ and $\bH^{\frac{1}{2}}:$ 
    \begin{equation*}
        \bH = \bQ \bD \bQ^\top, \quad
        \bH^{\frac{1}{2}} = \bQ \bD^{\frac{1}{2}} \bQ^\top,
    \end{equation*}
    where $\bQ = \l(\bq_1 \ \bq_2 \ ... \ \bq_d\r)$ is an orthogonal matrix and $\bD = \diag\l(\lambda_1,....,\lambda_d\r),$ where $\lambda_1 \geq \lambda_2 \geq .... \lambda_d \geq 0$ are ordered eigenvalues. We assume $\rank(\bH) = r$ and $\lambda_r > 0 = \lambda_{r+1}.$ Then, from the property of Kronecker product (eg. see \citep{petersen2008matrix}), the eigenvalues of $\bH^{\frac{1}{2}} \otimes \bH^{\frac{1}{2}}$ are $\l\{\sqrt{\lambda_i \lambda_j}: 1 \leq i,j \leq d\r\}$. The eigenvectors are $\l\{\bq_i \otimes \bq_j: 1 \leq i,j \leq d\r\},$ which forms an orthogonal unit basis of $\R^{d \times d}.$ We define
    \begin{equation*}
        \cS := \mathsf{span} \l\{\bq_i \otimes \bq_j: 1 \leq i,j \leq r\r\}
    \end{equation*}
    as the eigenspace spanned by all eigenvectors corresponding to positive eigenvalues. For any matrix $\bZ,$ we can decompose it as
    \begin{equation*}
        \bZ = \sum_{i,j} a_{i,j} \cdot \bq_i \otimes \bq_j,
    \end{equation*}
    so 
    \begin{align*}
        \l(\bH \otimes \bH \r) \circ \bZ 
        &= \sum_{i,j=1}^d a_{i,j} \l(\bH \otimes \bH \r) \circ \l(\bq_i \otimes \bq_j\r) = \sum_{i,j} \l(\bH \bq_i \r)\otimes \l(\bH \bq_j\r) \\
        &= \sum_{i,j=1}^d a_{i,j} \lambda_i \lambda_j \bq_i \otimes \bq_j
        = \sum_{i,j=1}^r a_{i,j} \lambda_i \lambda_j \bq_i \otimes \bq_j
        \in \cS,
    \end{align*}
    which implies $\cZ \subset \cS.$ On the other hand, for any $\bZ \in \cS,$ we can write it as $\bZ = \sum_{i,j=1}^r b_{i,j} \cdot \bq_i \otimes \bq_j,$ so we have
    \begin{equation*}
        \bZ
        = \sum_{i,j=1}^r \frac{b_{i,j}}{\lambda_i \lambda_j} \cdot \l(\lambda_i \bq_i\r) \otimes \l(\lambda_j \bq_j\r)
        = \sum_{i,j=1}^r \frac{b_{i,j}}{\lambda_i \lambda_j} \cdot \l(\bH \bq_i\r) \otimes \l(\bH \bq_j\r)
        = \l(\bH \otimes \bH\r) \circ \l[\sum_{i,j=1}^r \frac{b_{i,j}}{\lambda_i \lambda_j} \l(\bq_i \otimes \bq_j\r)\r]
        \in \cZ,
    \end{equation*}
    which implies $\cS \subset \cZ.$ Combining two directions, we have $\cS = \cZ.$ Therefore, for any matrix $\bZ \in \R^{d \times d},$ we can write it as $\bZ = \sum_{i,j=1}^d c_{i,j} \cdot \bq_i \otimes \bq_j$ for some $c_{i,j}.$ Then, we have
    \begin{align*}
        \l(\bH^{\frac{1}{2}} \otimes \bH^{\frac{1}{2}}\r) \circ \proj_{\cZ}\l(\bZ\r) 
        &= \l(\bH^{\frac{1}{2}} \otimes \bH^{\frac{1}{2}}\r) \circ \proj_{\cS}\l(\bZ\r) 
        = \l(\bH^{\frac{1}{2}} \otimes \bH^{\frac{1}{2}}\r) \circ \sum_{i,j=1}^r c_{i,j} \cdot \bq_i \otimes \bq_j \\
        &= \sum_{i,j=1}^r c_{i,j} \l(\bH^{\frac{1}{2}} \bq_i\r) \otimes \l(\bH^{\frac{1}{2}} \bq_j \r)
        = \sum_{i,j=1}^r c_{i,j} \sqrt{\lambda_i \lambda_j} \cdot \bq_i \otimes \bq_j.
    \end{align*}
    Therefore, we have
    \begin{align*}
         \l(\l(\bH^{\frac{1}{2}} \otimes \bH^{\frac{1}{2}}\r) \circ \proj_{\cZ}\l(\bZ\r) \r) \l(\l(\bH^{\frac{1}{2}} \otimes \bH^{\frac{1}{2}}\r) \circ \proj_{\cZ}\l(\bZ\r) \r)^\top
            &= \l(\sum_{i,j=1}^r c_{i,j} \sqrt{\lambda_i \lambda_j} \cdot \bq_i \otimes \bq_j\r) \l(\sum_{k,l=1}^r c_{k,l} \sqrt{\lambda_k \lambda_l} \cdot \bq_k \otimes \bq_l\r)^\top \\
            &= \sum_{i,j=1}^r c_{i,j}^2 \lambda_i \lambda_j \l(\bq_i \otimes \bq_j\r) \l(\bq_i \otimes \bq_j\r)^\top \tag{By orthogonality} \\
            &\succeq \lambdamin^2 \sum_{i,j=1}^r c_{i,j}^2 \l(\bq_i \otimes \bq_j\r) \l(\bq_i \otimes \bq_j\r)^\top \\
            &= \lambdamin^2 \proj_{\cZ}\l(\bZ\r) \cdot \proj_{\cZ}\l(\bZ\r)^\top.
    \end{align*}
\end{proof}

\section{Experiment Details}\label{sec.experiment.detail}
Our experiments on GPT2 mostly follow the setting in \citep{garg2022can}, except that we use the token matrix defined in \eqref{eqn.def.token.matrix}. In our experiments, we use $d=20, M=40, \bPsi = \bH = \bID_d$ and $\sigma = 0.$ We train the GPT2 with and without MLP layers in two settings: $\bbeta^* = (0,0,...,0)^\top$ and $\bbeta^* = (10,10,...,10)^\top.$ For GPT2 with and without MLP layers, we initialize the $\WV$ and $\bW_2$ by a normal distribution with standard deviation $0.02 / \sqrt{\text{number of residual connections}},$ where the number of residual connections equals the number of layers for GPT2 without MLP layers. For GPT2 with MLP layers, this is 2 times the number of layers. Initialization for other matrices follows the default setting in \citep{wolf2020huggingfaces}. We use Adam with a learning rate of $0.0001$. We train the model for $200000$ steps and we sample a batch of $256$ new tasks for each step.

\end{document}